\begin{document}

%

%

\twocolumn[

\aistatstitle{Mixed-Effect Thompson Sampling}

\aistatsauthor{Imad Aouali \And Branislav Kveton \And Sumeet Katariya}

\aistatsaddress{Criteo AI Lab \cr CREST, ENSAE \And AWS AI Labs \And Amazon}
]

\begin{abstract}
A contextual bandit is a popular framework for online learning to act under uncertainty. In practice, the number of actions is huge and their expected rewards are correlated. In this work, we introduce a general framework for capturing such correlations through a mixed-effect model where actions are related through multiple shared effect parameters. We propose \textbf{M}ixed-\textbf{E}ffect \textbf{T}hompson \textbf{S}ampling (\alg) that uses this structure to explore efficiently and bound its Bayes regret. The regret bound has two terms, one for learning the action parameters and the other for learning the shared effect parameters. The terms reflect the structure of our model and the quality of priors. Our theoretical findings are validated empirically using both synthetic and real-world problems. We also propose numerous extensions of practical interest. While they do not come with guarantees, they perform extremely well empirically and show the generality of the proposed framework.
\end{abstract}

\section{INTRODUCTION}
\label{sec:introduction}

A \emph{contextual bandit} \citep{mab_book,lattimore19bandit,li10contextual,chu11contextual} is a popular sequential decision-making framework where an \emph{agent} interacts with an environment over $n$ rounds. In each round, the agent observes a \emph{context}, takes an \emph{action}, and receives a \emph{reward} that depends on both the context and the taken action. The goal of the agent is to maximize the expected cumulative reward over $n$ rounds. Since the expected rewards of actions are unknown, the agent must balance between taking the action that maximizes the estimated reward using collected data (\emph{exploitation}), and exploring other actions to improve their estimates (\emph{exploration}). This trade-off is often addressed using either \emph{upper confidence bounds (UCBs)} \citep{auer02finitetime} or \emph{Thompson sampling (TS)} \citep{thompson33likelihood}. As an example, in online advertising, contexts can be features of users, actions can be products, and the expected reward can be the click-through rate (CTR).

Efficient exploration in contextual bandits \citep{langford08epochgreedy,dani08stochastic,li10contextual,abbasi-yadkori11improved,agrawal13thompson} is an important research direction, as their action space is usually huge and naive exploration may lead to suboptimal performance. In this work, we start from the basic observation that the expected rewards of actions in real-world problems are often correlated. To model this phenomenon, we study a structured mixed-effect bandit environment where each \emph{action parameter} depends on one or multiple \emph{effect parameters}. The actions are related through the effect parameters. Therefore, taking an action teaches the agent about its effect parameters, which consequently teaches it about other actions that share the same effect parameters. We present three motivating examples next.

\textbf{Movie recommendation:} Here we want to recommend a movie to a user with the highest expected rating. User $j$ and movie $i$ are represented by vectors $x_j$ (context) and $\theta_i$ (action parameter), respectively. The expected rating that user $j$ gives to movie $i$ is $x_j^\top \theta_i$. We assume that the vector $x_j$ is observed. Then the most natural idea is to learn all $\theta_i$ individually using classic bandit methods \citep{li10contextual,chu11contextual}. This is statistically inefficient since the number of movies is often high. Fortunately, the movies could be organized into $L$ categories and such information can be leveraged to explore efficiently. We present three approaches \textbf{(A)}, \textbf{(B)} and \textbf{(C)} that do this next.

\begin{enumerate*}[label=\textbf{(\Alph*)}]
  \item For each category $\ell \in [L]$, a parameter $\psi_\ell$ is learned online using all interactions with the movies in category $\ell$. The parameter $\psi_\ell$ represents all the movies in category $\ell$ and is used instead of their individual $\theta_i$. Therefore, this approach has a high bias, as all movies in the same category are assumed to have the same expected rating. This issue can be addressed by a better model.
  \item We model each movie parameter $\theta_i$ as a random variable centered in its category parameter $\psi_\ell$. Now movies in the same category no longer have the same expected rating due to the additional uncertainty. Both the category parameters $\psi_\ell$ and movie parameters $\theta_i$ are learned online. The former is learned using all interactions with the movies in category $\ell$, while the latter is learned using all interactions with movie $i$ conditioned on $\psi_\ell$. The category parameter $\psi_\ell$ is learned using more data, which helps to learn $\theta_i$ more efficiently. This is a special case of our setting and can also be viewed as extending hierarchical Bayesian bandits \citep{hong22hierarchical} to multiple hierarchies. 
  \item The shortcoming of \textbf{(B)} is that each movie belongs to a \emph{single category}, which is unrealistic. To address this issue, we allow movies to belong to \emph{multiple categories} and then proceed as in \textbf{(B)}. To make the connection with our terminology, the categories $\ell \in [L]$ are the effects, their parameters $\psi_\ell$ are the effect parameters, and the movie parameters $\theta_i$ are the action parameters.
\end{enumerate*}

\textbf{Ad placement:} Here the agent selects a list (or \emph{slate}) of $M$ items from a catalog of $L$ items with the objective of maximizing the CTR. We assume that the agent only receives binary bandit feedback that indicates whether the user clicked on \emph{one of the items in the slate} \citep{ijcai2019-308,RejwanM20}. Again, user $j$ and slate $i$ are represented by $x_j$ (context) and $\theta_i$ (action parameter), respectively. The corresponding CTR is $f(x_j^\top \theta_i)$, where $f$ is the sigmoid function. The set of slates (of size $K \approx L^M$) is exponentially large, which makes learning $\theta_i$ individually difficult. Fortunately, the slates are related through a much smaller set of items (of size $L$). Therefore, the slates with common items can teach the agent about each other, which can be used to explore efficiently.

Efficient exploration is achieved by decomposing the parameter of slate $i$ as $\theta_i = \sum_{\ell \in [L]} b_{i, \ell} \psi_\ell + \epsilon_i$. Here $\psi_\ell \in \real^d$ is the parameter of item $\ell$ and $b_{i, \ell} \in \real$ is a mixing weight that captures position biases. That is, $b_{i, \ell} = 0$ if item $\ell$ is not in slate $i$, and $b_{i, \ell}$ is high if item $\ell$ is ranked high in slate $i$. This captures the fact that the probability of a click on an item is biased by its position in the slate, and such bias can be estimated offline. Finally, $\epsilon_i$ is a random noise that can incorporate uncertainty due to \emph{model misspecification}, for instance due to an estimation error of $b_{i, \ell}$. The benefit of this decomposition is that the parameter of item $\ell$, $\psi_\ell$, is learned using all interactions with the slates with item $\ell$. The slate parameter $\theta_i$ is learned using all interactions with slate $i$ conditioned on $\psi_\ell$. This is more statistically efficient than learning $\theta_i$ individually, which only uses the interactions with slate $i$.

\textbf{Drug design:} Here the goal is to find the optimal drug design in clinical trials \citep{pmlr-v85-durand18a}. Subject $j$ and drug $i$ are represented by vectors $x_j$ and $\theta_i$, respectively, and the expected efficacy of drug $i$ for subject $j$ is $x_j ^\top \theta_i$. Again, the most natural idea is to learn all drug parameters $\theta_i$ individually. This leads to a statistical inefficiency though when the number of candidate drugs is high. Fortunately, we can leverage the fact that drug candidates in the same trial often share components to explore efficiently. Precisely, a drug is a combination of multiple components, each with a specific dosage. Each component $\ell$ is represented by a parameter $\psi_\ell$, and the drug parameter $\theta_i$ is a \emph{known} combination of the component parameters $\psi_\ell$ weighted by their dosage. That is, $\theta_i = \sum_{\ell \in [L]} b_{i,\ell} \psi_\ell + \epsilon_i$, where $b_{i, \ell}$ is the dosage of component $\ell$ in drug $i$ and $\epsilon_i$ is a random noise to incorporate uncertainty due to model misspecification. The efficacy of each component has an \emph{effect} on the overall efficacy of the drug and is boosted by the dosage.

In all examples, we assume an underlying structure among the actions, that they are affected by multiple effects. In some problems, it is known how the effect arises. For instance, in the drug design, the actions are the drugs and the effects are their components. The mixing weight that relates an action (drug) to an effect (component) is the dosage of that component in the drug. In other problems, it may not be apparent how the effect arises and this has to be learned. We discuss this in detail in \cref{subsec:creating_structure}.

We make the following contributions. \begin{enumerate*}[label=\textbf{\arabic*)}]
  \item We formalize a general mixed-effect bandit framework represented by a two-level graphical model where each action is associated with a $d$-dimensional parameter that depends on \emph{one or multiple} effect parameters.
  \item We design mixed-effect Thompson sampling (\alg), which leverages this structure to be both statistically and computationally efficient. Despite the complex structure, we show that closed-form posteriors can be derived for Gaussian instances and efficient approximations exist in more general cases.
  \item We prove that the Bayes regret of \alg is bounded by a sum of two terms: one is associated with learning the action parameters and the other quantifies the cost of learning the effect parameters. Both terms reflect the structure of the environment and the quality of priors.
  \item We show empirically that \alg and its variants perform extremely well, and are computationally efficient in both synthetic and real-world problems.
  \item Several extensions of practical interest are given in \cref{app:extensions}. While they are not analyzed, they enjoy very good empirical performance.
\end{enumerate*}
 
Our setting is more general than previously studied hierarchical models (\cref{sec:related work}) where action parameters are centered at a single latent variable. Thus \alg has a wider range of applications, for which we gave three examples. Our algorithm is general and we provide posterior derivations that are valid for any distribution class. To showcase the generality, we also experiment with \alg on bandit problems with non-linear rewards. This also goes beyond prior works (\cref{sec:related work}) that often considered closed-form Gaussian posteriors only.

\section{SETTING}
\label{sec:setting}

For any positive integer $n$, we define $[n] = \{1, \dots ,n\}$. The $i$-th coordinate of vector $v$ is $v_i$. If the vector is already indexed, such as $v_j$, we write $v_{j, i}$. Let $a_1, \ldots, a_n \in \real^d$ be $n$ vectors. We denote by $a = (a_i)_{i \in [n]} \in \real^{nd}$ a vector of length $n d$ obtained by concatenating $a_1, \ldots, a_n$. We use $\otimes$ to denote the Kronecker product. The derivative of a univariate function $f$ is denoted by $\dot{f}$.

\subsection{Mixed-Effect Bandit}
\label{sec:model}

We study a setting where an agent interacts with a \emph{contextual bandit} over $n$ rounds. In round $t \in [n]$, the agent observes \emph{context} $X_t \in \cX$, where $\cX \subseteq \mathbb{R}^d$ is a $d$-dimensional \emph{context space}. After that, it takes an \emph{action} $A_t$ from an \emph{action set} $[K]$, and then observes a \emph{stochastic reward} $Y_t \in \mathbb{R}$ that depends on both $X_t$ and $A_t$. We consider a structured problem where the expected rewards of actions are correlated. Specifically, each action $i \in [K]$ is associated with an \emph{unknown $d$-dimensional action parameter} $\theta_{*, i} \in \real^d$. The correlations between the action parameters arise because they are derived from $L$ shared \emph{unknown $d$-dimensional effect parameters}, $\psi_{*, \ell} \in \real^d$ for $\ell \in [L]$. Specifically, we assume that the action parameter $\theta_{*, i}$ is sampled from the \emph{action prior distribution} $P_{0, i}$ as $ \theta_{*, i} \mid  \Psi_{*} \sim P_{0, i}(\cdot \mid \Psi_*)$, where $\Psi_* = (\psi_{*, \ell})_{\ell \in [L]} \in \real^{Ld}$ is a concatenation of the effect parameters. The distribution $P_{0, i}$ can capture sparsity, when $\theta_{*, i}$ depends only on a subset of $\Psi_*$; and also incorporate uncertainty due to model misspecification, when $\theta_{*, i}$ is not a deterministic function of $\Psi_{*}$. Finally, the effect parameters $\Psi_{*}$ are sampled from a \emph{joint effect prior} $Q_{0}$, which is known by the agent and represents its initial uncertainty about $\Psi_{*}$. In summary, all variables in our environment are generated as 
\begin{align}\label{eq:model}
  \Psi_* &\sim Q_{0}\,, \\
  \theta_{*, i} \mid \Psi_* &\sim P_{0, i}(\cdot \mid \Psi_*)\,, &  \forall i \in [K]\,,\nonumber \\
  Y_t \mid X_t, \theta_{*, A_t} &\sim P(\cdot \mid X_t; \theta_{*, A_t})\,, &  \forall t \in [n]\,, \nonumber
\end{align}
where $P(\cdot \mid x; \theta_{*, i})$ is the \emph{reward distribution} of action $i$ in context $x$, which only depends on parameter $\theta_{*, i}$ and the context $x$. The terminology of effect parameters arises from the fact that $\psi_{*, \ell}$ affect the model parameters $\theta_{*, i}$, which in turn define $Y_t$. The effects are mixed through the action prior $P_{0, i}$ and hence the name \emph{mixed-effect}.

Our setting can be viewed as a two-level graphical model, where $\psi_{*, 1}, \ldots, \psi_{*, L}$ are parent nodes and $\theta_{*, 1}, \ldots, \theta_{*, K}$ are child nodes (\cref{fig:setting}). The \emph{structure} is represented by missing arrows from parent (effect parameters) to child (action parameters) nodes. A missing arrow from parent $\psi_{*, \ell}$ to child $\theta_{*, i}$ means that action $i$ is independent of the $\ell$-th effect. Such models are common in offline learning, for instance QMR-DT \citep{qmr}.

All examples in \cref{sec:introduction} can be captured by our model. In movie recommendation, the categories $\ell \in [L]$ and movies $i \in [K]$ would be represented by the effect parameters $\psi_{*, \ell}$ and action parameters $\theta_{*, i}$, respectively. The weight $b_{i, \ell}$ is the relevance of movie $i$ to category $\ell$.

\begin{figure}
  \centering
  \includegraphics[scale=0.7]{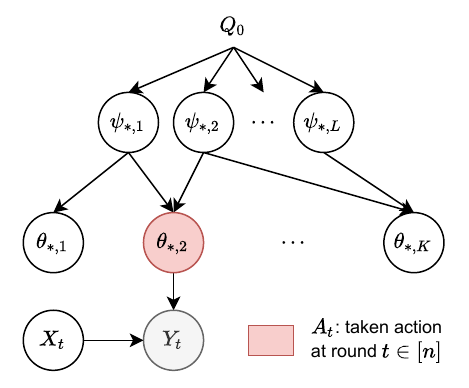}
  \caption{Example of a graphical model induced by \eqref{eq:model}.}
  \vspace{-0.1in}
  \label{fig:setting}
\end{figure}

\subsection{Notion of Optimality}
\label{sec:optimality}

Let $\Theta_* = (\theta_{*, i})_{i \in [K]} \in \mathbb{R}^{Kd}$ be the concatenation of all action parameters. The expected reward of action $i \in [K]$ in context $x \in \cX$ is $r(x, i; \Theta_*) = \E{Y \sim P(\cdot \mid x; \theta_{*, i})}{Y}$, where $r$ is the \emph{reward function}. Our setting is Bayesian and thus a natural goal for the agent is to minimize its \emph{Bayes regret} 
\begin{align*}
  \mathcal{B}\mathcal{R}(n)
  = \mathbb{E}\left[ \sum_{t = 1}^n r(X_t, A_{t, *}; \Theta_*) - r(X_t, A_t; \Theta_*)\right]\,,
\end{align*}
where $A_{t, *} = \argmax_{i \in[K]} r(X_t, i; \Theta_*)$ is the optimal action in round $t$. The above expectation is over all random variables in \eqref{eq:model}. While the Bayes regret is weaker than the frequentist regret, it is a reasonable metric for average performance across multiple instances \citep{russo14learning}. We present a special case of our setting next.

\subsection{Linearity in Effects}

A simple yet powerful assumption is that the action prior $P_{0, i}$ is parametrized by a weighted sum of effect parameters
\begin{align*}
    \theta_{*, i} &\mid \Psi_* \sim P_{0, i}\Big(\cdot \Big| \sum_{\ell =1}^L b_{i, \ell} \psi_{*, \ell}\Big)\,, & \forall i \in [K]\,,
\end{align*} 
where $b_i = (b_{i, \ell})_{\ell \in [L]} \in \mathbb{R}^L$ are $L$ \emph{known mixing weights} for action $i$. The effect $\ell$ on action $i$ is determined by $b_{i, \ell}$. As an example, $b_{i, \ell}=0$ when action $i$ is independent of effect $\ell$. This is an important special case of our setting since additive models are widely used in both theory and practice \citep{mccullagh89generalized}, as they often lead to closed-form posteriors that are computationally tractable. Next we present two instances of this setting, where $P_{0, i}$ is a multivariate Gaussian with mean $\sum_{\ell =1}^L b_{i, \ell} \psi_{*, \ell}$ and covariance $\Sigma_{0, i}$. We defer non-linear effects to \cref{subsec:non_linear_mixed_model}.

\subsection{Mixed-Effect Linear Bandit}
\label{subsec:contextual_gaussian_bandits}

A natural joint effect prior $Q_{0}$ for $d$-dimensional effect parameters $\psi_{*, \ell}$ is a multivariate Gaussian with mean $\mu_{\Psi} \in \real^{Ld}$ and covariance $\Sigma_{\Psi} \in \real^{Ld \times Ld}$. The action prior $P_{0, i}$ is a Gaussian with mean $\sum_{\ell =1}^L b_{i, \ell} \psi_{*, \ell} \in \real^d$ and covariance $\Sigma_{0, i} \in \real^{d \times d}$. This model is a variant of a linear Gaussian model \citep{koller09probabilistic} and is given by
\begin{align}\label{eq:contextual_gaussian_model}
    \Psi_{*} & \sim \cN(\mu_{\Psi}, \Sigma_{\Psi})\,, \\ 
    \theta_{*, i} \mid  \Psi_{*}& \sim \cN\Big( \sum_{\ell =1}^L b_{i, \ell} \psi_{*, \ell} , \,  \Sigma_{0, i}\Big)\,, & \forall i \in [K]\,,\nonumber\\ 
    Y_t  \mid X_t, \theta_{*, A_t} & \sim \cN(X_t^\top \theta_{*, A_t} ,  \sigma^2)\,, & \forall t \in [n]\,, \nonumber
\end{align}
where $\sigma>0$ is an observation noise. This model reduces to a multi-armed bandit (\cref{additional_posteriors}) when $X_t=1$ for all $t \in [n]$ and $\psi_{*, \ell} \in \real$ for all $\ell \in [L]$. 

\subsection{Mixed-Effect Generalized Linear Bandit}
\label{subsec:contextual_gl_bandits}

Here the effect and action parameters are generated as in \eqref{eq:contextual_gaussian_model} but the reward $Y_t$ is sampled from a \emph{generalized linear model (GLM)} \citep{mccullagh89generalized}, which is non-linear. In particular, $P(\cdot \mid X_t; \theta)$ is an exponential-family distribution with mean $f(X_t^\top \theta)$ and the whole model is
\begin{align}\label{eq:contextual_bernoulli_model}
    \Psi_{*} & \sim \cN(\mu_{\Psi}, \Sigma_{\Psi})\,,\\ 
    \theta_{*, i} \mid  \Psi_{*}& \sim \cN\Big( \sum_{\ell =1}^L b_{i, \ell} \psi_{*, \ell} , \,  \Sigma_{0, i}\Big)\,, & \forall i \in [K]\,,\nonumber\\ 
  Y_t \mid X_t, \theta_{*, A_t} &\sim P(\cdot \mid X_t; \theta_{*, A_t})\,, &  \forall t \in [n]\,.\nonumber
\end{align} 
Let $ \mathrm{Ber}(p)$ be a Bernoulli distribution with mean $p$. One particular choice of a GLM is $f(u) = 1 / (1 + \exp(-u))$ and $P(\cdot \mid X_t; \theta) = \mathrm{Ber}(f(X_t^\top \theta))$, which corresponds to a logistic bandit \citep{filippi10parametric}.

\subsection{Structure Learning}
\label{subsec:creating_structure}

As discussed in \cref{sec:introduction}, the structures in \eqref{eq:contextual_gaussian_model} and \eqref{eq:contextual_bernoulli_model} may be intrinsic in some problems, such as drug design. When this is not the case, we propose the following approach to learning a \emph{proxy structure}. For any $i \in [K]$, let $\hat{\theta}_i$ represent an offline estimate of action parameter $\theta_{i}$. To learn, we fit a Gaussian mixture model (GMM) \citep{reynolds2009gaussian} with $L$ clusters to $\hat{\theta}_i$. Each cluster $\ell \in [L]$ is represented by its center $\mu_{\psi_\ell} \in \real^d$ and covariance $\Sigma_{\psi_\ell} \in \real^{d \times d}$. These correspond to the mean of the effect parameter $\psi_\ell$ and its uncertainty. The GMM also outputs the probability that $\hat{\theta}_i$ belongs to cluster $\ell$, for all combinations of $i \in [K]$ and $\ell \in [L]$. This probability is the mixing weight $b_{i, \ell}$. 

The above procedure is general and can be adapted to any use case. The main challenge is to obtain the offline estimates $\hat{\theta}_i$. This is an offline representation learning problem \citep{tripuraneni2021provable}, for which numerous techniques exist. For instance, in our MovieLens experiments in \cref{sec:movielens experiments}, we use a low-rank factorization of the rating matrix to obtain these offline estimates. This is a strength of our approach. It is highly flexible and can be easily integrated with popular and practical offline learning tools. This can be seen as a step towards bridging the gap between offline and online learning.

\section{ALGORITHM}
\label{sec:algorithm}

\begin{algorithm}[t]
\caption{\alg: \textbf{M}ixed-\textbf{E}ffect \textbf{T}hompson \textbf{S}ampling.}
\label{alg:ts}
\textbf{Input:} Joint effect prior $Q_{0}$, action priors $P_{0, \cdot}$ \\
Initialize $Q_1 \gets Q_0$ and $P_{1, \cdot} \gets P_{0, \cdot}$ \\
\For{$t=1, \dots, n$}
{Sample $\Psi_t \sim Q_{t}$ \\
\For{$i=1, \dots, K$}{Sample $\theta_{t, i} \sim P_{t, i} (\cdot \mid  \Psi_t)$}
$\Theta_t \gets (\theta_{t, i})_{i \in [K]}$ \\
$A_t \gets \argmax_{ i \in [K]} r(X_t, i; \Theta_t)$ \\
Receive reward $Y_t \sim P(\cdot \mid X_t; \theta_{*, A_t})$ \\
Compute new posteriors $Q_{t+1}$ and $P_{t+1, \cdot}$
}
\end{algorithm}

We propose a Thompson sampling algorithm \citep{thompson33likelihood, russo14learning, ts_scott}, which is a natural Bayesian solution to our problem. The algorithm is based on hierarchical sampling \citep{lindley72bayes}, which reflects the structure in our model. Before we present it, we need to introduce additional notation. We denote by $H_t = (X_\ell, A_\ell, Y_\ell)_{\ell \in [t-1]}$ the \emph{history} of all interactions of the agent up to round $t$, by $S_{t, i} = \{\ell \in [t - 1]: A_\ell = i\}$ the rounds where the agent takes action $i$ up to round $t$, and by $H_{t, i} = (X_\ell, A_\ell, Y_\ell)_{\ell \in S_{t, i}}$ the corresponding history.

Our algorithm \alg is presented in \cref{alg:ts}. Because the effect parameters are shared by all actions, their posteriors are not independent. For this reason, we maintain a single \emph{joint effect posterior}
\begin{align*}
  Q_{t}(\Psi)
  = \condprob{\Psi_{*} = \Psi }{H_t}
\end{align*}
for all effect parameters $\Psi_*$ in round $t$. Moreover, we maintain an \emph{action posterior} 
\begin{align*}
  P_{t, i}(\theta \mid \Psi)
  = \condprob{\theta_{*, i} = \theta}{H_{t, i}, \Psi_{*} = \Psi}
\end{align*}
for each action $i \in [K]$ given $\Psi_* = \Psi$. \alg samples hierarchically as follows. In round $t$, we first sample effect parameters $\Psi_t \sim Q_t$. Then we sample each action parameter $\theta_{t, i} \sim P_{t, i}(\cdot \mid \Psi_t)$ individually. Note that this is equivalent to sampling from the exact posterior $\condprob{\theta_{*, i} = \theta}{H_t}$, since
\begin{align}\label{eq:sampling_equivalence}
  \condprob{\theta_{*, i} = \theta}{H_t} &=\int_{\Psi}  \condprob{\theta_{i, *} = \theta, \Psi_{*} = \Psi}{H_t} \dif \Psi\,,\nonumber\\ &= \int_{\Psi} P_{t, i}(\theta \mid \Psi)
  Q_t(\Psi) \dif \Psi\,.
\end{align}
Finally, we behave optimistically and take the action with the highest expected reward under the posterior-sampled action parameters $\Theta_t = (\theta_{t, i})_{i \in [K]}$.

\subsection{Posterior Derivations}

The posteriors are computed as follows. We first express the joint effect posterior $Q_t$ as
\begin{align}\label{eq:qt_derviation}
  Q_{t}(\Psi) 
  & \propto \prod_{i =1}^K \int_{\theta} \LL_{t, i}(\theta) P_{0, i}\left(\theta \mid \Psi \right) \dif \theta \ Q_0(\Psi)\,, 
\end{align}
where $\LL_{t, i}(\theta) = \condprob{H_{t, i}}{\theta_{*, i}=\theta} = \prod_{(x, a, y) \in H_{t, i}} P(y \mid x; \theta)$ is the likelihood of all observations of action $i$ up to round $t$ given $\theta_{*, i} = \theta$. Next, for any action $i \in [K]$, the action posterior $P_{t, i}$ is defined as 
\begin{align}\label{eq:pti_derviation}
    P_{t, i}(\theta \mid \Psi) 
    &\propto \LL_{t, i}(\theta) P_{0, i}(\theta \mid \Psi)\,.
\end{align} 
$P_{t, i}$ is similarly sparse to $P_{0, i}$. Specifically, in any round $t$, $P_{t, i}$ and $P_{0, i}$ are parameterized by the same subset of effect parameters $\Psi_*$, since $\LL_{t, i}(\theta)$ does not depend on $\Psi_*$.

The joint effect posterior $Q_t$ and action posteriors $P_{t, i}$ have closed forms in Gaussian models, which allows efficient sampling and theoretical analysis. Beyond these, MCMC and variational inference \citep{doucet01sequential} can be used to approximate $Q_t$ and $P_{t, i}$. Next we derive closed-form posteriors for the mixed-effect model with linear rewards in \eqref{eq:contextual_gaussian_model} and provide an efficient approximation for the mixed-effect model with non-linear rewards in \eqref{eq:contextual_bernoulli_model}.

\subsection{Mixed-Effect Linear Bandit}
\label{sec:linear bandit posterior}

Let the outer product of contexts corresponding to action $i$ up to round $t$ be $G_{t, i} = \sigma^{-2} \sum_{\ell \in S_{t, i}} X_\ell X_\ell^\top$ and their sum weighted by rewards be $B_{t, i} = \sigma^{-2} \sum_{\ell \in S_{t, i}} Y_\ell X_\ell$. Both $G_{t, i}$ and $B_{t, i}$ are scaled by the observation noise $\sigma$. Using these quantities, the effect posterior is defined as follows.

\begin{proposition}\label{thm:pt_gaussian}
For any round $t \in [n]$, the joint effect posterior is a multivariate Gaussian $Q_t = \cN(\bar{\mu}_t, \bar{\Sigma}_t)$, where 
\begin{align}    \label{eq:linear effect posterior}
    &\bar{\Sigma}_t^{-1}= \Sigma_{\Psi}^{-1} +  \sum_{i =1}^K b_i b_i^\top \otimes \left( \Sigma_{0, i} + G_{t, i}^{-1} \right)^{-1}\,,\\
    \bar{\mu}_t &= \bar{\Sigma}_t \Big(\Sigma_{\Psi}^{-1}  \mu_{\Psi} +\sum_{i =1}^K b_i \otimes ( (\Sigma_{0, i} + G_{t, i}^{-1})^{-1} G_{t, i}^{-1} B_{t, i})\Big)\,.\nonumber
\end{align}
\end{proposition}

The effect posterior is additive in individual actions and can be interpreted as follows. Each action is a single noisy observation in its estimate. The \emph{maximum likelihood estimate (MLE)} of the parameter of action $i$, $G_{t, i}^{-1} B_{t, i}$, contributes to \eqref{eq:linear effect posterior} proportionally to its precision, $(\Sigma_{0, i} + G_{t, i}^{-1})^{-1}$. The contribution to the $\ell$-th effect parameter is weighted by $b_{i, \ell}$, which is the mixture weight for $\theta_{*, i}$ in \eqref{eq:contextual_gaussian_model}. \cref{thm:pt_gaussian} is proved in \cref{app:effect posterior-derivation}.

Note that $G_{t, i}$ in \cref{thm:pt_gaussian} may not be invertible. We want to stress that the formulas with it are for the ease of exposition only. The reason is that $G_{t, i}^{-1}$ appears after using the Woodbury matrix identity to invert $\Sigma_{0, i}^{-1} + G_{t, i}$, which is well defined. Precisely, we use that
\begin{align*}
&\Sigma_{0, i}^{-1} - \Sigma_{0, i}^{-1} (G_{t, i} + \Sigma_{0, i}^{-1})^{-1}\Sigma_{0, i}^{-1}  = \big(\Sigma_{0, i} + G_{t, i}^{-1}\big)^{-1}\,, \\
&\Sigma_{0, i}^{-1} (G_{t, i} + \Sigma_{0, i}^{-1})^{-1} B_{t, i} =   (\Sigma_{0} + G_{t, i}^{-1})^{-1} G_{t, i}^{-1} B_{t, i}\,.
\end{align*}
For consistency, the formulas in \cref{thm:pt_gaussian} are also used in our experiments, where $(G_{t, i} + 10^{-3} I_d)^{-1}$ replaces $G_{t, i}^{-1}$ for numerical stability. We note that this results in a similar regret to using the correct formulas. 

Now we present the action posterior.
\begin{proposition}\label{thm:pti_gaussian}
For any round $t \in [n]$, action $i \in [K]$, and effect parameters $\Psi_t$, the action posterior is a multivariate Gaussian $P_{t, i}(\cdot \mid \Psi_t) = \cN(\cdot;\tilde{\mu}_{t, i}, \tilde{\Sigma}_{t, i})$, where
\begin{align}  \label{eq:linear action posterior}
   \tilde{\Sigma}_{t, i}^{-1}  &= \Sigma_{0, i}^{-1} + G_{t, i}\,, \\
    \tilde{\mu}_{t, i} &= \tilde{\Sigma}_{t, i} \Big( \Sigma_{0, i}^{-1} \sum_{\ell =1}^L b_{i, \ell} \psi_{t, \ell} + B_{t, i}  \Big)\,.\nonumber
\end{align}
\end{proposition}
The action posterior in \eqref{eq:linear action posterior} is a standard multivariate Gaussian posterior whose prior depends on $\Psi_t$, which is sampled by \alg. \cref{thm:pti_gaussian} is proved in \cref{app:conditional-posterior-derivation}.

\subsection{Mixed-Effect Generalized Linear Bandit}
\label{sec:glb bandit posterior}

Closed-form posteriors do not exist in this setting and approximations are needed. We opt for a simple scheme that approximates the likelihood $\LL_{t, i}(\cdot)$ by a multivariate Gaussian using the Laplace approximation. In particular, since $P(\cdot \mid X_t; \theta)$ is an exponential-family reward distribution, the log-likelihood for any action $i \in [K]$ is
\begin{align*}
  \log \LL_{t, i}(\theta)
  = \sum_{\ell \in S_{t, i}} Y_\ell X_\ell^\top \theta - A(X_\ell^\top \theta) + C(Y_\ell)\,,
\end{align*}
where $C$ denotes a real function, and $A$ is a twice continuously differentiable function whose derivative is the mean function $f$, $\dot{A} = f$. Let $\mu^\textsc{lap}_{t, i}$ and $G^\textsc{lap}_{t, i}$ be the MLE and the Hessian of $- \log \LL_{t, i}(\cdot)$, respectively, defined as 
\begin{align}
        \mu^\textsc{lap}_{t, i} &= \argmax_{\theta \in \real^d} \log \LL_{t, i}(\theta)\,,\\
      G^\textsc{lap}_{t, i} &= \sum_{\ell \in S_{t, i}} \dot{f}\left(X_\ell^\top \mu^\textsc{lap}_{t, i}\right) X_\ell X_\ell^\top\,.\nonumber
\end{align} 
Then the Laplace approximation is
\begin{align}\label{eq:laplace_approximation}
    \LL_{t, i}(\cdot) \approx \cN(\cdot; \mu^\textsc{lap}_{t, i}, (G^\textsc{lap}_{t, i})^{-1})\,.
\end{align}
Now we plug \eqref{eq:laplace_approximation} into \eqref{eq:qt_derviation} and have $Q_{t}(\cdot) \approx \cN(\cdot;\bar{\mu}_{t}, \bar{\Sigma}_{t})$, where $\bar{\mu}_{t}$ and $\bar{\Sigma}_{t}$ are computed as in \cref{thm:pt_gaussian}, except that $G^\textsc{lap}_{t, i}$ and $\mu^\textsc{lap}_{t, i}$ replace $G_{t, i}$ and $G^{-1}_{t, i}B_{t, i}$, respectively. We plug \eqref{eq:laplace_approximation} into \eqref{eq:pti_derviation} and get $P_{t, i}(\cdot \mid \Psi) \approx \cN(\cdot;\tilde{\mu}_{t, i}, \tilde{\Sigma}_{t, i})$, where $\tilde{\mu}_{t, i}$ and $\tilde{\Sigma}_{t, i}$ are computed as in \cref{thm:pti_gaussian}, except that $G^\textsc{lap}_{t, i}$ and $G^\textsc{lap}_{t, i} \mu^\textsc{lap}_{t, i}$ replace $G_{t, i}$ and $B_{t, i}$, respectively. While these results are a direct consequence of plugging the Laplace approximation \eqref{eq:laplace_approximation} into \eqref{eq:qt_derviation} and \eqref{eq:pti_derviation}, there is a clear intuition behind them. First, $G_{t, i} \gets G^\textsc{lap}_{t, i}$ captures the change of curvature due to the non-linearity of the mean function $f$. Moreover, $G^{-1}_{t, i}B_{t, i} \gets \mu^\textsc{lap}_{t, i}$ follows from the fact that the MLE of the action parameter $\theta_{*, i}$ in the linear case (\cref{subsec:contextual_gaussian_bandits}) is $G_{t, i}^{-1} B_{t, i}$, and it corresponds to $\mu^\textsc{lap}_{t, i}$ in the generalized linear case. As discussed before, $G^\textsc{lap}_{t, i}$ may not be invertible. Therefore, we approximate its inverse in our experiments by $(G^\textsc{lap}_{t, i} + 10^{-3} I_d)^{-1}$.

\subsection{Computational Complexity}
\label{sec:alternative algorithm designs}

The Bayes regret in \cref{sec:optimality} does not directly depend on $\Psi_*$. Thus the benefit of modeling the effect parameters is not immediately clear. It is tempting to marginalize them out, and only maintain a single joint posterior of all action parameters $\Theta_* \in \real^{Kd}$. Although this is feasible, posterior updates would be complex and computationally inefficient when $K \gg L$, which is common in practice.

The main advantage of \alg is that the sampling of effect parameters $\Psi_t \sim Q_t$ allows us to use the conditional independence of actions given $\Psi_*$, and model $\theta_{*,i} \mid  H_{t, i}, \Psi_* = \Psi_t$ individually. This is more computationally efficient than modeling $\Theta_* \mid H_t$ when $K \gg L$. To see this, suppose that all posteriors are multivariate Gaussians (\cref{sec:linear bandit posterior}). In this case, $\Theta_* \mid H_t$ requires $\mathcal{O}(K^2 d^2)$ space, due to storing a $Kd \times Kd$ covariance matrix; while \alg requires only $\mathcal{O}((L^2 + K) d^2)$ space, due to storing the covariances of $Q_t$ and $P_{t, i}$. Since the sampling relies on covariance inverses, the time complexity also improves. For the joint posterior, it is $\mathcal{O}(K^3 d^3)$, while it is only $\mathcal{O}((L^3 + K) d^3)$ for \alg.

One can also marginalize out the effect parameters $\Psi_*$ and have $K$ separate posteriors, one for each action parameter $\theta_{*, i}$. While this improves computational efficiency, it does not model that the actions are correlated, since $\theta_{*, i} \mid H_{t, i}$ is modeled instead of $\theta_{*, i} \mid H_{t}$. This leads to a statistical inefficiency due to the loss of information as the histories of other actions $H_{t, j}$ are discarded. We validate this through theory (\cref{sec:benefits_structure}) and experiments (\cref{sec:experiments}).

\section{ANALYSIS}
\label{sec:analysis}

This section is organized as follows. First, we state our regret bound. Second, we discuss how it captures the structure of our problem. Finally, we sketch its proof. We use $\tilde{\mathcal{O}}$ for the big O notation up to polylogarithmic factors.

\subsection{Main Result}
\label{sub:main_result}

We analyze \alg in the linear setting in \cref{subsec:contextual_gaussian_bandits}. To ease exposition, we assume that there exist $\sigma_{0}, \sigma_{\Psi}, \kappa_{x} > 0$ such that $\Sigma_{0, i} = \sigma_{0}^2 I_d$ for all $i \in [K]\,,$ $\Sigma_{\Psi} = \sigma_{\Psi}^2 I_{Ld}\,,$ and $\normw{X_t}{2}^2 \leq \kappa_x$ for all $t \in [n]$. The last assumption is standard and we relax the remaining two in \cref{proof:regret_proof}.

\begin{theorem}
\label{thm:regret} For any $\delta \in (0, 1)$, the Bayes regret of \emph{\alg} in the mixed-effect model in \cref{subsec:contextual_gaussian_bandits} is bounded as
\begin{align}\label{eq:regret_terms}
  \mathcal{B}\mathcal{R}(n)
  \leq \sqrt{2 n 
  \left( \mathcal{R}^{\textsc{a}}(n) + \mathcal{R}^{\textsc{e}}(n) \right) \log(1 / \delta)} +
  c n\delta\,,
\end{align}
where $c = \sqrt{\frac{2}{\pi} \kappa_x(\sigma_0^2 + \kappa_b \sigma_\Psi^2 )}K\,, \ \kappa_b = \max_{i \in [K]}\normw{b_i}{2}^2\,,$
\begin{talign*}
& \mathcal{R}^{\textsc{a}}(n) = dK c_\textsc{a} \log\big(1 + \frac{n\kappa_x\sigma_0^2}{d \sigma^2} \big)\,, \, c_\textsc{a} = \frac{ \kappa_x\sigma^2_{0}}{\log\big(1 + \frac{\kappa_x\sigma_0^2}{\sigma^{2}}\big)}\,,\\ 
&\mathcal{R}^{\textsc{e}}(n) = dL c_\textsc{e} \log\big(1 +  \frac{K \kappa_b \sigma^2_{\Psi} }{\sigma^2_{0} + \frac{\sigma^2}{n \kappa_x}}\big)\,,  \, c_\textsc{e}= \frac{\kappa_x \kappa_b \sigma_\Psi^2 \big(1 + \frac{\kappa_x\sigma_0^2}{\sigma^{2}}  \big)}{\log\big(1 + \frac{\kappa_x \kappa_b \sigma_\Psi^2}{\sigma^{2}}\big)}\,.&
\end{talign*}
\end{theorem}
The second term in \eqref{eq:regret_terms} is constant for $\delta = 1 / n$, in which case the above bound is $\tilde{\mathcal{O}}(\sqrt{n})$ and optimal in the horizon $n$. The main quantities of interest are $\mathcal{R}^{\textsc{a}}(n)$ and $\mathcal{R}^{\textsc{e}}(n)$, and they have natural interpretations. $\mathcal{R}^{\textsc{a}}(n)$ corresponds to the action regression problem: with $K$ parameters of dimension $d$, prior width $\sigma_{0}$, maximum context length $\sqrt{\kappa_x}$, and $n$ observations with noise $\sigma$. The dependence of $\mathcal{R}^{\textsc{a}}(n)$ on these quantities is identical to a corresponding linear bandit \citep{lu19informationtheoretic}. On the other hand, $\mathcal{R}^{\textsc{e}}(n)$ corresponds to the effect regression problem: with $L$ parameters of dimension $d$, prior width $\sigma_\Psi$, maximum mixing-weight length $\sqrt{\kappa_b}$, and $K$ actions that can be viewed as observations with noise $\sigma_{0}$ (\cref{sec:linear bandit posterior}). The dependence of $\mathcal{R}^{\textsc{e}}(n)$ on these quantities mimics those in $\mathcal{R}^{\textsc{a}}(n)$.

To simplify exposition, let $\kappa_x = \kappa_b = \sigma = 1$. Then 
\begin{align}\label{eq:simplified_regret}
  \mathcal{B}\mathcal{R}(n)
  = \tilde{\mathcal{O}}\Big(\sqrt{n d (K \sigma_0^2 + L \sigma_\Psi^2(1+\sigma_0^2))}\Big)\,.
\end{align}
Note that $\mathcal{B}\mathcal{R}(n)$ decreases when the initial uncertainties $\sigma_0$ and $\sigma_\Psi$ are lower. Also smaller $K$, $L$, or $d$ mean fewer parameters to learn and lead to a lower regret. We observe these trends empirically in \cref{app:experiments}.







Our analysis is under the assumption that the covariances and mixing weights are known. This is typical in Bayesian analyses and represents prior knowledge that reduces regret. Parameter misspecification can be analyzed similarly to \citet{simchowitz21bayesian}. Roughly speaking, if it was $\tilde{\mathcal{O}}(1 / n^\alpha)$, the additional regret would be $\tilde{\mathcal{O}}(n^{1 - \alpha})$; and thus $\tilde{\mathcal{O}}(\sqrt{n})$ when $\alpha = 0.5$. 

\subsection{Benefits of Structure}
\label{sec:benefits_structure}

Note that we do not provide a matching lower bound. The only Bayesian lower bound that we know of is $\mathcal{O}(\log^2 n)$ for a $K$-armed bandit (Theorem 3 of \citet{lai87adaptive}). Seminal works on Bayes regret minimization \citep{russo14learning,russo16information} do not match it. Therefore, to argue that our bound reflects the problem structure, we compare \alg to agents that have access to more information or use less structure. We start with those with more information. Take \alg with known effect parameters $\Psi_*$. Then $\sigma_\Psi = 0$ in \eqref{eq:simplified_regret} and we obtain a lower regret $\mathcal{B}\mathcal{R}(n) = \tilde{\mathcal{O}}(\sqrt{n d K \sigma_0^2})$ that does not depend on $L$. Similarly, take \alg with a perfect linear model, $\theta_{*, i}= \sum_{\ell \in [L]}b_{i, \ell}\psi_{*, \ell}$ for all $i \in [K]$. Then $\sigma_0 = 0$ in \eqref{eq:simplified_regret} and we get a lower regret $\mathcal{B}\mathcal{R}(n) = \tilde{\mathcal{O}}(\sqrt{n d L \sigma_\Psi^2})$ that does not depend on $K$. This shows that $K$ in our bound arises due to modeling the stochasticity of action parameters with respect to the effect parameters, incorporated in $\Sigma_{0, i}$.

Next we consider an agent that does not know $\Psi_*$ and also does not model it. Here only $\Theta_*$ is learned (\cref{sec:alternative algorithm designs}) and this is achieved by marginalizing out $\Psi_*$ in \eqref{eq:contextual_gaussian_model} as
\begin{talign*}
 \theta_{*, i} & \sim \cN\Big( \sum_{\ell =1}^L b_{i, \ell} \mu_{\psi_\ell} , \,  \breve{\Sigma}_{0, i}\Big)\,, & \forall i \in [K]\,,\\ 
    Y_t  \mid X_t, \theta_{*, A_t} & \sim \cN(X_t^\top \theta_{*, A_t} ,  \sigma^2)\,, & \forall t \in [n]\,,
\end{talign*}
where $\breve{\Sigma}_{0, i} = (\sigma_0^2 + \normw{b_i}{2}^2 \sigma_\Psi^2 )I_d $ is the marginal prior covariance of action $i$ and $\mu_{\psi_\ell} \in \real^d$ is the prior mean of the $\ell$-th effect, which satisfies $\mu_\Psi = (\mu_{\psi_\ell})_{\ell \in [L]}$ (\cref{subsec:contextual_gaussian_bandits}). The marginal prior covariance $\breve{\Sigma}_{0, i}$ accounts for the uncertainty of the not-modeled $\Psi_*$ weighted by $\normw{b_i}{2}^2$. The regret of this agent scales as in \eqref{eq:simplified_regret} with $\sigma_\Psi = 0$, except that the maximum prior variance $\sigma_{0}^2$ is replaced with the maximum marginal prior variance $\sigma_0^2 + \sigma_\Psi^2$. This can be proved using the definition of $\breve{\Sigma}_{0, i}$ and $\kappa_b=\max_{i \in [K]} \normw{b_i}{2}^2 = 1$. The resulting regret bound is $\mathcal{B}\mathcal{R}(n) = \tilde{\mathcal{O}}(\sqrt{n d K (\sigma_0^2 + \sigma_\Psi^2)})$. When $K > L$ up to constants, it can be significantly higher than the regret bound of \alg in \eqref{eq:simplified_regret}. The improvement is on the order of $\sqrt{K / L}$ when the effect parameters are much more uncertain than the action ones, $\sigma_\Psi \gg \sigma_0$, which is expected. For instance, in our ad placement example, $L$ is the number of items in the catalog and $K\approx L^M$ is the number of slates of size $M$. Hence $K/L \approx L^{M-1}$, where we can have $L \approx 10^6$ and $M \approx 10$. This claim is also supported empirically in \cref{sec:synthetic experiments,app:experiments}, where \alg outperforms classic methods when the effect parameters are more uncertain than the action ones.

\subsection{Sketch of the Regret Proof}
\label{sub:sketch}
Now we outline the key technical challenges and novel insights in our proof. Our hierarchical sampling is equivalent to sampling from the exact posterior, which is also a multivariate Gaussian $\condprob{\theta_{*,i} = \theta}{H_t} = \cN(\theta; \hat{\mu}_{t,i}, \hat{\Sigma}_{t,i})$ for some $\hat{\mu}_{t,i}$ and $\hat{\Sigma}_{t,i}$. This is because the action posterior $P_{t,i}$ and effect posterior $Q_t$ are Gaussians, and Gaussianity is preserved after marginalization \citep{koller09probabilistic}. Next notice that the context $X_t$ in round $t$ is known, and thus we include it in the history $H_t$. Now let $\bm{A}_{t} \in \{0, 1\}^K$ and $\bm{A}_{t, *} \in \{0, 1\}^K$ be the indicator vectors of the taken action $A_t$ and optimal action $A_{t, *}$, respectively. Moreover, let $\hat{\theta}_t = (X_t^\top\hat{\mu}_{t, i})_{i \in [K]}$ and $\theta_{t,*} = (X_t^\top \theta_{*, i})_{i \in [K]}$. Then the Bayes regret can be decomposed following \citet{russo14learning} as 
\begin{sizeddisplay}{\small}
\begin{align}\label{standard_decomposition}
 &   \mathcal{B}\mathcal{R}(n)
  = \mathbb{E}\Big[ \sum_{t = 1}^n \bm{A}_{t, *}^\top \theta_{t,*} - \bm{A}_{t}^\top \theta_{t,*}\Big]\,,\\
 &= \E{}{\condE{\bm{A}_{t, *}^\top (\theta_{t, *} - \hat{\theta}_t)}{H_t}} +
  \E{}{\condE{\bm{A}_t^\top (\hat{\theta}_t - \theta_{t, *})}{H_t}}.\nonumber
\end{align}\end{sizeddisplay}
The identity \eqref{standard_decomposition} holds because $\hat{\theta}_t$ is deterministic given $H_t$ (which now includes $X_t$), and the actions $\bm{A}_{t, *}$ and $\bm{A}_t$ are i.i.d. given $H_t$. Conditioned on $H_t$, $\hat{\theta}_t - \theta_{t,*}$ is a zero-mean Gaussian random vector independent of $\bm{A}_t$, and therefore $\mathbb{E}[\bm{A}_t^\top (\hat{\theta}_t - \theta_{t, *})\mid H_t]=0$. Thus we only need to bound the first term in \eqref{standard_decomposition}, which can be further bounded as
\begin{sizeddisplay}{\small}
\begin{align}\label{standard_regret_bound}
   \mathcal{B}\mathcal{R}(n) &\leq  \sqrt{2 n \log(1/\delta)}
  \sqrt{\E{}{\sum_{t = 1}^n \normw{X_t}{\hat{\Sigma}_{t,A_t}}^2}} +
  c n\delta\,.
\end{align} 
\end{sizeddisplay}
To bound \eqref{standard_regret_bound}, we need to bound a $\hat{\Sigma}_{t, A_t}$-norm, while we only know closed forms of $\bar{\Sigma}_t$ and $\tilde{\Sigma}_{t, A_t}$ (\cref{sec:linear bandit posterior}). We relate these norms by generalizing the total covariance decomposition in \citet{hong22hierarchical} to incorporate multiple effect parameters and their mixing weights $b_{i, \ell}$. To account for multiple effects, we represent all effects using a single $Ld$-dimensional vector $\Psi_* = (\psi_{*, \ell})_{\ell \in [L]}$. Moreover, we let $\Gamma_i = b_i^\top \otimes I_d \in \real^{d \times Ld}$ and observe that
\begin{talign}\label{compact_gamma_i}
    \sum_{\ell =1}^L b_{i, \ell} \psi_{*, \ell} &= \Gamma_i \Psi_*\,, & \forall i \in [K]\,.
\end{talign}
The key insight here is that we rewrite $\sum_{\ell =1}^L b_{i, \ell} \psi_{*, \ell}$ as a linear function of $\Psi_*$, $\Gamma_i \Psi_*$, and encode the dependencies between the action parameter $\theta_{*, i}$ and \emph{all} effect parameters $\Psi_*$ in $\Gamma_i$. This reformulation allows us to extend the total covariance decomposition (\cref{lemma:gaussian_covariance} in \cref{subsec:regre_proof_2}) as
\begin{align}\label{cov_matrix_marginal}
\hat{\Sigma}_{t,i} = \tilde{\Sigma}_{t,i}  + \tilde{\Sigma}_{t,i}  \Sigma_{0,i}^{-1}  \Gamma_i \bar{\Sigma}_t \Gamma_i^\top \Sigma_0^{-1}  \tilde{\Sigma}_{t,i} \,, \, \forall i \in [K]\,.
\end{align}
The first term in \eqref{cov_matrix_marginal} captures uncertainty in $\theta_{*, i} \mid \Psi_*$. The second captures uncertainty in $\Psi_*$, weighted by $\tilde{\Sigma}_{t,i}$, $\Sigma_{0,i}$, and the mixing weights $\Gamma_i$ for action $i$. The term $\Gamma_i \bar{\Sigma}_t \Gamma_i^\top$ is controlled using the maximum eigenvalue of $\Gamma_i \Gamma_i^\top$, which is at most $\max_{i \in [K]}\normw{b_i}{2}^2$. Finally, we use the identities in \eqref{compact_gamma_i} and \eqref{cov_matrix_marginal} to bound the $\hat{\Sigma}_{t, A_t}$-norm in \eqref{standard_regret_bound}. By careful analysis, our regret bound still reflects the structure and captures potential sparsity.

\section{EXPERIMENTS}
\label{sec:experiments}

\begin{figure}
  \centering
  \includegraphics[width=\linewidth]{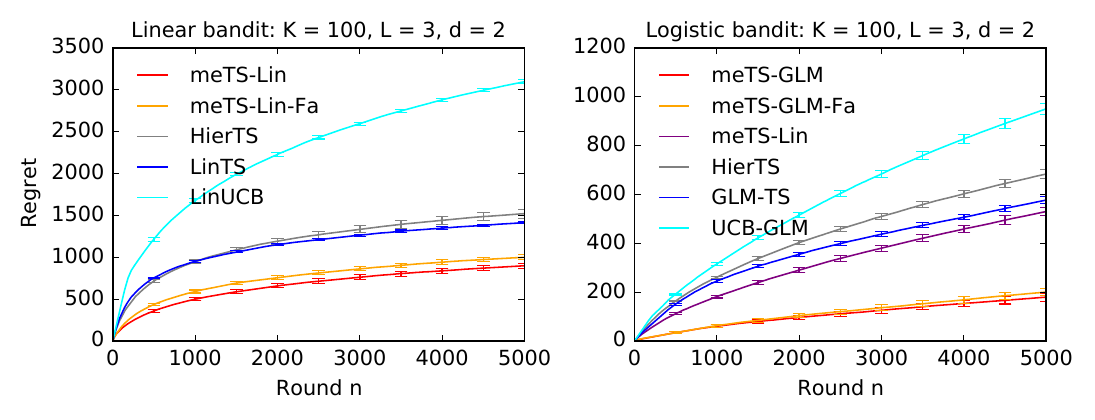}
  \vspace{-0.8cm}
  \caption{Evaluation on synthetic problems.}
  \label{fig:synthetic_regret}
\end{figure}

We evaluate \alg on both synthetic and real-world problems. In each plot, we report the average values and their standard errors. Additional experiments are conducted in \cref{app:experiments}. The code is provided in this \href{https://github.com/imadaouali/Mixed-Effect-Thompson-Sampling}{Github repository}.

\subsection{Synthetic Experiments}
\label{sec:synthetic experiments}

We start with two synthetic problems: the linear and logistic bandit settings in \eqref{eq:contextual_gaussian_model} and \eqref{eq:contextual_bernoulli_model}, respectively. The effect prior is parameterized by $\mu_\Psi=\mathbf{0}_{Ld}$ and $\Sigma_\Psi=3I_{Ld}$, the action covariance is $\Sigma_{0,i} = I_{d}$ for all $i \in [K]$, and the observation noise is $\sigma=1$. We use this setting since modeling of the effect parameters is the most beneficial when they are more uncertain than the action ones (\cref{sec:benefits_structure}). The context $X_t$ is sampled uniformly from $[-1, 1]^d$. We run $50$ simulations and sample the mixing weights $b_{i, \ell}$ from $[-1,1]$ in each run.

We consider the following baselines. For the linear setting, we compare \alglin (\cref{sec:linear bandit posterior}), \linucb \citep{abbasi-yadkori11improved}, \lints \citep{agrawal13thompson} and \hierts \citep{hong22hierarchical}. For the logistic setting, we compare \algglm (\cref{sec:glb bandit posterior}), \alglin (\cref{sec:linear bandit posterior}), \ucbglm \citep{li17provably}, \glmts \citep{chapelle11empirical} and \hierts \citep{hong22hierarchical}. \glmucb \citep{filippi10parametric} is not included because it has a very high regret. We also include factored approximations of \alg (\alglinfa and \algglmfa), where the effect parameters are sampled individually (\cref{app:posterior_approximation}). This improves the time and space complexities of \alg by $L^2$ and $L$, respectively.

All baselines but \hierts ignore the structure. \hierts incorporates the structure similarly to \alglin but only has a single effect parameter with prior $\cN(\mathbf{0}_d,3I_{d})$, with the same mean and covariance as the effect parameters of \alg. To compare fairly with \lints and \glmts, their marginal prior mean and covariance are chosen as $\mathbf{0}_d$ and $\breve{\Sigma}_{0, i} = \Sigma_{0,i} + \Gamma_i \Sigma_\Psi \Gamma_i^\top$, where $\Gamma_i = b_i^\top \otimes I_d$. This is to account for the uncertainty of the effect parameters despite marginalizing them out.

In \cref{fig:synthetic_regret}, we plot the regret in both problems for $n=5000, K=100, L=3$, and $d=2$. \alg and its factored variant outperform all baselines that ignore the structure or incorporate it partially. Moreover, \algglm outperforms \alglin in the logistic bandit, which shows the benefit of the approximation in \cref{sec:glb bandit posterior}. This attests to the generality and flexibility of \alg and the posterior derivations in \cref{sec:algorithm}. We also show in \cref{app:synthetic} that a higher $K, L$, or $d$ leads to a higher regret due to learning more parameters, which is captured by our regret bounds.

\subsection{MovieLens Experiments}
\label{sec:movielens experiments}

We study the problem of movie recommendation using the MovieLens 1M dataset \citep{movielens}. This dataset contains one million ratings given by $6040$ users to $3952$ movies. We apply low-rank factorization to the rating matrix to obtain $5$-dimensional representations: $x_j \in \real^5$ for user $j \in [6040]$ and $\theta_i \in \real^5$ for movie $i \in [3952]$. We use the movies as actions and the context $X_t$ is sampled uniformly from user vectors $x_j$. We consider both linear and logistic rewards. Given a user $x_j$, the linear reward for movie $\theta_i$ is sampled from $\cN(x_j^\top \theta_i, \sigma^2)$ while the logistic reward is sampled from ${\rm Ber}(f(x_j^\top \theta_i))$, where $f$ is the sigmoid function. We run $50$ simulations with $K=100$ randomly sampled movies in each run. We compare \alg to most baselines in \cref{sec:synthetic experiments}. We do not include \ucbglm and \glmucb because their regret is very high. In \lints and \glmts, the prior mean of action $i$ is $\mu$ and its covariance is $\breve{\Sigma}_{0} = \mathrm{diag}(v) \in \real^{d \times d}$, where $\mu \in \real^d$ and $v \in \real^d$ are the mean and variance of the movie vectors along all dimensions, respectively.

The mixed-effect structure in \eqref{eq:contextual_gaussian_model} and \eqref{eq:contextual_bernoulli_model} is not available in this problem. Therefore, we use the approach in \cref{subsec:creating_structure} to learn it. More precisely, we cluster the movies into $L=5$ mixture components by training a GMM on the offline action vectors $\theta_i$ (\cref{subsec:creating_structure}). Each cluster center corresponds to an effect parameter mean $\mu_{\psi_{\ell}} \in \real^d$ and the mixing weight $b_{i, \ell}$ is the probability that movie $i$ belongs to cluster $\ell$, as given by the GMM. We set the effect prior covariance as $\Sigma_\Psi = 0.75 \, \mathrm{diag}((\breve{\Sigma}_{0})_{\ell \in [L]}) \in \real^{Ld \times Ld}$ and the prior covariance of action $i$ as $\Sigma_{0, i} = 0.25 \, \breve{\Sigma}_{0} \in \real^{d \times d}$, where $\breve{\Sigma}_{0}$ is the same as in both \lints and \glmts. This means that the marginal covariance of action $i$ in \alg is $0.25 \, \breve{\Sigma}_{0} + 0.75 \, \Gamma_i \Sigma_\Psi \Gamma_i^\top,$ where $\Gamma_i = b_i^\top \otimes I_d$. Therefore, it is on the same order as $\breve{\Sigma}_{0, i}$ when $\normw{b_i}{2}^2 \approx 1$, and \alg is parameterized comparably to \texttt{LinTS} and \texttt{GLM-TS}. At the same time, we also model that the effect parameters are more uncertain than the action ones, since $\Sigma_{0, i} = 0.25 \, \breve{\Sigma}_{0}$ while $\Sigma_\Psi = 0.75 \, \mathrm{diag}((\breve{\Sigma}_{0})_{\ell \in [L]})$. 

In \cref{fig:movielens_regret}, we plot the regret for $n=5000$ rounds. We observe that \alg has the lowest regret, even if the true rewards are not generated from a mixed-effect model. This shows the robustness of \alg to model misspecification, which we further validate in \cref{app:robustness_model_misspecification}. It also highlights the flexibility of our framework, where a proxy structure is learned from offline data.

\begin{figure}
\includegraphics[width=\linewidth]{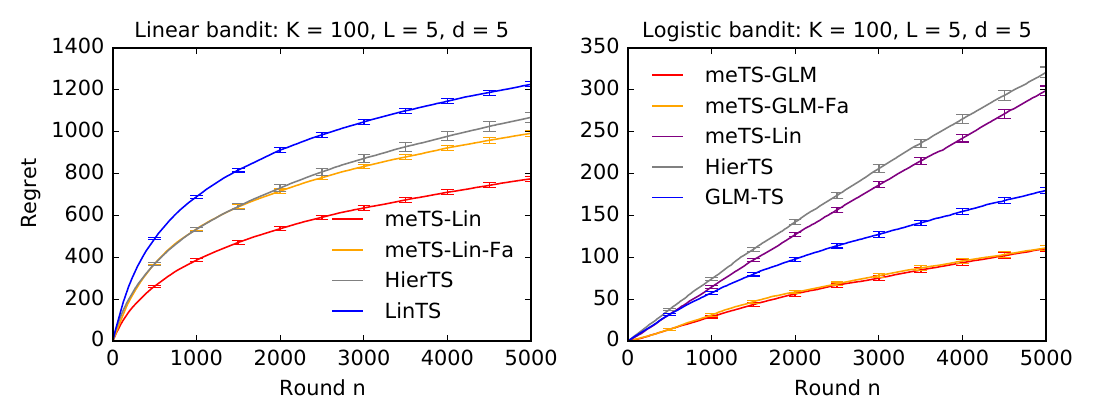}
\vspace{-0.8cm}
\caption{Evaluation on MovieLens problems.}
\label{fig:movielens_regret}
\end{figure}

\section{RELATED WORK}
\label{sec:related work}

Thompson sampling (TS) \citep{thompson33likelihood} is a popular exploration algorithm in practice \citep{chapelle11empirical,russo18tutorial}. Its first Bayes regret bound was proved by \citet{russo14learning}. We apply TS to two-level graphical models with multiple parents. Many recent works \citep{bastani19meta,kveton21metathompson,basu21noregrets,simchowitz21bayesian,wan21metadatabased,hong22hierarchical,peleg22metalearning,wan22towards,tomkins2021intelligentpooling} applied TS to the two-level models with a single parent. The main difference in our work is that we consider a mixed-effect model with multiple parents in the contextual setting. \citet{urteaga18variational} proposed TS with a mixture reward distribution. This is very different from the parameter mixtures in our work.

Our analysis (\cref{sub:sketch}) extends \citet{hong22hierarchical} to multiple effects in the contextual bandit setting. The main technical challenges are generalizing the regret decomposition in \citet{hong22hierarchical} to include context and extending their total covariance decomposition to multiple effect parameters with mixing weights. This extension is non-trivial since \citet{hong22hierarchical} assumed that all action parameters are centered at a single variable. In our setting, this is not true even if we treated all effect parameters as a single vector. Moreover, the action parameters can depend on a small subset of effect parameters, resulting in sparsity that is not captured by their analysis. Although information theory can be used to derive Bayes regret bounds \citep{russo16information,lu19informationtheoretic}, we are unaware of any for multiple effects.

We also assume that there exists an underlying structure among the actions. Many such structures have been studied and we review some below. In latent bandits \citep{maillard14latent,hong20latent}, a single latent variable indexes multiple candidate models. In structured finite-armed bandits \citep{lattimore14bounded,gupta18unified}, each arm is associated with a known mean function. The mean functions are parameterized by a shared latent parameter, which is learned. TS was also applied to more complex models, such as graphical models \citep{yu20graphical} and a discretized parameter space \citep{gopalan14thompson}. While these frameworks are general, the computational and statistical efficiencies are not guaranteed simultaneously. Meta- and mutli-task learning with UCBs have a long history in bandits \citep{azar13sequential,gentile14online,deshmukh17multitask,cella20metalearning}. These works are frequentist, analyze a stronger notion of regret, and often lead to conservative algorithms. In contrast, our approach is Bayesian, we analyze its Bayes regret, and \alg performs well as analyzed without any additional tuning.

Our work is also related to representation learning in multi-task linear bandits \citep{cella2022multi,hu2021near,yang2020impact}. We refer to these works collectively as \emph{representation learning bandits}. Representation learning bandits can be viewed in our notation as learning $\Theta_* = \Psi_* \Gamma$, where $\Theta_* \in \mathbb{R}^{d \times K}$ is a matrix of action parameters, $\Psi_* \in \mathbb{R}^{d \times L}$ is a matrix of effect parameters, and $\Gamma \in \mathbb{R}^{L \times K}$ is a matrix of mixing weights. Therefore, representation learning bandits assume that the action parameters are a perfect linear combination of effect parameters, $\Sigma_{0, i} = \mathbf{0}_{d, d}$, where $\mathbf{0}_{d, d}$ denotes a $d \times d$ zero matrix. This shows that our setting is more general, since we consider $\Sigma_{0,i} \neq \mathbf{0}_{d, d}$ due to action parameter uncertainty. Consequently, representation learning bandits can have linear regret when $\Sigma_{0,i} \neq \mathbf{0}_{d, d}$, due to model misspecification.

On the other hand, representation learning bandits learn $\Gamma$ while we assume that it is given. So they can be viewed as more general. Note that the factorization $\Theta_* = \Psi_* \Gamma$ is only beneficial when $d \gg L$. In this case, $\Psi_* \Gamma$ has $d L + K L$ parameters while $\Theta_*$ would have $d K \gg d L + K L$. We do not assume that $d \gg L$. In fact, in all of our experiments, $d L + K L > d K$ and thus learning of $\Theta_*$ directly is more practical. This is what \texttt{Lin-TS} and \texttt{GLM-TS} already do, and \alg outperforms them in all experiments. Thus representation learning bandits would not be competitive in our setting. This highlights another major difference from representation learning bandits: their $L$ is the number of latent dimensions while ours is the number of effects or clusters. These are two different approaches to modeling, although coinciding algebraically when $\Sigma_{0, i} = \mathbf{0}_{d, d}$.

\alg can be extended to unknown mixing weights. However, this would require solving a matrix factorization problem online, which is expensive and representation learning bandits suffer from the same computational challenge. Since our goal is to design practical and efficient algorithms, we focus on known mixing weights. They are either given or learned offline using off-the-shelf techniques (\cref{subsec:creating_structure}).

\input{conclusion}

\bibliographystyle{plainnat}
\bibliography{brano,references}

\clearpage
\onecolumn
\appendix

\onecolumn
\aistatstitle{Mixed-Effect Thompson Sampling: \\
Supplementary Materials}

\section*{ORGANIZATION}
The supplementary material is organized as follows. In \textbf{\cref{app:preliminaries}}, we include a more visual notation and present some preliminary results that we use in our analysis. In \textbf{\cref{additional_posteriors}}, we introduce the mixed-effect multi-armed bandit setting and provide a closed-form solution for the corresponding effect posterior and action posteriors. In \textbf{\cref{app:posterior_derivations}}, we give the derivations of the effect posterior and action posteriors for the mixed-effect linear bandit setting. These proofs can be easily extended to the generalized linear case. In \textbf{\cref{proof:regret_proof}}, we prove an upper bound for Bayes regret of \alg. In \textbf{\cref{app:extensions}}, we discuss in details possible extensions of this work. In \textbf{\cref{app:experiments}}, we present additional experiments.

\section{PRELIMINARIES}\label{app:preliminaries}
In this section, we include additional notation and provide some basic properties of matrix operations.
\subsection{Notation}\label{app:notation}
For any positive integer $n$, we define $[n] = \{1,2,...,n\}$. We use $I_d$ to denote the identity matrix of dimension $d \times d$. Unless specified, the $i$-th coordinate of a vector $v$ is $v_i$. When the vector is already indexed, such as $v_j$, we write $v_{j, i}$. Similarly, the $(i, j)$-th entry of a matrix $\Alpha$ is $\Alpha_{i, j}$. Let $a_1 \in \real^d, \ldots, a_n \in \real^d$ be $n$ vectors. We use $\Alpha = [a_1, a_2, \ldots, a_n] \in \mathbb{R}^{d \times n}$ to denote the $d \times n$ matrix obtained by horizontal concatenation of vectors $a_1, \ldots, a_n$ such that the $j$-th column of $\Alpha$ is $a_j$ and its $(i, j)$-th entry is $A_{i, j} = a_{j, i}$. We also denote by $a = (a_i)_{i \in [n]} \in \real^{nd}$ a vector of length $nd$ obtained by concatenation of vectors $a_1, \ldots, a_n$. $\operatorname{Vec}(\cdot)$ denotes the vectorization operator. For instance, we have that $\operatorname{Vec}([a_1, \ldots, a_n]) = (a_i)_{i \in [n]}$. For any matrix $\Alpha \in \mathbb{R}^{d \times d}$, we use $\lambda_1(\Alpha)$ and $\lambda_d(\Alpha)$ to denote the maximum and minimum eigenvalue of $\Alpha$, respectively. Let $\Alpha_1, \ldots, \Alpha_n$ be $n$ matrices of dimension $d \times d$. Then $\mathrm{diag}((\Alpha_i)_{i \in [n]}) \in \real^{nd \times nd}$ denotes the block diagonal matrix where $\Alpha_1, \ldots, \Alpha_n$ are the main-diagonal blocks. Similarly, $(\Alpha_i)_{i \in [n]} \in \real^{nd \times d}$ is the $nd \times d$ matrix obtained by concatenation of $\Alpha_1, \ldots, \Alpha_n$. We use $\otimes$ to denote the Kronecker product. Now we provide a more visual presentation of the notation above. Let $a_1 \in \real^d, \ldots, a_n \in \real^d$ be $n$ vectors of dimension $d$, and let $\Alpha_1 \in \real^{d \times d}, \ldots, \Alpha_n \in \real^{d \times d}$ be $n$ matrices of dimension $d \times d$. We have that
\begin{align*}
    [a_1, \ldots, a_n] &= {\begin{pmatrix} \mid & \mid & &\mid\\a_1& a_2& \cdots &a_n\\\mid &\mid & &\mid \end{pmatrix}} \in \real^{d \times n}\,, \qquad 
    (a_i)_{i \in [n]} = {\begin{pmatrix} a_1 \\ a_2 \\ \vdots \\ a_n \end{pmatrix}} \in \real^{nd}\,,\\
    \mathrm{diag}((\Alpha_i)_{i \in [n]}) &= {\begin{pmatrix}\Alpha _{1}&0&\cdots &0\\0&\Alpha _{2}&\cdots &0\\\vdots &\vdots &\ddots &\vdots \\0&0&\cdots &\Alpha _{n}\end{pmatrix}} \in \real^{nd \times nd}\,, \qquad 
    (\Alpha_i)_{i \in [n]} = {\begin{pmatrix}  \Alpha_1 \\ \Alpha_2 \\ \vdots \\ \Alpha_n \end{pmatrix}} \in \real^{nd \times d}\,.
\end{align*}
Finally, let $A_{i, j} \in \real^{d \times d}$, for $i \in [n] \text{ and } j \in [m]$ be $nm$ matrices of dimensions $d \times d$. We use $(\Alpha_{i, j})_{(i, j) \in [n] \times [m]}$ to denote the $nd \times md$ block matrix where $\Alpha_{i, j}$ is the $(i, j)$-th block. We also provide a more visual presentation for this notation.
\begin{align*}
    (\Alpha_{i, j})_{(i, j) \in [n] \times [m]} &= {\begin{pmatrix}\Alpha_{1, 1}&\Alpha_{1, 2}&\cdots & \Alpha_{1, m}\\\Alpha_{2, 1}&\Alpha _{2, 2}&\cdots &\Alpha_{2, m}\\\vdots &\vdots &\ddots &\vdots \\\Alpha_{n, 1}&\Alpha_{n, 1}&\cdots &\Alpha _{n, m}\end{pmatrix}} \in \real^{nd \times md}\,.
\end{align*}

\newpage

\subsection{Preliminary Results}\label{sub:preliminary_results}
In this section, we recall some basic properties of matrix operations.

\begin{enumerate}[label=(\alph*)]
    \item \textbf{The mixed-product property.} We have that $(\Alpha \otimes \Beta) (\mathrm{C} \otimes \mathrm{D}) = \Alpha \mathrm{C} \otimes \Beta \mathrm{D}$ for any matrices $\Alpha, \Beta, \mathrm{C}, \mathrm{D}$ such that the products $\Alpha \mathrm{C}$ and $\Beta \mathrm{D}$ exist. 
    \item \textbf{Transpose.} We have that $\left(\Alpha \otimes \Beta\right)^\top = \Alpha^\top \otimes \Beta^\top$ for any matrices $\Alpha, \Beta$.
    \item \textbf{Vectorization.} Let $\Alpha \in \real^{n \times m}, \Beta \in \real^{m \times p}$, then  $\operatorname {Vec}(\Alpha \Beta)=(I_{p} \otimes \Alpha)\operatorname {Vec} (\Beta)=(\Beta^\top \otimes I_{n})\operatorname {Vec} (\Alpha)$.
    \item For any matrix $\Alpha$, we have that $I_1 \otimes \Alpha = \Alpha$.
    \item For any positive semi-definite matrices $\Alpha$ and $\Beta$, we have that $ \lambda_1(\Alpha \otimes \Beta) = \lambda_1(\Alpha) \lambda_1(\Beta)$.
    \item For any matrix $\Alpha$ and any positive semi-definite matrix $\Beta$ such that the product $\Alpha^\top \Beta \Alpha$ exists, the following inequality holds  $\lambda_1(\Alpha^\top \Beta \Alpha) \leq \lambda_1(\Beta ) \lambda_1(\Alpha^\top \Alpha)$.
\end{enumerate}

\section{MIXED-EFFECT MULTI-ARMED BANDIT} \label{additional_posteriors}
We introduce the mixed-effect multi-armed bandit setting. We then provide the effect posterior and action posteriors for this setting. The results below can be derived from \cref{thm:pt_gaussian,thm:pti_gaussian} by setting $d=1$ and $X_t=1$ for all $t \in [n]$.

\subsection{Mixed-Effect Multi-Armed Bandit}
\label{subsec:gaussian_linear_bandits}

For scalar effect parameters, $\psi_{*, \ell} \in \real$ holds for all $\ell \in [L]$, a natural effect prior $Q_{0}$ is a multivariate Gaussian with mean $\mu_{\Psi} \in \real^L$ and covariance $\Sigma_{\Psi} \in \real^{L \times L}$. The action prior $P_{0, i}$ is a univariate Gaussian with mean $\sum_{\ell =1}^L b_{i, \ell} \psi_{*, \ell} = b_i^\top \Psi_* \in \real$ and variance $\sigma_{0, i}^2>0$. This is a non-contextual setting ($X_t = 1$ for any $t \in [n]$) and thus the whole model reads
\begin{talign}\label{eq:gaussian_linear_bandits_model}
    \Psi_{*} & \sim \cN(\mu_{\Psi}, \Sigma_{\Psi})\,, \\ 
    \theta_{*, i} \mid   \Psi_{*} & \sim \cN\left( b_i^\top \Psi_*, \,  \sigma_{0, i}^2\right)\,, & \forall i \in [K]\,, \nonumber\\ 
    Y_t  \mid A_t, \theta_{*} & \sim \cN(\theta_{*, A_t} ,  \sigma^2)\,, & \forall t \in [n]\,. \nonumber
\end{talign}

\subsection{Posteriors for Mixed-Effect Multi-Armed Bandit}
\label{sec:mab posterior}

Fix round $t \in [n]$, and recall that $S_{t, i}$ are the rounds where action $i$ is taken up to round $t$. We introduce $N_{t, i} =  |S_{t, i}|$ as the number of times that action $i$ is taken up to round $t$ and $B_{t, i} = \sum_{\ell \in S_{t, i}} Y_\ell$ as the total reward of action $i$ up to round $t$. Note that $B_{t, i}$ and weight vectors $b_i$ are unrelated. We derive in \eqref{eq:setting1_posterior} and \eqref{eq:setting1_conditional_posterior} the effect posterior $Q_t$ and the action posteriors $P_{t, i}$ for the model in \eqref{eq:gaussian_linear_bandits_model}.

\begin{proposition} For any round $t \in [n]$, the joint effect posterior is a multivariate Gaussian $Q_t = \cN(\bar{\mu}_t, \bar{\Sigma}_t)$, where 
\begin{align}\label{eq:setting1_posterior}
    \bar{\Sigma}_t^{-1} &= \Sigma_{\Psi}^{-1} + \sum_{i\in[K]} \frac{N_{t, i}}{N_{t, i} \sigma_{0, i}^2 + \sigma^2} b_i b_i^\top\,, \qquad 
    \bar{\mu}_t = \bar{\Sigma}_t\left( \Sigma_{\Psi}^{-1} \mu_{\Psi} + \sum_{i\in[K]} \frac{B_{t, i}}{N_{t, i}\sigma_{0, i}^2+\sigma^2}  b_i\right)\,.
\end{align}
Moreover, for any action $i \in [K]$, and effect parameters $\Psi_t$, the action posterior is a univariate Gaussian $P_{t, i}(\cdot \mid \Psi_t) = \cN(\cdot; \tilde{\mu}_{t, i}, \tilde{\sigma}_{t, i}^2)$, where
\begin{align}\label{eq:setting1_conditional_posterior}
    \tilde{\sigma}_{t, i}^{-2} & = \frac{1}{\sigma_{0, i}^2}  + \frac{N_{t, i}}{\sigma^2} \,, \qquad
      \tilde{\mu}_{t, i} = \tilde{\sigma}_{t, i}^{2} \left( \frac{\Psi_t^\top b_i}{\sigma_{0, i}^{2}} +\frac{B_{t, i}}{\sigma^2} \right)\,.
\end{align}
\end{proposition}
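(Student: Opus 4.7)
The plan is to prove the statement as a direct specialization of \cref{thm:pt_gaussian,thm:pti_gaussian} to the scalar, context-free regime where $d = 1$ and $X_t = 1$ for every round $t$. The MAB model in \eqref{eq:gaussian_linear_bandits_model} is exactly the $d=1$ instance of \eqref{eq:contextual_gaussian_model} with trivial contexts, so no new Bayesian computation is needed; I just need to translate the general formulas through the substitutions and reconcile one notational collision.

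First I would unfold the sufficient statistics $G_{t,i}$ and $B_{t,i}$ from \cref{sec:linear bandit posterior} in this regime. Since $X_\ell X_\ell^\top = 1$ and $Y_\ell X_\ell = Y_\ell$, the matrices collapse to scalars: $G_{t,i} = N_{t,i}/\sigma^2$ and the contextual $B_{t,i}^{\mathrm{ctx}} = \sigma^{-2}\sum_{\ell \in S_{t,i}} Y_\ell$. The crucial bookkeeping point is that the $B_{t,i}$ appearing in the MAB statement (the unscaled total $\sum_\ell Y_\ell$) equals $\sigma^{2} B_{t,i}^{\mathrm{ctx}}$; I would flag this at the outset so the reader is not confused by an apparent factor-of-$\sigma^2$ discrepancy when matching the two sets of formulas.

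Next I would substitute into \cref{thm:pt_gaussian}. Every $d \times d$ covariance $\Sigma_{0,i}$ becomes the scalar $\sigma_{0,i}^2$, every $I_d$ becomes $1$, and each Kronecker product $b_i b_i^\top \otimes (\cdot)$ or $b_i \otimes (\cdot)$ reduces to ordinary scalar multiplication acting on the $L$-dimensional outer products and vectors. A short calculation gives $(\Sigma_{0,i} + G_{t,i}^{-1})^{-1} = N_{t,i}/(N_{t,i}\sigma_{0,i}^2 + \sigma^2)$, which plugged into the precision form yields the expression for $\bar{\Sigma}_t^{-1}$ in \eqref{eq:setting1_posterior}. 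For the mean, the product $(\Sigma_{0,i} + G_{t,i}^{-1})^{-1} G_{t,i}^{-1} B_{t,i}^{\mathrm{ctx}}$ simplifies, after cancelling $N_{t,i}$ and absorbing the $\sigma^{-2}$ hidden in $B_{t,i}^{\mathrm{ctx}}$, to $B_{t,i}/(N_{t,i}\sigma_{0,i}^2 + \sigma^2)$, reproducing the second equation of \eqref{eq:setting1_posterior}.

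Finally I would repeat the same specialization for \cref{thm:pti_gaussian}. The precision $\Sigma_{0,i}^{-1} + G_{t,i}$ becomes $1/\sigma_{0,i}^2 + N_{t,i}/\sigma^2$, matching $\tilde{\sigma}_{t,i}^{-2}$, and the data-driven term $\Sigma_{0,i}^{-1}\sum_{\ell} b_{i,\ell}\psi_{t,\ell} + B_{t,i}^{\mathrm{ctx}}$ becomes $\Psi_t^\top b_i / \sigma_{0,i}^2 + B_{t,i}/\sigma^2$, which gives $\tilde{\mu}_{t,i}$ as stated in \eqref{eq:setting1_conditional_posterior}. There is no genuine obstacle in the argument; the only place to be careful is the $B_{t,i}$ convention clash between the contextual and MAB sections, which motivates writing out the specialization step explicitly rather than just appealing to the general theorems.
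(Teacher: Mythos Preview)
Your proposal is correct and follows exactly the approach the paper itself indicates: it states that the MAB posteriors ``can be derived from \cref{thm:pt_gaussian,thm:pti_gaussian} by setting $d=1$ and $X_t=1$ for all $t\in[n]$'' and gives no further argument. Your explicit tracking of the $\sigma^{-2}$ scaling difference between the two $B_{t,i}$ conventions is a useful addition, since the paper leaves that bookkeeping implicit.
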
 

The effect posterior is additive in actions and can be interpreted as follows. Each action is a single noisy observation in its estimate. The \emph{maximum likelihood estimate (MLE)} of the expected reward of action $i$, $B_{t, i} / N_{t, i}$, contributes to \eqref{eq:setting1_posterior} proportionally to its precision, $N_{t, i} / (N_{t, i} \sigma_{0, i}^2 + \sigma^2)$. The contribution is weighted by $b_i$, which are the weights used to generate $\theta_{*, i}$. The action posterior in \eqref{eq:setting1_conditional_posterior} has a standard form. Note that its prior mean depends on effect parameters $\Psi_t$, which are sampled.

\section{POSTERIOR DERIVATIONS FOR MIXED-EFFECT LINEAR BANDIT}\label{app:posterior_derivations}

Here we provide the derivations of the effect posterior and action posteriors for the setting introduced in \cref{subsec:contextual_gaussian_bandits} and summarized in \eqref{eq:contextual_gaussian_model}. Precisely, we present the proof for \cref{thm:pt_gaussian} in \cref{app:effect posterior-derivation} and the proof of \cref{thm:pti_gaussian} in \cref{app:conditional-posterior-derivation}.

\subsection{Effect Posterior Derivation}\label{app:effect posterior-derivation}

\begin{proof}[Proof of \cref{thm:pt_gaussian} (derivation of $Q_t$)]
First, from basic properties of matrix operations in \cref{sub:preliminary_results}, we have that $ \sum_{\ell \in [L]} b_{i, \ell} \psi_{*, \ell} = \Gamma_i \Psi_*$ where $\Psi_* = (\psi_{*,\ell})_{\ell \in [L]} \in \real^{Ld}$ and $\Gamma_i = b_i^\top \otimes I_d$ (refer to \cref{subsec:regre_proof_1} for derivation detail). Thus our model can be written as
\begin{align}
    \Psi_* & \sim \cN(\mu_{\Psi}, \Sigma_{\Psi}) \,,\nonumber\\
    \theta_{*, i}  \mid \Psi_* &\sim \cN(\Gamma_i \Psi_*, \Sigma_{0, i})\,, & \forall i \in [K]\,, \nonumber\\
    Y_\ell  \mid X_\ell, \theta_{*, A_\ell} & \sim \cN(X_\ell^\top \theta_{*, A_\ell}, \sigma^2)\,, & \forall \ell \in [t]\,.
\end{align}
It follows that the joint effect posterior in round $t$ reads
\begin{align}
    Q_{t}(\Psi)  & \propto \condprob{H_t }{\Psi_{*} = \Psi} Q_0(\Psi)
  \stackrel{(i)}{=}  \prod_{i \in [K]} \condprob{H_{t, i} }{\Psi_{*} = \Psi}  Q_0(\Psi)\,,\\
  &= \prod_{i \in [K]} \int_{\theta_i} \condprob{H_{t, i},\theta_{*, i}= \theta_i}{\Psi_{*} = \Psi}\dif \theta_i Q_0(\Psi)
  = \prod_{i \in [K]} \int_{\theta_i} \LL_{t, i}(\theta_i) P_{0, i}\left(\theta_i \mid \Psi \right) \dif \theta_i Q_0(\Psi)\,,\\
    &= \prod_{i \in [K]} \int_{\theta_i} \LL_{t, i}(\theta_i) \cN \left(\theta_i ; \Gamma_i \Psi, \Sigma_{0, i}\right) \dif \theta_i \cN(\Psi;\mu_{\Psi}, \Sigma_{\Psi})\,, \nonumber\\
   & = \prod_{i \in [K]} \int_{\theta_i}  \left(\prod_{\ell \in S_{t, i}} \cN(Y_\ell; X_\ell^\top \theta_i, \sigma^2)\right) \cN \left(\theta_i ; \Gamma_i \Psi, \Sigma_{0, i}\right) \dif \theta_i \cN(\Psi;\mu_{\Psi}, \Sigma_{\Psi})\,.
\end{align}
Here $(i)$ follows from the fact that $\theta_{*, i}$ for $i \in [K]$ are conditionally independent given $\Psi_*=\Psi$ and that $H_{t, i}$ depends on $\Psi_*$ only through $\theta_{*, i}$. Now we compute the quantity $\int_{\theta_i} \left(\prod_{\ell \in S_{t, i}} \cN(Y_\ell; X_\ell^\top \theta_i, \sigma^2)\right) \cN \left(\theta_i ; \Gamma_i \Psi, \Sigma_{0, i}\right) \dif \theta_i$ using \cref{lemma:gaussian_posterior_per_arm_contextual}. Precisely, we obtain that it is proportional to $\cN(\Psi; \bar{\mu}_{t, i}, \bar{\Sigma}_{t, i})$ where
\begin{align*}
    \bar{\Sigma}_{t, i}^{-1} &  = \Gamma_i^{\top} \left( \Sigma_{0, i} + G_{t, i}^{-1} \right)^{-1}\Gamma_i  , \\
    \bar{\mu}_{t, i} & =   \bar{\Sigma}_{t, i}\left( \Gamma_i^\top (\Sigma_{0, i} + G_{t, i}^{-1})^{-1} G_{t, i}^{-1} B_{t, i}\right) ,
\end{align*}
and 
\begin{align*}
    &G_{t,i} = \sigma^{-2} \sum_{\ell \in S_{t, i}} X_\ell X_\ell^\top\,, 
    &B_{t, i} = \sigma^{-2} \sum_{\ell \in S_{t, i}} Y_\ell X_\ell\,.
\end{align*}
This means that the effect posterior $Q_t$ is the product of $K+1$ multivariate Gaussian distributions $\cN(\mu_{\Psi}, \Sigma_{\Psi}), \cN(\bar{\mu}_{t, 1}, \bar{\Sigma}_{t, 1}), \ldots, \cN(\bar{\mu}_{t, K}, \bar{\Sigma}_{t, K})$. Thus, the effect posterior $Q_t$ is also a multivariate Gaussian distribution $\cN(\bar{\mu}_t, \bar{\Sigma}_t^{-1})$, where
\begin{align*}
    \bar{\Sigma}_t^{-1} &=  \Sigma_{\Psi}^{-1} + \sum_{i=1}^K\bar{\Sigma}_{t, i}^{-1} = \Sigma_{\Psi}^{-1} + \sum_{i=1}^K \Gamma_i^{\top} \left( \Sigma_{0, i} + G_{t, i}^{-1} \right)^{-1}\Gamma_i\,, \\
   \bar{\mu}_t &= \bar{\Sigma}_t\left( \Sigma_{\Psi}^{-1} \mu_{\Psi} +\sum_{i=1}^K\bar{\Sigma}_{t, i}^{-1}\bar{\mu}_{t, i}\right) = \bar{\Sigma}_t\left( \Sigma_{\Psi}^{-1} \mu_{\Psi} +\sum_{i=1}^K   \Gamma_i^\top (\Sigma_{0, i} + G_{t, i}^{-1})^{-1} G_{t, i}^{-1} B_{t, i}\right)\,.
\end{align*}
Moreover, we use the properties in \cref{sub:preliminary_results} and that $\Gamma_i = b_i^\top \otimes I_d$ to rewrite the terms as
\begin{align*}
    \Gamma_i^{\top} \left( \Sigma_{0, i} + G_{t, i}^{-1} \right)^{-1}\Gamma_i & = b_i b_i^\top \otimes \left( \Sigma_{0, i} + G_{t, i}^{-1} \right)^{-1}\,,\\
    \Gamma_i^\top (\Sigma_{0, i} + G_{t, i}^{-1})^{-1} G_{t, i}^{-1} B_{t, i} &= b_i \otimes \left( (\Sigma_{0, i} + G_{t, i}^{-1})^{-1} G_{t, i}^{-1} B_{t, i}\right)\,.
\end{align*}
This concludes the proof.
\end{proof}

To reduce clutter, we fix an action $i \in [K]$ and a round $t \in [n]$ and drop subindexing by $i$ and $t$ in the following lemma. 
In summary, there exist $i \in [K]$ and $t \in [n]$ such that we have the following correspondences:
\begin{align*}
    \Gamma \gets \Gamma_i\,, \quad \ \Sigma_{0} \gets \Sigma_{0, i}\,, \quad N \gets N_{t, i}\,, \quad \theta \gets \theta_i\,, \quad (X_\ell, Y_\ell)_{\ell \in [N]} \gets (X_\ell, Y_\ell)_{\ell \in S_{t,i}}\,,
\end{align*}

\begin{lemma}[Gaussian posterior update]\label{lemma:gaussian_posterior_per_arm_contextual} Let $\Gamma \in \real^{d \times Ld}\,,$ $\Sigma_0 \in \real^{d \times d}\,,$ and $\sigma > 0$ then we have that 
\begin{align*}
\int_\theta \left(\prod_{\ell = 1}^N \cN(Y_\ell; X_\ell^\top \theta, \sigma^2)\right)
  \cN(\theta; \Gamma \Psi, \Sigma_0) \dif\theta \propto \cN\left(\Psi; \mu_N, \Sigma_N \right)\,. \end{align*}
where  
 \begin{align*}
    \Sigma_N^{-1} & = \Gamma^{\top} \left( \Sigma_{0} + G_{N}^{-1} \right)^{-1}\Gamma\,, \\
     \mu_N & = \Sigma_N\left(  \Gamma^\top (\Sigma_{0} + G_{N}^{-1})^{-1} G_{N}^{-1} B_{N}\right)\,,
\end{align*}
and
\begin{align*}
    G_N &= \sigma^{-2} \sum_{k=1}^N X_k X_k^\top\,, \quad B_N = \sigma^{-2} \sum_{k = 1}^N Y_k X_k\,.
\end{align*}
\end{lemma}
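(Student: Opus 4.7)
The goal is a standard Gaussian conjugacy calculation: perform the integral over $\theta$ explicitly, and then identify the resulting function of $\Psi$ with a Gaussian density of mean $\mu_N$ and covariance $\Sigma_N$. The plan is to combine the likelihood and prior into a single quadratic form in $\theta$, complete the square in $\theta$ and integrate it out, and finally read off the precision $\Sigma_N^{-1}$ and the natural parameter $\Sigma_N^{-1}\mu_N$ from the quadratic and linear coefficients in $\Psi$ of the remaining exponent.

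Carrying this out, the likelihood $\prod_{\ell=1}^N \cN(Y_\ell; X_\ell^\top\theta, \sigma^2)$, viewed as a function of $\theta$, is proportional to $\exp\!\bigl(-\tfrac{1}{2}(\theta^\top G_N \theta - 2 B_N^\top \theta)\bigr)$ with $G_N$ and $B_N$ as defined in the statement. Multiplying by $\cN(\theta;\Gamma\Psi,\Sigma_0)$ collects all $\theta$-dependence into a quadratic of precision $M := \Sigma_0^{-1} + G_N$ (always invertible because $\Sigma_0^{-1}$ is positive definite and $G_N$ is positive semidefinite) and linear coefficient $m := \Sigma_0^{-1}\Gamma\Psi + B_N$. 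Completing the square in $\theta$ and integrating leaves, up to a factor independent of $\Psi$, the expression $\exp\!\bigl(\tfrac{1}{2} m^\top M^{-1} m - \tfrac{1}{2}(\Gamma\Psi)^\top \Sigma_0^{-1}\Gamma\Psi\bigr)$.

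Expanding this exponent in $\Psi$, the quadratic piece is $-\tfrac{1}{2}\Psi^\top \Gamma^\top \bigl(\Sigma_0^{-1} - \Sigma_0^{-1} M^{-1}\Sigma_0^{-1}\bigr)\Gamma\Psi$ and the linear piece is $\Psi^\top \Gamma^\top \Sigma_0^{-1} M^{-1} B_N$. Matching with $\cN(\Psi;\mu_N,\Sigma_N)$ gives $\Sigma_N^{-1} = \Gamma^\top \bigl(\Sigma_0^{-1} - \Sigma_0^{-1} M^{-1}\Sigma_0^{-1}\bigr)\Gamma$ and $\Sigma_N^{-1}\mu_N = \Gamma^\top \Sigma_0^{-1} M^{-1} B_N$. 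The Woodbury identity rewrites $\Sigma_0^{-1} - \Sigma_0^{-1}(\Sigma_0^{-1}+G_N)^{-1}\Sigma_0^{-1}$ as $(\Sigma_0 + G_N^{-1})^{-1}$, and the elementary manipulations $\Sigma_0^{-1}(\Sigma_0^{-1}+G_N)^{-1} = (I + G_N\Sigma_0)^{-1}$ together with $(\Sigma_0 + G_N^{-1})^{-1}G_N^{-1} = (I + G_N\Sigma_0)^{-1}$ show the two natural-parameter expressions agree, yielding the form stated in the lemma.

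The main obstacle is notational rather than technical: $G_N^{-1}$ and $(\Sigma_0 + G_N^{-1})^{-1}$ are only literally defined when $G_N$ is invertible, which fails if the contexts $X_1,\ldots,X_N$ do not span $\real^d$. This is exactly the invertibility issue flagged by the authors in the main text. All of the calculation above is carried out through $M = \Sigma_0^{-1} + G_N$, which is always positive definite, so the result itself is unambiguous; the $G_N^{-1}$-form in the statement is best read as shorthand for the Woodbury-rewritten expressions, which are rational in $G_N$ and $\Sigma_0$ and always well-defined.
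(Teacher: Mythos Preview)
Your proof is correct and follows essentially the same route as the paper: combine the likelihood and prior into a single quadratic in $\theta$, complete the square and integrate, then read off $\Sigma_N^{-1}$ and $\Sigma_N^{-1}\mu_N$ from the remaining quadratic and linear terms in $\Psi$, applying the Woodbury identity to reach the stated form. Your closing remark on the $G_N^{-1}$ invertibility caveat is a welcome addition and matches the paper's own commentary elsewhere.
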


\begin{proof}
Let $v
  = \sigma^{-2}\,, \quad
  \Lambda_0
  = \Sigma_0^{-1}\,.$ We denote the integral in the lemma by $f(\Psi)$. It follows that
\begin{align*}
 f(\Psi) & =  \int_\theta \left(\prod_{\ell = 1}^N \cN(Y_\ell; X_\ell^\top \theta, \sigma^2)\right)
  \cN(\theta; \Gamma \Psi, \Sigma_0) \dif\theta \,, \\
  & \propto \int_\theta \exp\left[
  - \frac{1}{2} v \sum_{\ell = 1}^N (Y_\ell - X_\ell^\top \theta)^2 -
  \frac{1}{2} (\theta - \Gamma \Psi)^\top \Lambda_0 (\theta - \Gamma \Psi)\right] \dif \theta \,,\\
  & = \int_\theta \exp\left[- \frac{1}{2}
  \left(v \sum_{\ell = 1}^N (Y_\ell^2 - 2 Y_\ell \theta^\top X_\ell + (\theta^\top X_\ell)^2) +
  \theta^\top \Lambda_0 \theta - 2 \theta^\top \Lambda_0 \Gamma \Psi + (\Gamma \Psi)^\top \Lambda_0 (\Gamma \Psi) \right)\right] \dif \theta \,,\\
  & \propto \int_\theta \exp\left[- \frac{1}{2}
  \left(\theta^\top \left( v \sum_{\ell=1}^N X_\ell X_\ell^\top + \Lambda_0 \right) \theta - 2 \theta^\top \left(v \sum_{\ell = 1}^N Y_\ell X_\ell + \Lambda_0 \Gamma \Psi\right) + (\Gamma \Psi)^\top \Lambda_0 (\Gamma \Psi)\right) \right] \dif \theta \,.
 \end{align*}
To reduce clutter, let 
\begin{align*}
    G_N = v \sum_{\ell=1}^N X_\ell X_\ell^\top\,, \quad V_N = \left(G_N + \Lambda_0\right)^{-1}\,, \quad U_N = V_N^{-1}\,,  \\ B_N = v \sum_{\ell = 1}^N Y_\ell X_\ell \quad \text{ and } \quad \beta_N = V_N \left(B_N  + \Lambda_0 \Gamma \Psi\right)\,.
\end{align*}
We have that $U_N V_N = V_N U_N = I_{d}\,,$ and thus
 \begin{align*}
  f(\Psi) & \propto \int_\theta \exp\left[- \frac{1}{2}
  \left(\theta^\top U_N \theta - 2 \theta^\top U_N V_N\left(B_N + \Lambda_0 \Gamma \Psi\right) + (\Gamma \Psi)^\top \Lambda_0 (\Gamma \Psi)\right) \right] \dif \theta \,,\\
  & = \int_\theta \exp\left[- \frac{1}{2}
  \left(\theta^\top U_N \theta - 2 \theta^\top U_N \beta_N +  (\Gamma \Psi)^\top \Lambda_0 (\Gamma \Psi)\right) \right] \dif \theta \,,\\
  & = \int_\theta \exp\left[- \frac{1}{2}
  \left( (\theta - \beta_N)^\top U_N (\theta - \beta_N) - \beta_N^\top U_N \beta_N + (\Gamma \Psi)^\top \Lambda_0 (\Gamma \Psi)\right) \right] \dif \theta \,, \\
  & \propto  \exp\left[- \frac{1}{2}
  \left( - \beta_N^\top U_N \beta_N + (\Gamma \Psi)^\top \Lambda_0 (\Gamma \Psi)\right) \right]\,,\\
  & =  \exp\left[- \frac{1}{2}
  \left( -  \left(B_N + \Lambda_0 \Gamma \Psi\right)^\top  V_N \left(B_N + \Lambda_0 \Gamma \Psi\right) + (\Gamma \Psi)^\top \Lambda_0 (\Gamma \Psi)\right) \right] \,,\\
  & \propto  \exp\left[- \frac{1}{2}
  \left( \Psi^{\top} \Gamma^{\top} \left( \Lambda_0 - \Lambda_0 V_N \Lambda_0 \right) \Gamma \Psi - 2 \Psi^\top \left(\Gamma^\top \Lambda_0 V_N B_N\right) \right) \right] \,,\\
  & = \exp\left[- \frac{1}{2} \Psi^\top \Sigma_N^{-1} \Psi + \Psi^\top \Sigma_N^{-1} \mu_N \right]\,,
\end{align*}
where 
\begin{align}\label{eq:effect_posterior_formulas}
    \Sigma_N^{-1} & = \Gamma^{\top} \left( \Lambda_0 - \Lambda_0 V_N \Lambda_0 \right)\Gamma = \Gamma^{\top} \left( \Lambda_0^{-1} + G_N^{-1} \right)^{-1}\Gamma\,, \nonumber \\
    \Sigma_N^{-1} \mu_N & = \left( \Gamma^\top \Lambda_0 V_N B_N \right) =   \Gamma^\top (\Sigma_{0} + G_{N}^{-1})^{-1} G_{N}^{-1} B_{N}\,.
\end{align} 
We use the Woodbury matrix identity to get the second equalities which concludes the proof.
\end{proof}

\subsection{Action Posterior Derivation for Mixed-Effect Linear Bandit}\label{app:conditional-posterior-derivation}

\begin{proof}[Proof of \cref{thm:pti_gaussian} (Derivation of $P_{t,i}$)]
This proposition is a direct application \cref{lemma:posterior_theta}; in which case we get that the posterior $P_{t, i}$ is a multivariate Gaussian distribution $\cN(\tilde{\mu}_{t, i}, \tilde{\Sigma}_{t, i})$, where
\begin{align*}
    \tilde{\Sigma}_{t, i}^{-1} &=  G_{t, i} + \Sigma_{0, i}^{-1}\,, \nonumber\\
    \tilde{\mu}_{t, i} &= \tilde{\Sigma}_{t, i} \left( B_{t, i} +  \Sigma_{0, i}^{-1} \sum_{\ell=1}^L b_{i, \ell} \psi_{t, \ell}\right)\,.
\end{align*}
\end{proof}

To reduce clutter, we consider a fixed action $i \in [K]$ and round $t \in [n]$, and drop subindexing by $t$ and $i$ in \cref{lemma:posterior_theta}. In summary, there exist $i \in [K]$ and $t\in [n]$ such that we have the following correspondences:
\begin{align*}
    b_\ell \gets b_{i, \ell}\,, \quad \ \Sigma_{0} \gets \Sigma_{0, i}\,, \quad N \gets N_{t, i}\,, \quad \theta_* \gets \theta_{*, i}\,, \quad (X_\ell, Y_\ell)_{\ell \in [N]} \gets (X_\ell, Y_\ell)_{\ell \in S_{t,i}}\,.
\end{align*}

\begin{lemma}\label{lemma:posterior_theta}
Consider the following model
\begin{align*}
    \theta_*  \mid \Psi_* & \sim \cN\left(\sum_{\ell=1}^L b_\ell \psi_{*, \ell}, \Sigma_0\right)\,,\\
    Y_\ell  \mid X_\ell, \theta_* & \sim \cN\left(X_\ell^\top \theta_*, \sigma^2\right)\,, & \forall \ell \in [N]\,.
\end{align*}
Let $H = \set{X_1, Y_1, \ldots, X_N, Y_N }$ then we have that $\condprob{\theta_{*} = \theta}{\Psi_*=\Psi, H} = \cN\left(\theta; \tilde{\mu}_{N}, \tilde{\Sigma}_{N}\right),$
where 
\begin{align*}
\tilde{\Sigma}^{-1}_{N} &= \sigma^{-2}\sum_{\ell =1}^N X_\ell X_\ell^\top + \Sigma_{0}^{-1}\,,\\
   \tilde{\mu}_{N} &=   \tilde{\Sigma}_{N}\left( \sigma^{-2} \sum_{\ell =1}^N X_\ell Y_\ell + \Sigma_{0}^{-1} \sum_{\ell=1}^L b_\ell \psi_\ell \right)\,.
\end{align*}

\end{lemma}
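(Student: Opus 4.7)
The plan is to apply Bayes' rule and complete the square in $\theta$, which is the standard route for a Gaussian-Gaussian conjugate update. Since everything in sight is Gaussian, the posterior $\condprob{\theta_* = \theta}{\Psi_* = \Psi, H}$ must also be Gaussian, so it suffices to identify its mean and covariance from the exponent of the unnormalized density.

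First I would write
\begin{align*}
\condprob{\theta_* = \theta}{\Psi_* = \Psi, H}
  &\propto \left(\prod_{\ell=1}^N \cN(Y_\ell;\, X_\ell^\top \theta, \sigma^2)\right)\cN\!\left(\theta;\, \sum_{\ell=1}^L b_\ell \psi_\ell,\, \Sigma_0\right),
\end{align*}
using that $Y_\ell \mid X_\ell, \theta$ does not depend on $\Psi$, and that conditionally on $\theta$ the observations are mutually independent. Taking logs, the exponent is a quadratic in $\theta$ of the form
\begin{align*}
-\tfrac{1}{2}\Bigl(\sigma^{-2}\sum_{\ell=1}^N (Y_\ell - X_\ell^\top\theta)^2 + \bigl(\theta - \textstyle\sum_\ell b_\ell \psi_\ell\bigr)^\top \Sigma_0^{-1} \bigl(\theta - \textstyle\sum_\ell b_\ell \psi_\ell\bigr)\Bigr).
\end{align*}
Expanding and collecting the quadratic and linear terms in $\theta$, I would read off the precision
\begin{align*}
\tilde{\Sigma}_N^{-1} = \sigma^{-2}\sum_{\ell=1}^N X_\ell X_\ell^\top + \Sigma_0^{-1},
\end{align*}
from the quadratic term, and identify the mean by matching the linear coefficient: the linear-in-$\theta$ part is $\theta^\top \bigl(\sigma^{-2}\sum_\ell X_\ell Y_\ell + \Sigma_0^{-1}\sum_\ell b_\ell \psi_\ell\bigr)$, which equals $\theta^\top \tilde{\Sigma}_N^{-1}\tilde{\mu}_N$, giving the claimed formula for $\tilde{\mu}_N$.

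There is no real obstacle here: this is a textbook conjugate update and the only mildly tedious step is the bookkeeping of constants independent of $\theta$ when completing the square. One subtlety worth mentioning is that $\sigma^{-2}\sum_\ell X_\ell X_\ell^\top$ need not be invertible on its own, but the prior precision $\Sigma_0^{-1}$ is positive definite, so the sum $\tilde{\Sigma}_N^{-1}$ is invertible and the Gaussian is well defined without needing the Woodbury identity invoked in the proof of \cref{thm:pt_gaussian}. Alternatively, one could cite a standard reference on Bayesian linear regression (e.g., Bishop 2006) where this exact update is derived; I would include the explicit square-completion for self-containedness.
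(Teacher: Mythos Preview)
Your proposal is correct and follows essentially the same approach as the paper: both apply Bayes' rule, write the unnormalized posterior as the product of the Gaussian likelihoods and the Gaussian prior, expand the exponent, and read off the precision and mean from the quadratic and linear terms in $\theta$. The paper's proof is slightly more explicit in writing out the intermediate expansion, but the argument is identical.
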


\begin{proof}
Let $v
  = \sigma^{-2}\,, \quad
  \Lambda_0
  = \Sigma_0^{-1}\,.$ Then the action posterior decomposes as
\begin{align*}
    &\condprob{\theta_{*} = \theta}{\Psi_*=\Psi, H}\,, \\
    &\propto \condprob{H}{\Psi_*=\Psi, \theta_{*} = \theta} \condprob{\theta_{*} = \theta}{\Psi_*=\Psi}\,,  \\
    &= \condprob{H}{\theta_{*} = \theta} \condprob{\theta_{*} = \theta}{\Psi_*=\Psi}\,, \quad (\text{$H$ depends on $\Psi_*$ only through $\theta_*$})\,,  \\
    & = \prod_{\ell =1}^N \cN(Y_\ell; X_\ell^\top \theta, \sigma^2) \cN(\theta; \sum_{\ell=1}^L b_\ell \psi_\ell , \Sigma_{0})\,,\\
    & = \exp\left[- \frac{1}{2}
  \left(v \sum_{\ell =1}^N (Y_\ell^2 - 2 Y_\ell X_\ell^\top \theta + (X_\ell^\top\theta)^2) + \theta^\top \Lambda_{0} \theta - 2 \theta^\top \Lambda_{0} \sum_{\ell=1}^L b_\ell \psi_\ell + \left(\sum_{\ell=1}^L b_\ell \psi_\ell\right)^\top \Lambda_{0} \left(\sum_{\ell=1}^L b_\ell \psi_\ell\right)\right)\right] \,,\\
  &\propto \exp\left[- \frac{1}{2}
  \left( \theta^\top(v\sum_{\ell =1}^N X_\ell X_\ell^\top + \Lambda_{0}) \theta - 2 \theta^\top \left(v \sum_{\ell =1}^N X_\ell Y_\ell + \Lambda_{0} \sum_{\ell=1}^L b_\ell \psi_\ell \right)\right)\right]\,,\\
  &\propto \cN\left(\theta; \tilde{\mu}_{N}, \left(\tilde{\Lambda}_{N}\right)^{-1}\right)\,,
\end{align*}
where $\tilde{\Lambda}_{N} = v\sum_{\ell =1}^N X_\ell X_\ell^\top + \Lambda_{0}\,,$ and $ \tilde{\Lambda}_{N} \tilde{\mu}_{N} = v \sum_{\ell =1}^N X_\ell Y_\ell + \Lambda_{0} \sum_{\ell=1}^L b_\ell \psi_\ell\,.$
\end{proof}

\section{REGRET PROOFS}\label{proof:regret_proof}
In this section, we prove a more general version of \cref{thm:regret}. First, we provide a compact formulation of our problem in \cref{subsec:regre_proof_1}. Next, we use the total covariance decomposition to derive the covariance of $\condprob{\theta_{*,i} = \theta}{H_t}$ in \cref{subsec:regre_proof_2}. Finally, we provide preliminary eigenvalue results in \cref{seubsec:prem_ineq} to proceed with the regret proof in \cref{subsec:regret_proof_3}.

\subsection{Problem Reformulation for Regret Analysis}\label{subsec:regre_proof_1}

Here, we aim at rewriting our model in a compact form to simplify regret analysis. We first introduce $K$ i.i.d. multivariate Gaussian variables $Z_i \sim \cN(0, \Sigma_{0, i})$ for $i \in [K]$, and the following matrix
\begin{align*}
    \Psi_{\text{mat}, *} &= [\psi_{*, 1}, \ldots, \psi_{*, L}] \in \real^{d \times L}\,.
\end{align*}
First, we have that $\operatorname{Vec}(\Psi_{\text{mat}, *})=\Psi_*$ where $\Psi_*$ is defined in \cref{sec:setting}. Moreover notice that $\sum_{\ell=1}^L b_{i, \ell} \psi_{*, \ell} = \Psi_{\text{mat}, *} b_i$ and thus given matrix $\Psi_{\text{mat}, *}$ we have that
\begin{align}\label{eq:params_per_arm}
\theta_{*, i} &= \Psi_{\text{mat}, *} b_i + Z_i\,, &\forall i \in [K] \,.
\end{align}
We vectorize \eqref{eq:params_per_arm} to obtain
\begin{align}
    \theta_{*,i} = \operatorname {Vec}(\theta_{*,i}) = \operatorname {Vec}(\Psi_{\text{mat}, *} b_i + Z_i) =\operatorname {Vec}(\Psi_{\text{mat}, *} b_i) + Z_i\,,
\end{align}
 where we used that if $X \in \real^d$ (a column vector), then $X = \operatorname {Vec}(X)$ and that $\operatorname {Vec}(\cdot)$ is a linear transformation. Also, we know from (c) in \cref{sub:preliminary_results} that $\operatorname {Vec}(\Alpha \Beta)=(\Beta^\top \otimes I_{n})\operatorname {Vec} (\Alpha)$ for any $\Alpha \in \real^{n \times m}, \Beta \in \real^{m \times p}$. Therefore,
\begin{align}\label{eq:vec_params}
    \theta_{*,i} = \Gamma_i \Psi_* + Z_i\,,
\end{align}
where $\Gamma_i = b_i^\top \otimes I_d$ and we used that $\operatorname {Vec}(\Psi_{\text{mat}, *}) = \Psi_*$. It follows that 
\begin{align}\label{eq:reformulation_per_arm}
     \theta_{*,i} \mid \Psi_* \sim \cN(\Gamma_i  \Psi_*, \Sigma_{0,i})\,,
\end{align}

This allows us to rewrite our model as a single-parent hierarchical model
\begin{align}\label{eq:model_rewritten}
    \Psi_{*} & \sim \cN(\mu_{\Psi}, \Sigma_{\Psi})\,, \\ 
         \theta_{*,i} \mid \Psi_* & \sim \cN(\Gamma_i  \Psi_*, \Sigma_{0,i})\,, & \forall i \in [K]\,,\nonumber\\ 
    Y_t  \mid X_t, \theta_{*, A_t} & \sim \cN(X_t^\top \theta_{*, A_t} ,  \sigma^2)\,, & \forall t \in [n]\,. \nonumber
\end{align}

\subsection{Derivation of $\condprob{\theta_{*,i} = \theta}{H_t}$}\label{subsec:regre_proof_2}
Let 
\begin{align*}
  G_{t, i}
  &= \sigma^{-2} \sum_{\ell \in S_{t, i}} X_\ell X_\ell^\top \,,
  \quad
  B_{t, i}
  = \sigma^{-2} \sum_{\ell \in S_{t, i}} Y_\ell X_\ell \,.
\end{align*}

\begin{lemma}[Covariance of $ \condprob{\theta_{*,i}= \theta}{H_t}$]\label{lemma:gaussian_covariance} Consider the model in \eqref{eq:contextual_gaussian_model} and let $\Psi_* \mid H_t \sim \cN(\bar{\mu}_t, \bar{\Sigma}_t)$, then we have
\begin{align*}
  \hat{\Sigma}_{t,i} &= \condcov{\theta_{*, i}}{H_t} = \tilde{\Sigma}_{t,i}  +\tilde{\Sigma}_{t,i} \Sigma_{0,i}^{-1}  \Gamma_i \bar{\Sigma}_{t} \Gamma_i^\top \Sigma_{0,i}^{-1} \tilde{\Sigma}_{t,i}\,, & \forall i \in [K]\,.
\end{align*}
where $\tilde{\Sigma}_{t,i} =  \condcov{\theta_{*,i}}{\Psi_*, H_t}=\left( G_{t,i} + \Sigma_{0,i}^{-1} \right)^{-1}$.
\end{lemma}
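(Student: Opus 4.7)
The natural tool here is the law of total covariance applied conditionally on $H_t$. Writing
\begin{equation*}
\condcov{\Theta_*}{H_t} = \E{}{\condcov{\Theta_*}{\Psi_*, H_t}}{\mid H_t} + \condcov{\condexp{\Theta_*}{\Psi_*, H_t}}{H_t}\,,
\end{equation*}
splits the target into two pieces that can be read off from the closed-form conditional posterior $P_{t,i}(\cdot \mid \Psi)$ established in Proposition~\ref{thm:pti_gaussian}. The first piece handles the within-$\Psi_*$ variability and the second handles the variability of the (affine) mean as $\Psi_*$ varies according to its hyper-posterior.

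\textbf{Step 1: evaluate the expected conditional covariance.} From Proposition~\ref{thm:pti_gaussian}, for each $i \in [K]$ the covariance $\tilde{\Sigma}_{t,i} = (\Sigma_{0,i}^{-1} + G_{t,i})^{-1}$ does not depend on $\Psi$. Since the action parameters $\theta_{*,i}$ are conditionally independent given $\Psi_*$ (they have independent likelihoods and independent conditional priors), stacking yields $\condcov{\Theta_*}{\Psi_*, H_t} = \mathrm{diag}((\tilde{\Sigma}_{t,i})_{i\in[K]}) = (\Sigma_0^{-1} + G_t)^{-1} = \tilde{\Sigma}_t$, which is the stated expression for $\tilde{\Sigma}_t$ and, being constant in $\Psi_*$, equals its own conditional expectation given $H_t$.

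\textbf{Step 2: express the conditional mean as an affine function of $\Psi_*$.} Again by Proposition~\ref{thm:pti_gaussian}, the per-action conditional mean is $\tilde{\mu}_{t,i} = \tilde{\Sigma}_{t,i}(\Sigma_{0,i}^{-1}\Gamma_i \Psi_* + B_{t,i})$, using $\Gamma_i \Psi_* = \sum_\ell b_{i,\ell}\psi_{*,\ell}$. Stacking over $i$ and using that $\tilde{\Sigma}_t$ and $\Sigma_0^{-1}$ are block-diagonal while $\Gamma = (\Gamma_i)_{i\in[K]}$ is the stacked block matrix from \eqref{eq:block_matrices}, one obtains the compact form
\begin{equation*}
\condexp{\Theta_*}{\Psi_*, H_t} = \tilde{\Sigma}_t \Sigma_0^{-1} \Gamma \Psi_* + \tilde{\Sigma}_t B_t\,.
\end{equation*}

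\textbf{Step 3: covariance of the affine image.} Given $H_t$, the hyper-posterior is $\Psi_* \mid H_t \sim \cN(\bar{\mu}_t, \bar{\Sigma}_t)$, and $\tilde{\Sigma}_t B_t$ is $H_t$-measurable. Hence the covariance of the affine function in Step~2 is $(\tilde{\Sigma}_t \Sigma_0^{-1} \Gamma)\,\bar{\Sigma}_t\,(\tilde{\Sigma}_t \Sigma_0^{-1} \Gamma)^\top = \tilde{\Sigma}_t \Sigma_0^{-1} \Gamma \bar{\Sigma}_t \Gamma^\top \Sigma_0^{-1} \tilde{\Sigma}_t$, using the symmetry of $\tilde{\Sigma}_t$ and $\Sigma_0^{-1}$. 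Summing with the contribution from Step~1 gives the claimed decomposition.

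\textbf{Where the care is needed.} There is no deep obstacle; the main thing to verify carefully is the bookkeeping in Step~2, namely that the per-action identities of Proposition~\ref{thm:pti_gaussian} assemble into $\tilde{\Sigma}_t \Sigma_0^{-1} \Gamma \Psi_* + \tilde{\Sigma}_t B_t$ with exactly the stacked $\Gamma$ of \eqref{eq:block_matrices}. One could alternatively avoid the law of total covariance entirely and derive $\hat{\Sigma}_t$ by applying the Gaussian marginalization formula (Section~7.1.5 of \citet{koller09probabilistic}) to the joint Gaussian in \eqref{eq:model_rewritten}, but the conditional decomposition route is cleaner and directly exposes the two terms with the interpretation used later in \eqref{cov_matrix_marginal}.
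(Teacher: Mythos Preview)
Your proposal is correct and follows essentially the same route as the paper's proof: both apply the law of total covariance, read off the conditional covariance $\tilde{\Sigma}_t=(G_t+\Sigma_0^{-1})^{-1}$ and the affine conditional mean $\tilde{\Sigma}_t(\Sigma_0^{-1}\Gamma\Psi_*+B_t)$ from \cref{thm:pti_gaussian} (stacked using the block structure in \eqref{eq:block_matrices}), and then take the covariance of this affine image under $\Psi_*\mid H_t\sim\cN(\bar{\mu}_t,\bar{\Sigma}_t)$. Your emphasis on checking the stacking in Step~2 is well placed but is the only nontrivial bookkeeping, exactly as in the paper.
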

\begin{proof}
Let $\Lambda_{0,i} =  \Sigma_{0,i}^{-1}$. \cref{thm:pti_gaussian} and the fact that $ \sum_{\ell \in [L]} b_{i, \ell} \psi_{*, \ell} = \Gamma_i \Psi_*$ where $\Gamma_i = b_i^\top \otimes I_d$ (\cref{subsec:regre_proof_1}) yield
\begin{align*}
    \condcov{\theta_{*,i}}{\Psi_*, H_t} &= \left( G_{t,i} + \Lambda_{0,i} \right)^{-1} \\
    \condE{\theta_{*,i}}{\Psi_*, H_t} & = \condcov{\theta_{*,i}}{\Psi_*, H_t} \left( B_{t,i}  + \Lambda_{0,i} \Gamma_i \Psi_* \right)
\end{align*}
First, given $H_t$, $\condcov{\theta_{*,i}}{\Psi_*, H_t} =\left( G_{t,i} + \Lambda_{0,i} \right)^{-1}$ is constant (does not depend on $\Psi_*$). Thus 
\begin{align*}   \condE{\condcov{\theta_{*,i}}{\Psi_*, H_t}}{H_t} = \condcov{\theta_{*,i}}{\Psi_*, H_t} = \left( G_{t,i} + \Lambda_{0,i} \right)^{-1}\,.
\end{align*}
In addition, given $H_t$, both $\left( G_{t,i} + \Lambda_{0,i} \right)^{-1}$ and $B_{t,i}$ are constant. Thus
\begin{align*}
    \condcov{\condE{\theta_{*,i}}{\Psi_*, H_t}}{H_t} &= \condcov{\condcov{\theta_{*,i}}{\Psi_*, H_t}\Lambda_{0,i} \Gamma_i \Psi_*}{H_t}\\
    &= \left( G_{t,i} + \Lambda_{0,i} \right)^{-1} \Lambda_{0,i} \Gamma_i \condcov{\Psi_*}{H_t} \Gamma_i^\top \Lambda_{0,i} \left( G_{t,i} + \Lambda_{0,i} \right)^{-1}\\
    &= \left( G_{t,i} + \Lambda_{0,i} \right)^{-1} \Lambda_{0,i} \Gamma_i \bar{\Sigma}_t \Gamma_i^\top \Lambda_{0,i} \left( G_{t,i} + \Lambda_{0,i} \right)^{-1}.
\end{align*}
Finally, total covariance decomposition \citep{weiss05probability} concludes the proof.
\end{proof}

\subsection{Preliminary Eigenvalues Results}\label{seubsec:prem_ineq}
Next we present some preliminary upper bounds on the maximum eigenvalues of our covariance matrices.
\setlist{nolistsep}
\begin{itemize}[noitemsep]
    \item \textbf{Definitions:} Let $\lambda_{1, 0} = \max_{i \in [K]}  \lambda_1(\Sigma_{0, i})\,,$ $\lambda_{d, 0}  = \min_{i \in [K]}  \lambda_d(\Sigma_{0, i})\,,$ $\lambda_{1, \Psi} = \lambda_1(\Sigma_{\Psi})\,,$ and $\kappa_b = \max_{i \in [K]} \normw{b_i}{2}^2 $.
    \item \textbf{upper bound of $\lambda_1(\Gamma_i \Gamma_i^\top)$:}\begin{align}\label{eq:max_eigenvalue_mix_weights}
   \lambda_1(\Gamma_i \Gamma_i^\top) & \leq \kappa_b \,, & \forall i \in [K]\,.
    \end{align} 
    Similarly, we have that
    \begin{align}
   \lambda_1(\Gamma_i^\top \Gamma_i) & \leq \kappa_b \,, & \forall i \in [K]\,.
    \end{align} 
    \item \textbf{upper bound of $\lambda_1(\hat{\Sigma}_{t, i})$:} \begin{align}\label{eq:max_eigenvalue_covariance}
    \lambda_1(\hat{\Sigma}_{t, i}) & \leq \lambda_{1, 0} + \frac{\lambda_{1, 0}^2 \lambda_{1, \Psi} \kappa_b }{\lambda_{d, 0}^2} \,, & \forall i \in [K]\,.
\end{align} 
    \item \textbf{upper bound of $\lambda_1(\Sigma_\Psi^\frac{1}{2} \bar{\Sigma}_{n+1}^{-1} \Sigma_\Psi^\frac{1}{2})$:}\begin{align}\label{eq:max_eigenvalue_effect_covariance}
    \lambda_1(\Sigma_\Psi^\frac{1}{2} \bar{\Sigma}_{n+1}^{-1} \Sigma_\Psi^\frac{1}{2}) & \leq 1 + K \lambda_{1, \Psi} \kappa_b \Big( \frac{1}{\lambda_{d, 0}} - \frac{1}{\lambda_{1, 0}^2\big(\frac{\kappa_x n}{\sigma^2} + \frac{1}{\lambda_{d, 0}} \big)} \Big) \,.
\end{align} 
\end{itemize}

\begin{proof}
We start with \eqref{eq:max_eigenvalue_mix_weights}. First, recall that $\Gamma_i = b_i^\top \otimes I_d$ for any $i \in [K]$. Thus $\Gamma_i \Gamma_i^\top = (b_i^\top \otimes I_d) (b_i \otimes I_d) = \normw{b_i}{2}^2 I_d$ for any $i \in [K]$. Then $\lambda_1(\Gamma_i \Gamma_i^\top) = \normw{b_i}{2}^2 \leq \kappa_b\,.$ The second result follows from the fact that $\lambda_1(\Gamma_i \Gamma_i^\top)=\lambda_1(\Gamma_i^\top \Gamma_i)$. 

Now we prove the result in \eqref{eq:max_eigenvalue_covariance}. This follows from the expression of $\hat{\Sigma}_{t,i}$ in \cref{lemma:gaussian_covariance}. Precisely, we have that 
\begin{align*}
  \hat{\Sigma}_{t,i} & = \tilde{\Sigma}_{t,i}  +\tilde{\Sigma}_{t,i} \Sigma_{0,i}^{-1}  \Gamma_i \bar{\Sigma}_{t} \Gamma_i^\top \Sigma_{0,i}^{-1} \tilde{\Sigma}_{t,i}\,, & \forall i \in [K]\,.
\end{align*}
where $\tilde{\Sigma}_{t,i} =\left( G_{t,i} + \Sigma_{0,i}^{-1} \right)^{-1}$. Thus Weyl's inequality combined with the properties in \cref{sub:preliminary_results} yields that
\begin{align*}
    \lambda_1(\hat{\Sigma}_{t, i}) & \leq \lambda_1(\tilde{\Sigma}_{t,i}) + \lambda_1(\tilde{\Sigma}_{t,i}) \lambda_1({\Sigma}_{0,i}^{-1})\lambda_1(\Gamma_i\bar{\Sigma}_{t}\Gamma_i^\top) \lambda_1({\Sigma}_{0,i}^{-1}) \lambda_1(\tilde{\Sigma}_{t,i}) \leq  \lambda_{1, 0} + \frac{\lambda_{1, 0}^2 \lambda_{1, \Psi} \kappa_b }{\lambda_{d, 0}^2}
\end{align*} 
In the last inequality, we used that $\lambda_1(\Gamma_i\bar{\Sigma}_{t}\Gamma_i^\top) \leq \lambda_1(\bar{\Sigma}_{t}) \lambda(\Gamma_i \Gamma_i^\top),$ ((f) in \cref{sub:preliminary_results}),  $\lambda_1({\Sigma}_{0,i}^{-1}) \leq \frac{1}{\lambda_{d, 0}}\,,$ and $\lambda_1(\tilde{\Sigma}_{t,i}) \leq \lambda_{1,0}$.

Finally, we prove the result in \eqref{eq:max_eigenvalue_effect_covariance}. First, we rewrite the precision matrix of the effect posterior $\bar{\Sigma}_t^{-1}$ using the compact notation introduced in \cref{subsec:regre_proof_1}. Precisely, it follows from \eqref{eq:effect_posterior_formulas} that
\begin{align*}
    \bar{\Sigma}_t^{-1} \stackrel{(i)}{=} \Sigma_\Psi^{-1} + \sum_{i=1}^K \Gamma_{i}^{\top} \left( \Sigma_{0,i} + G_{t,i}^{-1} \right)^{-1}\Gamma_{i} &\stackrel{(ii)}{=}  \Sigma_\Psi^{-1} + \sum_{i=1}^K\Gamma_i^{\top} \left( \Sigma_{0, i}^{-1} - \Sigma_{0, i}^{-1} (G_{t, i} + \Sigma_{0, i}^{-1})^{-1} \Sigma_{0, i}^{-1} \right)\Gamma_i \,,\nonumber\\
    &\stackrel{(iii)}{=}  \Sigma_\Psi^{-1} + \sum_{i=1}^K\Gamma_i^{\top} \left( \Sigma_{0, i}^{-1} - \Sigma_{0, i}^{-1} \tilde{\Sigma}_{t, i} \Sigma_{0, i}^{-1} \right)\Gamma_i \,.
\end{align*}
The equality $(i)$ requires $G_{t,i}$ to be invertible and was only given in the main manuscript to ease the exposition. Note that $(i)$ is obtained by applying the Woodbury matrix identity on $(ii)$. In our proof, we use $(ii)$ and $(iii)$ which are the same; $(iii)$ follows from plugging $\tilde{\Sigma}_{t, i} = (G_{t, i} + \Sigma_{0, i}^{-1})^{-1}$ in $(ii)$. Then we have that
    \begin{align*}
    &\lambda_1(\Sigma_\Psi^\frac{1}{2} \bar{\Sigma}_{n+1}^{-1} \Sigma_\Psi^\frac{1}{2})\\
    &=\lambda_1\Big(I_{Ld} + \sum_{i=1}^K \Sigma_\Psi^\frac{1}{2}\Gamma_i^{\top} \left( \Sigma_{0, i}^{-1} - \Sigma_{0, i}^{-1} \tilde{\Sigma}_{n+1, i} \Sigma_{0, i}^{-1} \right)\Gamma_i \Sigma_\Psi^\frac{1}{2} \Big) \leq 1+ \lambda_{1, \Psi}\sum_{i=1}^K\lambda_1\Big(\Gamma_i^{\top} \left( \Sigma_{0, i}^{-1} - \Sigma_{0, i}^{-1} \tilde{\Sigma}_{n+1, i} \Sigma_{0, i}^{-1} \right)\Gamma_i\Big) \nonumber\,,\\
    &\leq 1+ \lambda_{1, \Psi} \sum_{i=1}^K\lambda_1(\Gamma_i^{\top}\Gamma_i) \lambda_1\left( \Sigma_{0, i}^{-1} - \Sigma_{0, i}^{-1} \tilde{\Sigma}_{n+1, i} \Sigma_{0, i}^{-1} \right) \leq 1+ \lambda_{1, \Psi}\sum_{i=1}^K\kappa_b \left( \lambda_1\left( \Sigma_{0, i}^{-1}\right) + \lambda_1\left(-\Sigma_{0, i}^{-1} \tilde{\Sigma}_{n+1, i} \Sigma_{0, i}^{-1} \right) \right)\nonumber\,,\\
    &\leq 1+ \lambda_{1, \Psi} \sum_{i=1}^K\kappa_b \left( \frac{1}{\lambda_{d, 0}} - \lambda_d\left(\Sigma_{0, i}^{-1} \tilde{\Sigma}_{n+1, i} \Sigma_{0, i}^{-1} \right) \right)\nonumber \leq  1+ \lambda_{1, \Psi} \sum_{i=1}^K\kappa_b \left( \frac{1}{\lambda_{d, 0}} - \lambda_d\left(\Sigma_{0, i}^{-1}\right)\lambda_d\left( \tilde{\Sigma}_{n+1, i} \right)\lambda_d\left(\Sigma_{0, i}^{-1} \right) \right)\nonumber\,,\\
    &\leq  1+ \lambda_{1, \Psi} \sum_{i=1}^K\kappa_b \left( \frac{1}{\lambda_{d, 0}} - \frac{1}{\lambda_{1, 0}^2}\lambda_d\left( \tilde{\Sigma}_{n+1, i} \right) \right)\nonumber = 1+ \lambda_{1, \Psi} \sum_{i=1}^K\kappa_b \left( \frac{1}{\lambda_{d, 0}} - \frac{1}{\lambda_{1, 0}^2\lambda_1\left( G_{n+1, i} + \Sigma_{0, i}^{-1} \right)} \right)\nonumber\,,\\
    &\leq  1+ \lambda_{1, \Psi} \sum_{i=1}^K\kappa_b \left( \frac{1}{\lambda_{d, 0}} - \frac{1}{\lambda_{1, 0}^2\left(\frac{\kappa_x n}{\sigma^2} + \frac{1}{\lambda_{d, 0}} \right)} \right) =  1+K \lambda_{1, \Psi} \kappa_b \left( \frac{1}{\lambda_{d, 0}} - \frac{1}{\lambda_{1, 0}^2\left(\frac{\kappa_x n}{\sigma^2} + \frac{1}{\lambda_{d, 0}} \right)} \right)\nonumber\,.
\end{align*}
\end{proof}

\subsection{Regret Proof}\label{subsec:regret_proof_3}
Here we prove a more general version of \cref{thm:regret} where we do not assume that the covariance matrices $\Sigma_{0, i}$ and $\Sigma_{\Psi}$ are diagonal. We still assume that there exists $\kappa_x>0$ such that $\normw{X_t}{2}^2 \leq \kappa_x$ for any $t \in [n]$.

\begin{theorem}[General version of \cref{thm:regret}] For any $\delta \in (0, 1)$, the Bayes regret of \emph{\alg} in the mixed-effect model in \cref{subsec:contextual_gaussian_bandits} is bounded as
\begin{align}
  \mathcal{B}\mathcal{R}(n)
  \leq \sqrt{2 n 
  \left( \mathcal{R}^{\textsc{a}}(n) + \mathcal{R}^{\textsc{e}}(n) \right) \log(1 / \delta)} +
  c n\delta\,,
\end{align}
with $c = \sqrt{ \frac{2}{\pi} \kappa_x\big( \lambda_{1, 0} + \frac{\lambda_{1, 0}^2 \lambda_{1, \Psi}\kappa_b }{\lambda_{d, 0}^2}\big)}K \,, \, \, \kappa_b = \max_{i \in [K]}\normw{b_i}{2}^2\,, \, \, \lambda_{1, 0}=\max_{i \in [K]}\lambda_1(\Sigma_{0, i})\,, \, \, \lambda_{d, 0} = \min_{i \in [K]}\lambda_d(\Sigma_{0, i})\,,$ $\lambda_{1, \Psi} = \lambda_1(\Sigma_{\Psi})$ and
\begin{talign*}
& \mathcal{R}^{\textsc{a}}(n) = dK c_\textsc{a} \log\big(1 + \frac{n\kappa_x \lambda_{1, 0}  }{\sigma^2 d}\big)\,, \, c_\textsc{a} = \frac{\kappa_x \lambda_{1, 0}}{\log(1 + \frac{\kappa_x \lambda_{1, 0}}{\sigma^{2}})}\,,\\ 
&\mathcal{R}^{\textsc{e}}(n) = dL c_\textsc{e} \log\Big(1+K \kappa_b \lambda_{1, \Psi} \Big( \frac{1}{\lambda_{d, 0}} - \frac{1}{\lambda_{1, 0}^2\big(\frac{\kappa_x n}{\sigma^2} + \frac{1}{\lambda_{d, 0}} \big)} \Big)\Big)\,,  \, c_\textsc{e}= \frac{\kappa_x \kappa_b \lambda_{1, 0}^2 \lambda_{1, \Psi} \big(1 + \frac{\kappa_x\lambda_{1, 0}}{\sigma^{2}}  \big)}{\lambda_{d, 0}^2\log\big(1 + \frac{\kappa_x \kappa_b \lambda_{1, 0}^2 \lambda_{1, \Psi}}{\sigma^{2} \lambda_{d, 0}^2}\big)}\,.&
\end{talign*}
In particular, the result in \cref{thm:regret} is retrieved when $\lambda_{1, 0}= \lambda_{d, 0}=\sigma_0^2\,, \text{ and }\lambda_{1, \Psi}=\sigma_\Psi^2\,.$
\end{theorem}

\begin{proof}

Consider our model rewritten in \eqref{eq:model_rewritten}. As we explained in \cref{sub:sketch}, the posterior distribution of the action parameter $\theta_{*,i} \mid H_t$ is a multivariate Gaussian distribution $\cN(\hat{\mu}_{t,i} , \hat{\Sigma}_{t,i})$ for some $\hat{\mu}_{t,i} \in \real^{d}$ and $\hat{\Sigma}_{t,i} \in \real^{d \times d}$ (\cref{lemma:gaussian_covariance}). Now we let $\theta_{t,*} = (X_t^\top \theta_{*, i})_{i \in [K]} \in \real^K$ be the concatenation of the expected rewards of actions in round $t$. Notice that the context $X_t$ is known in round $t$, and thus we include it in the history $H_t$. Then, the joint posterior of the expected rewards, $\theta_{t,*}  \mid H_t$, is also a multivariate Gaussian $\cN(\check{\theta}_t, \check{\Sigma}_t)$ for $\check{\theta}_t = (X_t^\top \hat{\mu}_{t, i})_{i \in [K]} \in \real^K$ and $\check{\Sigma}_t \in \real^{K \times K}$. This follows from the properties of Gaussian distributions \citep{koller09probabilistic} and the fact that $X_t$ is now included in $H_t$. Let $\bm{A}_{t} \in \{0, 1\}^K$ and $\bm{A}_{t, *} \in \{0, 1\}^K$ be indicator vectors of the taken action $A_t$ and optimal action $A_{t, *}$, respectively (The vector representations are in bold letters while the integer representations are in regular letters). Then the Bayes regret can be rewritten and consequently decomposed following standard analysis \citep{russo14learning} as 
\begin{align}\label{standard_decomposition_app} \mathcal{B}\mathcal{R}(n)
 &= \mathbb{E}\left[ \sum_{t = 1}^n X_t^\top \theta_{*, A_{t, *}} - X_t^\top \theta_{*, A_{t}}\right]\,,\\
 &= \mathbb{E}\Big[ \sum_{t = 1}^n \bm{A}_{t, *}^\top \theta_{t,*} - \bm{A}_{t}^\top \theta_{t,*}\Big]= \E{}{\condE{\bm{A}_{t, *}^\top (\theta_{t, *} - \check{\theta}_t)}{H_t}} +
  \E{}{\condE{\bm{A}_t^\top (\check{\theta}_t - \theta_{t, *})}{H_t}}\,.\nonumber
\end{align}
This follows from the fact that $\check{\theta}_t = (X_t^\top \hat{\mu}_{t, i})_{i \in [K]}$ is deterministic given $H_t$ (since $H_t$ now includes $X_t$), and that $\bm{A}_{t, *}$ and $\bm{A}_t$ are i.i.d. given $H_t$. Moreover, given $H_t$, $\check{\theta}_t - \theta_{t,*}$ is a zero-mean multivariate random variable independent of $\bm{A}_t$ and thus $\mathbb{E}[\bm{A}_t^\top (\check{\theta}_t - \theta_{t, *})\mid H_t]=0$. Therefore, we only need to bound the first term in \eqref{standard_decomposition_app}. With slight abuse of notation, let $\bm{\cA}$ be the set of all possible indicator vectors of actions $a \in [K]$. Precisely, an action $a \in [K]$ is also represented by an indicator vector $\bm{a} \in \bm{\cA} \subset \{0, 1\}^K$ (in bold letter). Then we define the following events
\begin{align*}
  E_{t, \bm{a}}(\delta) & =
  \set{|\bm{a}^\top (\theta_{t,*} - \check{\theta}_t)|
  \leq \sqrt{2 \log(1 / \delta)} \normw{\bm{a}}{\check{\Sigma}_t}}\,, & \forall \delta \in (0, 1)\,, \, \forall \bm{a} \in \bm{\cA}\,.
\end{align*}
Fix history $H_t$, we split the expectation over the two complementary events $  E_{t, \bm{A}_{t,*}}(\delta)$ and $ \bar{E}_{t, \bm{A}_{t,*}}(\delta)$, and use the Cauchy-Schwarz inequality to obtain
\begin{align}\label{eq:classic_regret_decomposition_2}
  \condE{\bm{A}_{t,*}^\top (\theta_{t,*} - \check{\theta}_t)}{H_t}
  \leq \sqrt{2 \log(1 / \delta)} \, \condE{\normw{\bm{A}_{t,*}}{\check{\Sigma}_t}}{H_t} +
  \condE{\bm{A}_{t,*}^\top (\theta_{t,*} - \check{\theta}_t)
  \I{\bar{E}_{t, \bm{A}_{t,*}}(\delta)}}{H_t}\,.
\end{align}
Now the second term in \eqref{eq:classic_regret_decomposition_2} can be bounded as follows. For any $\bm{a} \in \bm{\cA}\,,$ let $Z_{\bm{a}} = \bm{a}^\top (\theta_{t,*} - \check{\theta}_t)$. Then we have that  
\begin{align}\label{eq:classic_regret_decomposition_3}
  \condE{\bm{A}_{t,*}^\top (\theta_{t,*} - \check{\theta}_t) \I{ \bar{E}_{t, \bm{A}_{t,*}}(\delta) } }{H_t} & \stackrel{(i)}{=} \condE{Z_{\bm{A}_{t,*}} \I{  |Z_{\bm{A}_{t,*}}| > \sqrt{2 \log(1 / \delta)}\normw{\bm{A}_{t,*}}{\check{\Sigma}_t} } }{H_t}\,, \nonumber \\
& \stackrel{(ii)}{\leq} \condE{|Z_{\bm{A}_{t,*}}| \I{  |Z_{\bm{A}_{t,*}}| > \sqrt{2 \log(1 / \delta)}\normw{\bm{A}_{t,*}}{\check{\Sigma}_t} } }{H_t}\,, \nonumber \\
& \stackrel{(iii)}{\leq} \sum_{\bm{a} \in \bm{\cA}} \condE{|Z_{\bm{a}}| \I{  |Z_{\bm{a}}| > \sqrt{2 \log(1 / \delta)}\normw{\bm{a}}{\check{\Sigma}_t} } }{H_t}\,, \nonumber \\
  & \stackrel{(iv)}{\leq}  \sum_{\bm{a} \in \bm{\cA}}  \frac{2}{\normw{\bm{a}}{\check{\Sigma}_t}\sqrt{2 \pi}}
  \int_{u = \sqrt{2 \log(1 / \delta)} \normw{\bm{a}}{\check{\Sigma}_t}}^\infty
  u \exp\left[- \frac{u^2}{2\normw{\bm{a}}{\check{\Sigma}_t}^2}\right] \dif u\,, \nonumber \\
  & \stackrel{(v)}{\leq}  \sum_{\bm{a} \in \bm{\cA}} \normw{\bm{a}}{\check{\Sigma}_t} \frac{2}{\sqrt{2 \pi}}
  \int_{u = \sqrt{2 \log(1 / \delta)}}^\infty
  u \exp\left[- \frac{u^2}{2}\right] \dif u
   \stackrel{(vi)}{\leq} \sqrt{\frac{2}{\pi}} \lambda_{\max, t}   K \delta\,.
\end{align}
In $(i)$, we simply rewrite the terms using the random variable $Z_{\bm{A}_{t,*}}$. In $(ii)$, we use the fact that $Z_{\bm{A}_{t,*}} \leq |Z_{\bm{A}_{t,*}}|$. In $(iii)$, we upper bound the expectation of $|Z_{\bm{A}_{t,*}}| \I{  |Z_{\bm{A}_{t,*}}| > \sqrt{2 \log(1 / \delta)}\normw{\bm{A}_{t,*}}{\check{\Sigma}_t}}$ with the sum of the expectations of $|Z_{\bm{a}}| \I{  |Z_{\bm{a}}| > \sqrt{2 \log(1 / \delta)}\normw{\bm{a}}{\check{\Sigma}_t} }$ for $\bm{a} \in \bm{\cA}$ since all these random variables are non-negative. Moreover, $(iv)$ follows from the facts that given $H_t\,,$ $Z_{\bm{a}} \sim \cN(0, \normw{\bm{a}}{\check{\Sigma}_t}^2)$, and that if $Z \sim \cN(0, \sigma^2)$, then for any $\epsilon \geq 0\,,$ $\mathbb{P}(|Z|>\epsilon) \leq 2 \mathbb{P}(Z>\epsilon)$. In $(v)$, we use the change of variables $u \gets u/\normw{\bm{a}}{\check{\Sigma}_t} $. Finally, in $(vi)$, we compute the integral and set $\lambda_{\max, t} = \max_{\bm{a} \in \bm{\cA}} \normw{\bm{a}}{\check{\Sigma}_t}$. We combine \eqref{eq:classic_regret_decomposition_2} and \eqref{eq:classic_regret_decomposition_3} with the fact that $\bm{A}_t$ and $\bm{A}_{t,*}$ are i.i.d.\ given $H_t$ to obtain that
\begin{align}\label{eq:classic_regret_decomposition_4}
  \condE{\bm{A}_{t,*}^\top (\theta_{t,*} - \check{\theta}_t)}{H_t}
  \leq \sqrt{2 \log(1 / \delta)} \, \condE{\normw{\bm{A}_t}{\check{\Sigma}_t}}{H_t} +
  \sqrt{\frac{2}{\pi}} \lambda_{\max, t}   K \delta\,.
\end{align}
The bound in \eqref{eq:classic_regret_decomposition_4} holds for any history $H_t$ and thus we take an additional expectation and get that
\begin{align*}
 \mathcal{B}\mathcal{R}(n) = \E{}{\sum_{t = 1}^n \bm{A}_{t,*}^\top \theta_{t,*} - \bm{A}_t^\top \theta_{t,*}}
  & \leq \sqrt{2 \log(1 / \delta)} \,
  \E{}{\sum_{t = 1}^n \normw{\bm{A}_t}{\check{\Sigma}_t}} +
  \sqrt{\frac{2}{\pi}} \lambda_{\max, t}   K n \delta \,, \\
  & \stackrel{(i)}{\leq} \sqrt{2 n\log(1 / \delta)} \,
  \E{}{\sqrt{\sum_{t = 1}^n \normw{\bm{A}_t}{\check{\Sigma}_t}^2}} +
  \sqrt{\frac{2}{\pi}} \lambda_{\max, t}   K n \delta \,, \\
  & \stackrel{(ii)}{\leq} \sqrt{2 n \log(1 / \delta)}
  \sqrt{\E{}{\sum_{t = 1}^n \normw{\bm{A}_t}{\check{\Sigma}_t}^2}} +
  \sqrt{\frac{2}{\pi}} \lambda_{\max, t}   K n \delta\,,
\end{align*}
where we use the Cauchy-Schwarz inequality in $(i)$, and $(ii)$ follows from the concavity of the square root. Now note that any $\bm{a} \in \bm{\cA}$ is an indicator vector and that $\check{\Sigma}_t$ is the covariance of the joint posterior of the expected rewards $(X_t^\top \theta_{*, a})_{a \in [K] }\mid H_t$. Therefore, for  any $\bm{a} \in \bm{\cA}$, $\normw{\bm{a}}{\check{\Sigma}_t}^2 = \check{\sigma}_a^2$ is the variance of $X_t^\top \theta_{*, a} \mid H_t$. But we know that $\theta_{*, a} \mid H_t$ is a multivariate Gaussian and its covariance is $\hat{\Sigma}_{t,a}$ (\cref{lemma:gaussian_covariance}). Thus the variance of $X^\top \theta_{*, a} \mid H_t$ is $\check{\sigma}_a^2 = X_t^\top \hat{\Sigma}_{t,a}X_t$. It follows that for any 
$\bm{a} \in \bm{\cA}\,,$ $\normw{\bm{a}}{\check{\Sigma}_t}^2= X_t^\top \hat{\Sigma}_{t,a}X_t = \normw{X_t}{\hat{\Sigma}_{t, a}}^2$. In particular,  $\normw{\bm{A}_t}{\check{\Sigma}_t}^2= X_t^\top \hat{\Sigma}_{t,A_t}X_t$. Combining this with \eqref{eq:max_eigenvalue_covariance} yields that $\lambda_{\max, t}  = \max_{\bm{a} \in \bm{\cA}} \normw{\bm{a}}{\check{\Sigma}_t} = \max_{a \in \cA} \normw{X_t}{\hat{\Sigma}_{t, a}} \leq  \max_{a \in \cA}\sqrt{\lambda_1(\hat{\Sigma}_{t,a}) \kappa_x} \leq \sqrt{\left( \lambda_{1, 0} + \frac{\lambda_{1, 0}^2 \lambda_{1, \Psi}\kappa_b }{\lambda_{d, 0}^2}\right) \kappa_x}$. Then we let $c = \sqrt{ \frac{2}{\pi}\left( \lambda_{1, 0} + \frac{\lambda_{1, 0}^2 \lambda_{1, \Psi}\kappa_b }{\lambda_{d, 0}^2}\right) \kappa_x} K$ which allows us to write
\begin{align}
   \mkern-22mu \mathcal{B}\mathcal{R}(n) &\leq  \sqrt{2 n \log(1/\delta)}
  \sqrt{\E{}{\sum_{t = 1}^n \normw{X_t}{\hat{\Sigma}_{t,A_t}}^2}} +
  c n\delta\,.
\end{align}
Now we focus on the the term $\sqrt{\E{}{\sum_{t = 1}^n \normw{X_t}{\hat{\Sigma}_{t, A_t}}^2}}$ that we decompose and bound as
\begin{align}\label{eq:sequential proof decomposition}
  & \normw{X_t}{\hat{\Sigma}_{t, A_t}}^2 = \sigma^2 \frac{X_t^\top \hat{\Sigma}_{t, A_t} X_t}{\sigma^2} \stackrel{(i)}{=} \sigma^2 \left(\sigma^{-2} X_t^\top \tilde{\Sigma}_{t, A_t} X_t +
  \sigma^{-2} X_t^\top \tilde{\Sigma}_{t, A_t} \Sigma_{0,A_t}^{-1} \Gamma_{A_t} \bar{\Sigma}_t \Gamma_{A_t}^\top
  \Sigma_{0,A_t}^{-1} \tilde{\Sigma}_{t, A_t} X_t\right)\,,
  \nonumber \\
  & \stackrel{(ii)}{\leq} c_{\textsc{a}} \log(1 + \sigma^{-2} X_t^\top \tilde{\Sigma}_{t, A_t} X_t) +
  c_1 \log(1 + \sigma^{-2} X_t^\top \tilde{\Sigma}_{t, A_t} \Sigma_{0,A_t}^{-1} \Gamma_{A_t} \bar{\Sigma}_t \Gamma_{A_t}^\top
  \Sigma_{0,A_t}^{-1} \tilde{\Sigma}_{t, A_t} X_t)\,,
\end{align}
where $(i)$ follows from $\hat{\Sigma}_{t, A_t} =  \tilde{\Sigma}_{t, A_t}  + \tilde{\Sigma}_{t, A_t} \Sigma_{0,A_t}^{-1} \Gamma_{A_t} \bar{\Sigma}_t \Gamma_{A_t}^\top \Sigma_{0,A_t}^{-1}  \tilde{\Sigma}_{t, A_t}$, and we use the following inequality in $(ii)$
\begin{align*}
  x
  = \frac{x}{\log(1 + x)} \log(1 + x)
  \leq \left(\max_{x \in [0, u]} \frac{x}{\log(1 + x)}\right) \log(1 + x)
  = \frac{u}{\log(1 + u)} \log(1 + x)\,,
\end{align*}
which holds for any $x \in [0, u]$, where constants $c_{\textsc{a}}$ and $c_1$ are derived as
\begin{align*}
  c_{\textsc{a}}
  = \frac{\kappa_x \lambda_{1, 0}}{\log(1 +  \sigma^{-2} \kappa_x \lambda_{1, 0})}\,, \quad
  c_1
  = \frac{c_\Psi}{\log(1 + \sigma^{-2} c_\Psi)}\,, \quad
  c_\Psi
  = \frac{\kappa_x \kappa_b \lambda_{1, 0}^2 \lambda_{1, \Psi}}{\lambda_{d, 0}^2}\,,
\end{align*}
The derivation of $c_{\textsc{a}}$ uses that
\begin{align*}
  X_t^\top \tilde{\Sigma}_{t, A_t} X_t
  \leq \lambda_1(\tilde{\Sigma}_{t, A_t}) \norm{X_t}^2
  \leq  \lambda_d^{-1}(\Sigma_{0,A_t}^{-1} + G_{t, A_t}) \kappa_x
  \leq \lambda_d^{-1}(\Sigma_{0,A_t}^{-1}) \kappa_x
  = \lambda_1(\Sigma_{0,A_t}) \kappa_x \leq  \lambda_{1, 0} \kappa_x \,.
\end{align*}
The derivation of $c_1$ follows from
\begin{align*}
  X_t^\top \tilde{\Sigma}_{t, A_t} \Sigma_{0,A_t}^{-1} \Gamma_{A_t} \bar{\Sigma}_t \Gamma_{A_t}^\top \Sigma_{0,A_t}^{-1} \tilde{\Sigma}_{t, A_t} X_t
  \leq \lambda_1^2(\tilde{\Sigma}_{t, A_t}) \lambda_1^2(\Sigma_{0,A_t}^{-1}) \lambda_1(\Gamma_{A_t} \bar{\Sigma}_t \Gamma_{A_t}^\top) \kappa_x
  & \leq \frac{\lambda_1^2(\Sigma_{0,A_t}) \lambda_{1, \Psi}\lambda_1(\Gamma_{A_t} \Gamma_{A_t}^\top)  \kappa_x}{\lambda_d^2(\Sigma_{0,A_t})}\,, \\
  &\leq \frac{\lambda_{1, 0}^2 \lambda_{1, \Psi}\kappa_b \kappa_x}{\lambda_{d,0}^2}\,.
\end{align*}
The first inequality follows from Weyl's inequality and the fact that $\lambda_1(\bar{\Sigma}_t) \leq \lambda_1(\Sigma_\Psi) = \lambda_{1, \Psi}  $ and $\lambda_1(\tilde{\Sigma}_{t, A_t}) \leq \lambda_1(\Sigma_{0, A_t})$. Now we focus on bounding the logarithmic terms in \eqref{eq:sequential proof decomposition}.

\textbf{First Term in \eqref{eq:sequential proof decomposition}} We first rewrite this term as
\begin{align*}
   \log(1 + \sigma^{-2} X_t^\top \tilde{\Sigma}_{t, A_t} X_t) &\stackrel{(i)}{=} \log\det(I_d + \sigma^{-2}\tilde{\Sigma}_{t, A_t}^\frac{1}{2} X_t X_t^\top \tilde{\Sigma}_{t, A_t}^\frac{1}{2})\,,\\
  &= \log\det(\tilde{\Sigma}_{t, A_t}^{-1} + \sigma^{-2} X_t X_t^\top) - \log\det(\tilde{\Sigma}_{t, A_t}^{-1})
  = \log\det(\tilde{\Sigma}_{t+1, A_t}^{-1}) - \log\det(\tilde{\Sigma}_{t, A_t}^{-1})\,,
\end{align*}
where $(i)$ follows from the Weinstein–Aronszajn identity. Then we sum over all rounds $ t \in [n]$, and get a telescoping that leads to
\begin{align*}
  \sum_{t = 1}^{n}
   &\log\det(I_d +\sigma^{-2} \tilde{\Sigma}_{t, A_t}^\frac{1}{2} X_t
  X_t^\top \tilde{\Sigma}_{t, A_t}^\frac{1}{2})=\sum_{t = 1}^{n} \log\det(\tilde{\Sigma}_{t+1, A_t}^{-1}) - \log\det(\tilde{\Sigma}_{t, A_t}^{-1})\,,\\
  &=\sum_{t = 1}^{n} \sum_{i=1}^K \log\det(\tilde{\Sigma}_{t+1, i}^{-1}) - \log\det(\tilde{\Sigma}_{t, i}^{-1})=\sum_{i=1}^K \sum_{t = 1}^{n} \log\det(\tilde{\Sigma}_{t+1, i}^{-1}) - \log\det(\tilde{\Sigma}_{t, i}^{-1})\,,\\
  &= \sum_{i=1}^K \log\det(\tilde{\Sigma}_{n+1, i}^{-1}) - \log\det(\tilde{\Sigma}_{1, i}^{-1})
  \stackrel{(i)}{=} \sum_{i=1}^K \log\det(\Sigma_{0,i}^\frac{1}{2} \tilde{\Sigma}_{n+1, i}^{-1} \Sigma_{0,i}^\frac{1}{2}) \stackrel{(ii)}{\leq} \sum_{i=1}^K d \log\left(\frac{1}{d} \operatorname{Tr}(\Sigma_{0,i}^\frac{1}{2} \tilde{\Sigma}_{n+1,i}^{-1}
  \Sigma_{0,i}^\frac{1}{2})\right)\\
  &\leq \sum_{i=1}^K d \log\left(1 + \frac{\kappa_x \lambda_1(\Sigma_{0,i})  n}{\sigma^2 d}\right)\leq K d \log\left(1 + \frac{\kappa_x \lambda_{1, 0} n}{\sigma^2 d}\right)\,.
\end{align*}
where $(i)$ follows from the fact that $\tilde{\Sigma}_{1, i} = \Sigma_{0,i}$ and we use the inequality of arithmetic and geometric means in $(ii)$.

\textbf{Second Term in \eqref{eq:sequential proof decomposition}}
First, we rewrite the covariance matrix of the effect posterior $\bar{\Sigma}_t$ using the compact notation introduced in \cref{subsec:regre_proof_1}. Precisely, it follows from \eqref{eq:effect_posterior_formulas} that
\begin{align}\label{eq:hyper_posterior_cov_rewritten}
    \bar{\Sigma}_t^{-1} \stackrel{(i)}{=} \Sigma_\Psi^{-1} + \sum_{i=1}^K \Gamma_{i}^{\top} \left( \Sigma_{0,i} + G_{t,i}^{-1} \right)^{-1}\Gamma_{i} &\stackrel{(ii)}{=}  \Sigma_\Psi^{-1} + \sum_{i=1}^K\Gamma_i^{\top} \left( \Sigma_{0, i}^{-1} - \Sigma_{0, i}^{-1} (G_{t, i} + \Sigma_{0, i}^{-1})^{-1} \Sigma_{0, i}^{-1} \right)\Gamma_i \,,\nonumber\\
    &\stackrel{(iii)}{=}  \Sigma_\Psi^{-1} + \sum_{i=1}^K\Gamma_i^{\top} \left( \Sigma_{0, i}^{-1} - \Sigma_{0, i}^{-1} \tilde{\Sigma}_{t, i} \Sigma_{0, i}^{-1} \right)\Gamma_i \,.
\end{align}
The equality $(i)$ requires $G_{t,i}$ to be invertible and was only given in the main manuscript to ease the exposition. In our proof, we use $(ii)$ and $(iii)$ which are the same; $(iii)$ follows from plugging $\tilde{\Sigma}_{t, i} = (G_{t, i} + \Sigma_{0, i}^{-1})^{-1}$ in $(ii)$. Now let $u = \sigma^{-1} \tilde{\Sigma}_{t, A_t}^{\frac{1}{2}} X_t$. Then it follows from $(iii)$ in \eqref{eq:hyper_posterior_cov_rewritten} that
\begin{align}
  \bar{\Sigma}_{t + 1}^{-1} - \bar{\Sigma}_t^{-1} & = \Gamma_{A_t}^\top \left(\Sigma_{0,A_t}^{-1} - \Sigma_{0,A_t}^{-1} (\tilde{\Sigma}_{t, A_t}^{-1} + \sigma^{-2} X_t X_t^\top)^{-1} \Sigma_{0,A_t}^{-1} -
  (\Sigma_{0,A_t}^{-1} - \Sigma_{0,A_t}^{-1} \tilde{\Sigma}_{t, A_t} \Sigma_{0,A_t}^{-1})\right)\Gamma_{A_t}\,,
  \nonumber \\
  & = \Gamma_{A_t}^\top \left(\Sigma_{0,A_t}^{-1} (\tilde{\Sigma}_{t, A_t} - (\tilde{\Sigma}_{t, A_t}^{-1} + \sigma^{-2} X_t X_t^\top)^{-1}) \Sigma_{0,A_t}^{-1}\right)\Gamma_{A_t}\,,
  \nonumber \\
  & = \Gamma_{A_t}^\top \left(\Sigma_{0,A_t}^{-1} \tilde{\Sigma}_{t, A_t}^{\frac{1}{2}}
  (I_d - (I_d + \sigma^{-2} \tilde{\Sigma}_{t, A_t}^{\frac{1}{2}} X_t X_t^\top \tilde{\Sigma}_{t, A_t}^{\frac{1}{2}})^{-1})
  \tilde{\Sigma}_{t, A_t}^{\frac{1}{2}} \Sigma_{0,A_t}^{-1}\right)\Gamma_{A_t}\,,
  \nonumber \\
  & = \Gamma_{A_t}^\top \left(\Sigma_{0,A_t}^{-1} \tilde{\Sigma}_{t, A_t}^{\frac{1}{2}}
  (I_d - (I_d + u u^\top)^{-1})
  \tilde{\Sigma}_{t, A_t}^{\frac{1}{2}} \Sigma_{0,A_t}^{-1}\right)\Gamma_{A_t}\,,
  \nonumber \\
  & \stackrel{(i)}{=} \Gamma_{A_t}^\top \left(\Sigma_{0,A_t}^{-1} \tilde{\Sigma}_{t, A_t}^{\frac{1}{2}}
  \frac{u u^\top}{1 + u^\top u}
  \tilde{\Sigma}_{t, A_t}^{\frac{1}{2}} \Sigma_{0,A_t}^{-1}\right)\Gamma_{A_t} = \sigma^{-2} \Gamma_{A_t}^\top \left( \Sigma_{0,A_t}^{-1} \tilde{\Sigma}_{t, A_t}
  \frac{X_t X_t^\top}{1 + u^\top u}
  \tilde{\Sigma}_{t, A_t} \Sigma_{0,A_t}^{-1}\right)\Gamma_{A_t}\,.
  \label{eq:linear telescoping}
\end{align}
In $(i)$ we use the Sherman-Morrison formula. Moreover, we have that $\norm{X_t}^2 \leq \kappa_x$. Therefore,
\begin{align*}
  1 + u^\top u
  = 1 + \sigma^{-2} X_t^\top \tilde{\Sigma}_{t, A_t} X_t
  \leq 1 + \sigma^{-2} \kappa_x \lambda_1(\Sigma_{0,A_t}) \leq  1 + \sigma^{-2} \kappa_x \lambda_{1, 0} = c_2\,.
\end{align*}
This allows us to bound the second logarithmic term in \eqref{eq:sequential proof decomposition} as
\begin{align*}
&\log(1 + \sigma^{-2} X_t^\top \tilde{\Sigma}_{t, A_t} \Sigma_{0,A_t}^{-1} \Gamma_{A_t} \bar{\Sigma}_t \Gamma_{A_t}^\top
  \Sigma_{0,A_t}^{-1} \tilde{\Sigma}_{t, A_t} X_t) \,,\\   & \quad \stackrel{(i)}{\leq} c_2\log(1 + c_2^{-1} \sigma^{-2}  X_t^\top \tilde{\Sigma}_{t, A_t} \Sigma_{0,A_t}^{-1} \Gamma_{A_t} \bar{\Sigma}_t \Gamma_{A_t}^\top
  \Sigma_{0,A_t}^{-1} \tilde{\Sigma}_{t, A_t} X_t)\,, \\
  & \quad \stackrel{(ii)}{=} c_2 \log\det(I_{Ld} +
  c_2^{-1}\sigma^{-2} \bar{\Sigma}^\frac{1}{2}_t \Gamma_{A_t}^\top \Sigma_{0,A_t}^{-1} \tilde{\Sigma}_{t, A_t} X_t X_t^\top
  \tilde{\Sigma}_{t, A_t} \Sigma_{0,A_t}^{-1} \Gamma_{A_t} \bar{\Sigma}^\frac{1}{2}_t ) \,, \\
  & \quad \stackrel{(iii)}{=} c_2 \left[\log\det(\bar{\Sigma}_t^{-1} +
  c_2^{-1}\sigma^{-2} \Gamma_{A_t}^\top \Sigma_{0,A_t}^{-1} \tilde{\Sigma}_{t, A_t} X_t X_t^\top \tilde{\Sigma}_{t, A_t} \Sigma_{0,A_t}^{-1} \Gamma_{A_t}) -
  \log\det(\bar{\Sigma}_t^{-1})\right] \,, \\
  & \quad \stackrel{(iv)}{\leq} c_2 \left[\log\det(\bar{\Sigma}_t^{-1} +
  \sigma^{-2} \Gamma_{A_t}^\top \Sigma_{0,A_t}^{-1} \tilde{\Sigma}_{t, A_t} \frac{X_t X_t^\top}{1+u^\top u}  \tilde{\Sigma}_{t, A_t} \Sigma_{0,A_t}^{-1} \Gamma_{A_t} ) -
  \log\det(\bar{\Sigma}_t^{-1})\right] \,, \\
  & \quad \stackrel{(v)}{=} c_2 \left[\log\det(\bar{\Sigma}_{t + 1}^{-1}) -
  \log\det(\bar{\Sigma}_t^{-1})\right]\,.
\end{align*}
Here $(i)$ follows from the fact that $\log(1 + x) \leq c_2 \log(1 + c_2^{-1}x)$ for any $x \geq 0$ and $c_2 \geq 1$. In $(ii)$, we use the Weinstein–Aronszajn identity. In $(iii)$, we use the $\log$ product formula and the fact that the $\det$ is a multiplicative map. In $(iv)$, we use that $ c_2^{-1} \leq 1 / (1 + u^\top u)$. Finally, $(v)$ follows from \eqref{eq:linear telescoping}. Now we sum over all rounds and get telescoping
\begin{align*}
   \sum_{t = 1}^{n} 
  &\log\det(I_{Ld} +  \sigma^{-2}
  \bar{\Sigma}^\frac{1}{2}_t  \Gamma_{A_t}^\top \Sigma_{0,A_t}^{-1} \tilde{\Sigma}_{t, A_t}  X_t X_t^\top
  \tilde{\Sigma}_{t, A_t} \Sigma_{0,A_t}^{-1} \bar{\Sigma}^\frac{1}{2}_t \Gamma_{A_t})\,,\\
  &\leq c_2 \left[\log\det(\bar{\Sigma}_{n+1}^{-1}) -
  \log\det(\bar{\Sigma}_1^{-1})\right] = c_2 \log\det(\Sigma_\Psi^\frac{1}{2} \bar{\Sigma}_{n + 1}^{-1} \Sigma_\Psi^\frac{1}{2})   \stackrel{(i)}{\leq} c_2 L d \log\left(\frac{1}{Ld} \operatorname{Tr}(\Sigma_\Psi^\frac{1}{2} \bar{\Sigma}_{n+1}^{-1}
  \Sigma_\Psi^\frac{1}{2})\right) \,,\\
  &  \stackrel{(ii)}{\leq} c_2 L d
  \log(\lambda_1(\Sigma_\Psi^\frac{1}{2} \bar{\Sigma}_{n+1}^{-1} \Sigma_\Psi^\frac{1}{2}))   \stackrel{(iii)}{\leq} c_2 L d \log\Big(1+K \kappa_b \lambda_{1, \Psi} \big( \frac{1}{\lambda_{d, 0}} - \frac{1}{\lambda_{1, 0}^2\big(\frac{\kappa_x n}{\sigma^2} + \frac{1}{\lambda_{d, 0}} \big)} \big)\Big)\,,\\
\end{align*}
In $(i)$ we use the inequality of arithmetic and geometric means. In $(ii)$ we bound all eigenvalues in the trace by the maximum eigenvalue. In $(iii)$ we use the result in \eqref{eq:max_eigenvalue_effect_covariance}. We combine the upper bounds for both logarithmic terms and get
\begin{align*}
  \E{}{\sum_{t = 1}^n \normw{X_t}{\hat{\Sigma}_{t, A_t}}^2}
  \leq K d c_{\textsc{a}} \log\big(1 + \frac{\kappa_x \lambda_{1, 0}  n}{\sigma^2 d}\big) +
  Ld c_1 c_2 \log\Big(1+K \kappa_b \lambda_{1, \Psi} \big( \frac{1}{\lambda_{d, 0}} - \frac{1}{\lambda_{1, 0}^2\big(\frac{\kappa_x n}{\sigma^2} + \frac{1}{\lambda_{d, 0}} \big)} \big)\Big)\,.
\end{align*}
Finally, we set $c_{\textsc{e}} = c_1 c_2$,  which concludes the proof for the general case. To retrieve the result in \cref{thm:regret}, we only need to set $\lambda_{1, 0} = \lambda_{d, 0} = \sigma_0^2$ and $\lambda_{1, \Psi} = \sigma_\Psi^2$ since we assumed that $\Sigma_\Psi = \sigma_\Psi^2 I_{Ld}$ and that $\Sigma_{0, i} = \sigma_0^2 I_d$ for any $i \in [K]$. In that case, the second term simplifies as
\begin{align*}
     \log\Big(1+K \kappa_b \lambda_{1, \Psi} \big( \frac{1}{\lambda_{d, 0}} - \frac{1}{\lambda_{1, 0}^2\big(\frac{\kappa_x n}{\sigma^2} + \frac{1}{\lambda_{d, 0}} \big)} \big)\Big)& =  \log\Big(1+K \kappa_b \sigma_\Psi^2 \big( \frac{1}{\sigma_{0}^2} - \frac{1}{\sigma_{0}^4\big(\frac{\kappa_x n}{\sigma^2} + \frac{1}{\sigma_{0}^2} \big)} \big)\Big)\,,\\
     & =  \log\big(1+K \kappa_b \sigma_\Psi^2 \frac{n\kappa_x}{ n \kappa_x \sigma_{0}^2  + \sigma^2} \big)\,.
\end{align*}
\end{proof}

\section{EXTENSIONS}\label{app:extensions}
Here we present and discuss in detail possible extensions of \alg. We start with the factored approximation of the effect posteriors (\cref{app:posterior_approximation}) which improves computational efficiency with minimal impact on the empirical regret (\cref{sec:experiments,app:experiments}). We provide closed-form solutions for the factored effect posteriors in all the settings that we consider in this paper. While \cref{alg:ts} can be applied to the general two-level hierarchical setting introduced in \cref{sec:setting}, we only focused on cases where the dependencies of action parameters with effect parameters are captured through a linear combination in the theoretical analysis and experiments. In \cref{subsec:mixed_model}, we provide an extension of our analysis to the case where the weights $b_{i, \ell}$ are replaced by matrices $\C_{i, \ell}$. Moreover, in \cref{subsec:non_linear_mixed_model}, we present a way to introduce non-linearity in effects. In \cref{subsec:multi_level}, we motivate \emph{deeper} hierarchies, and provide intuition on the corresponding regret.

\subsection{Factored Effect Posteriors}\label{app:posterior_approximation}

As discussed earlier, the number of actions $K$ is often much larger than the number of effect parameters $L$. However, $L$ can also be large. In this section, we show how to improve the computational efficiency of \alg using factored distributions \citep{Bishop2006}. Consider the practical models in \eqref{eq:contextual_gaussian_model} and \eqref{eq:contextual_bernoulli_model} where the effect posterior is a multivariate Gaussian $Q_{t} = \cN(\bar{\mu}_{t}, \bar{\Sigma}_{t})$ (\cref{sec:linear bandit posterior,sec:glb bandit posterior}). Now suppose that it factorizes, that is $Q_t(\Psi) = \prod_{\ell=1}^L Q_{t, \ell}(\psi_\ell)$, where $Q_{t, \ell}$ is the effect posterior of the $\ell$-th effect parameter $\psi_{*, \ell}$. Then for any round $t \in [n]$, the effect posterior $Q_{t, \ell}$ is also a multivariate Gaussian $Q_{t, \ell} = \cN(\bar{\mu}_{t, \ell}, \bar{\Lambda}_{t, \ell}^{-1})$, where $\bar{\Lambda}_{t, \ell}$ is the $\ell$-th $d \times d$ diagonal block of $\bar{\Sigma}_{t}^{-1}$, and $\bar{\mu}_{t, \ell} \in \real^d$ are such that $\bar{\mu}_{t} = (\bar{\mu}_{t, \ell})_{\ell \in [L]}$. This allows for individual sampling of the effect parameters, which improves the space and time complexity. Next we provide the factored effect posterior for the mixed-effect bandit settings considered in our paper.

\textbf{Mixed-Effect Linear Bandit} Consider the model in \eqref{eq:contextual_gaussian_model}, we have that for any round $t \in [n]$, the $\ell$-th effect posterior $Q_{t, \ell}$ is also a multivariate Gaussian $Q_{t, \ell} = \cN(\bar{\mu}_{t, \ell}, \bar{\Sigma}_{t, \ell})$, where
\begin{talign}
    &\bar{\Sigma}_{t, \ell}^{-1}= \Sigma_{\psi_\ell}^{-1} +  \sum_{i \in [K]} b_{i, \ell}^2 \left( \Sigma_{0, i} + G_{t, i}^{-1} \right)^{-1}\,,\nonumber\\
    \bar{\mu}_{t, \ell} &= \bar{\Sigma}_{t, \ell} \left( \Sigma_{\psi_\ell}^{-1}  \mu_{\psi_\ell} +\sum_{i \in [K]}   b_{i, \ell} \left( (\Sigma_{0, i} + G_{t, i}^{-1})^{-1} G_{t, i}^{-1} B_{t, i}\right)\right)\,.
\end{talign}
$\Sigma_{\psi_\ell}$ is the $\ell$-th $d \times d$ diagonal block of $\Sigma_\Psi$, and  $\mu_{\psi_\ell} \in \real^d$ are such that $\mu_\Psi = (\mu_{\psi_\ell})_{\ell \in [L]}$.

\textbf{Mixed-Effect Generalized Linear Bandit} Consider the model in \eqref{eq:contextual_bernoulli_model}, we have that for any round $t \in [n]$, the $\ell$-th effect posterior $Q_{t, \ell}$ is also a multivariate Gaussian $Q_{t, \ell} = \cN(\bar{\mu}_{t, \ell}, \bar{\Sigma}_{t, \ell})$, where
\begin{talign}
    &\bar{\Sigma}_{t, \ell}^{-1}= \Sigma_{\psi_\ell}^{-1} +  \sum_{i \in [K]} b_{i, \ell}^2 \left( \Sigma_{0, i} + (G^{\textsc{lap}}_{t, i})^{-1} \right)^{-1}\,,\nonumber\\
    \bar{\mu}_{t, \ell} &= \bar{\Sigma}_{t, \ell} \left( \Sigma_{\psi_\ell}^{-1}  \mu_{\psi_\ell} +\sum_{i \in [K]}   b_{i, \ell} \left( (\Sigma_{0, i} + (G^{\textsc{lap}}_{t, i})^{-1})^{-1} \mu^\textsc{lap}_{t, i}\right)\right)\,.
\end{talign}
$\Sigma_{\psi_\ell}$ is the $\ell$-th $d \times d$ diagonal block of $\Sigma_\Psi$, and  $\mu_{\psi_\ell} \in \real^d$ are such that $\mu_\Psi = (\mu_{\psi_\ell})_{\ell \in [L]}$.

\textbf{Mixed-Effect Multi-Armed Bandit} Consider the model in \eqref{eq:gaussian_linear_bandits_model}, we have that  for any round $t \in [n]$, the effect posterior $Q_{t, \ell}$ is a \emph{univariate} Gaussian $Q_{t, \ell} = \cN(\bar{\mu}_{t, \ell}, \bar{\sigma}_{t, \ell}^2)$, where
\begin{align}
    &\bar{\sigma}_{t, \ell}^{-2}= \sigma_{\psi_\ell}^{-2} + \sum_{i\in[K]} b_{i, \ell}^2 \frac{N_{t, i}}{N_{t, i} \sigma_{0, i}^2 + \sigma^2}\,,\nonumber\\
    \bar{\mu}_{t, \ell} &= \bar{\sigma}_{t, \ell}^2 \left( \sigma_{\psi_\ell}^{-2}  \mu_{\psi_\ell} + \sum_{i\in[K]} b_{i, \ell} \frac{B_{t, i}}{N_{t, i}\sigma_{0, i}^2+\sigma^2}\right)\,.
\end{align}
$\sigma_{\psi_\ell}^2 > 0$ is the $\ell$-th diagonal entry of $\Sigma_\Psi$, and $\mu_{\psi_\ell} \in \real$ is the $\ell$-th entry of $\mu_\Psi$.

\begin{proof}

To reduce clutter, we consider a fixed round $t \in [n]$, and drop subindexing by $t$. It follows that $Q=\cN(\bar{\mu}, \bar{\Sigma})$ corresponds to the effect posterior $Q_t=\cN(\bar{\mu}_t, \bar{\Sigma}_t)$ for some round $t$. Here we restrict the family of effect posteriors $Q$ to factored distributions. Precisely, we first partition the elements of $\Psi_*=(\psi_{*, \ell})_{\ell \in [L]}$ into $L$ disjoint $d$-dimensional groups where each group corresponds to an effect parameter $\psi_{*, \ell}$.  We then suppose that $Q$ factorizes across the $L$ effect parameters, that is $Q(\Psi) = \prod_{\ell \in [L]} Q_\ell(\psi_\ell),$ where $Q_\ell$ are obtained using variational inference techniques \citep{Bishop2006} as we show next. First, we know that $Q(\Psi) = \cN(\bar{\mu}, \bar{\Sigma})$, where $\bar{\mu} \in \real^{Ld}$ and $\bar{\Sigma} \in \real^{Ld \times Ld}$. We write the mean and covariance by blocks as $\bar{\mu} = (\bar{\mu}_\ell)_{\ell \in [L]}$ and $\bar{\Sigma} = (\bar{\Sigma}_{i, j})_{(i, j) \in [L] \times [L]}$, such that $\bar{\mu}_\ell \in \real^d$ and $\bar{\Sigma}_{i, j} \in \real^{d \times d}$. Now fix $\ell \in [L]$, from know results \citep{Bishop2006} the optimal factor $Q_\ell$ that optimizes the Kullback-Leibler divergence satisfies
\begin{align}\label{eq:var_inference_1}
    Q_\ell(\psi_\ell) \propto \exp{\left(\E{j \neq \ell}{\log Q(\Psi)}\right)}\,,
\end{align}
where $\E{j \neq \ell}{\cdot}$ denotes an expectation with respect to the distributions $Q_j$ such that $j \neq \ell$. Let $\bar{\Lambda} = \bar{\Sigma}^{-1} =  (\bar{\Lambda}_{i, j})_{(i, j) \in [L] \times [L]}$, the expectation can be computed as
\begin{align}\label{eq:var_inference_2}
    Q_\ell(\psi_\ell) &\propto \exp{\left(\E{j \neq \ell}{ -\frac{1}{2} (\psi_\ell - \bar{\mu}_\ell)^\top \bar{\Lambda}_{\ell, \ell}  (\psi_\ell - \bar{\mu}_\ell) + \sum_{j \neq \ell} (\psi_\ell - \bar{\mu}_\ell)^\top \bar{\Lambda}_{\ell, j}  (\psi_j - \bar{\mu}_j)}\right)}\,,\nonumber\\
   &\propto \exp{\left(\E{j \neq \ell}{ -\frac{1}{2} \psi_\ell^\top  \bar{\Lambda}_{\ell, \ell} \psi_\ell + \psi_\ell^\top \bar{\Lambda}_{\ell, \ell} \bar{\mu}_\ell - \psi_\ell^\top \sum_{j \neq \ell} \bar{\Lambda}_{\ell, j} (\E{}{\psi_j} -  \bar{\mu}_j)}\right)}\,,\nonumber\\
   &\propto \exp{\left(\E{j \neq \ell}{ -\frac{1}{2} \psi_\ell^\top  \bar{\Lambda}_{\ell, \ell} \psi_\ell + \psi_\ell^\top \left( \bar{\Lambda}_{\ell, \ell} \bar{\mu}_\ell -  \sum_{j \neq \ell} \bar{\Lambda}_{\ell, j} (\E{}{\psi_j} -  \bar{\mu}_j)\right)}\right)}\,.
\end{align}
Thus, we have that 
\begin{align}\label{eq:var_inference_3}
    Q_\ell(\psi_\ell) & = \cN\left(\psi_\ell; m_\ell, \bar{\Lambda}_{\ell, \ell}^{-1}  \right)\,,
\end{align}
where $m_\ell = \bar{\mu}_\ell - \bar{\Lambda}_{\ell, \ell}^{-1} \sum_{j \neq \ell}  \bar{\Lambda}_{\ell, j} (\E{}{\psi_j} -  \bar{\mu}_j)$. These solutions are coupled since the optimal factor $Q_\ell$ depends on the other optimal factors $Q_j$ for $j \neq \ell$. However, we can provide a non-coupled solution in this Gaussian case if we set $m_\ell = \E{}{\psi_\ell} = \bar{\mu}_\ell$ for all $\ell \in [L]$; in which case we get that $Q_\ell(\psi_\ell) = \cN\left(\psi_\ell; \bar{\mu}_\ell, \bar{\Lambda}_{\ell, \ell}^{-1} \right)$ for all $\ell \in [L]$. To summarize, we showed that if we suppose that the effect posterior factorizes, that is $Q(\Psi) = \prod_{\ell \in [L]} Q_\ell(\psi_\ell)$, and that $Q= \cN(\bar{\mu}, \bar{\Lambda}^{-1})$, then the optimal factors $Q_\ell$ are also Gaussians $Q_\ell = \cN\left(\bar{\mu}_\ell, \bar{\Lambda}_{\ell, \ell}^{-1} \right)$, where $\bar{\mu}_\ell \in \real^d$ and  $ \bar{\Lambda}_{\ell, \ell} \in \real^{d \times d}$ are such that $\bar{\mu} = (\bar{\mu}_\ell)_{\ell \in [L]}$ and $ \bar{\Lambda} =  (\bar{\Lambda}_{i, j})_{(i, j) \in [L]\times [L]}$. Finally, to get the desired results, we simply retrieve the respective $\bar{\mu}_\ell \in \real^d$ and  $ \bar{\Lambda}_{\ell, \ell} \in \real^{d \times d}$ from the mean and inverse covariance of the exact posterior of either the model in \eqref{eq:contextual_gaussian_model}, \eqref{eq:contextual_bernoulli_model} or \eqref{eq:gaussian_linear_bandits_model}.
\end{proof}

\subsection{Finer Linear Effects} \label{subsec:mixed_model}
An effective way to capture fine-grained dependencies is to assume that the parameter of action $i$ depends on effect parameters through $L$ \emph{known} matrices $\C_{i, \ell} \in \real^{d \times d}$ as 
\begin{align*}
    \theta_{*, i} \mid \Psi_* \sim P_{0, i}\Big(\cdot \mid \sum_{\ell=1}^L \C_{i, \ell} \psi_{*, \ell}\Big)\,.
\end{align*}
This generalizes the setting considered in our analysis (\cref{subsec:contextual_gaussian_bandits}), which corresponds to the case where $\C_{i, \ell} = b_{i, \ell} I_d$. We first make the observation that $\sum_{\ell=1}^L \C_{i, \ell} \psi_{*, \ell} = \C_i \Psi_*$, where $\C_i = [\C_{i, 1}, \ldots, \C_{i, L}] \in \real^{d \times Ld}$. It follows that for any round $t \in [n]$, the joint effect posterior is a multivariate Gaussian $Q_t = \cN(\bar{\mu}_t, \bar{\Sigma}_t)$, where
\begin{align}
    \bar{\Sigma}_t^{-1}&= \Sigma_{\Psi}^{-1} +  \sum_{i=1}^K \C_i^\top \left( \Sigma_{0, i} + G_{t, i}^{-1} \right)^{-1} \C_i\,,\nonumber\\
    \bar{\mu}_t &= \bar{\Sigma}_t \Big( \Sigma_{\Psi}^{-1}  \mu_{\Psi} +\sum_{i=1}^K  \C_i^\top \left( (\Sigma_{0, i} + G_{t, i}^{-1})^{-1} G_{t, i}^{-1} B_{t, i}\right)\Big)\,.
\end{align}
Moreover, for any round $t \in [n]$ and action $i \in [K]$, the action posterior is a multivariate Gaussian $P_{t, i}(\cdot \mid \Psi_t) = \cN(\cdot;\tilde{\mu}_{t, i}, \tilde{\Sigma}_{t, i})$, where
\begin{align}
    \tilde{\Sigma}_{t, i}^{-1} &=  G_{t, i} + \Sigma_{0, i}^{-1}\,, \qquad
    \tilde{\mu}_{t, i} = \tilde{\Sigma}_{t, i} \Big( B_{t, i} +  \Sigma_{0, i}^{-1} \Big(\sum_{\ell=1}^L \C_{i, \ell} \psi_{t, \ell}\Big)\Big)\,.
\end{align}
Finally, our regret proof extends smoothly leading to a Bayes regret upper bound similar to the one we derived in \cref{thm:regret}. The corresponding Bayes regret is given in the following proposition. 

\begin{proposition}
\label{prop:regret_extension} For any $\delta \in (0, 1)$, the Bayes regret of \emph{\alg}, for the mixed-effect model in \cref{subsec:mixed_model}, is bounded as
\begin{align*}
  \mathcal{B}\mathcal{R}(n)
  \leq \sqrt{2 n 
  \left( \mathcal{R}^{\textsc{a}}(n) + \mathcal{R}^{\textsc{e}}(n) \right) \log(1 / \delta)} +
  c n\delta\,,
\end{align*}
where $c = \sqrt{\frac{2}{\pi} \kappa_x(\sigma_0^2 + \kappa_c \sigma_\Psi^2 )}K\,, \ \kappa_c = \max_{i \in [K]}\lambda_1(C_i^\top C_i)\,,$
\begin{talign*}
& \mathcal{R}^{\textsc{a}}(n) = dK c_\textsc{a} \log\big(1 + \frac{n\kappa_x\sigma_0^2}{d \sigma^2} \big)\,, \, c_\textsc{a} = \frac{ \kappa_x\sigma^2_{0}}{\log\big(1 + \frac{\kappa_x\sigma_0^2}{\sigma^{2}}\big)}\,,\\ 
&\mathcal{R}^{\textsc{e}}(n) = dL c_\textsc{e} \log\big(1 +  \frac{K \kappa_c \sigma^2_{\Psi} }{\sigma^2_{0} + \frac{\sigma^2}{n \kappa_x}}\big)\,,  \, c_\textsc{e}= \frac{\kappa_x \kappa_c \sigma_\Psi^2 \big(1 + \frac{\kappa_x\sigma_0^2}{\sigma^{2}}  \big)}{\log\big(1 + \frac{\kappa_x \kappa_c \sigma_\Psi^2}{\sigma^{2}}\big)}\,.&
\end{talign*}
\end{proposition}
The interpretation of this result is similar to \cref{thm:regret}. The only difference is that sparsity is now captured through $\kappa_{c}$.

\subsection{Non-Linear Effects} \label{subsec:non_linear_mixed_model}

Here the dependence of effect and action parameters are generated as in \eqref{eq:contextual_gaussian_model}, except that a non-linear function $g(\cdot)$ is applied to the linear combination $\sum_{\ell =1}^L b_{i, \ell} \psi_{*, \ell}$. An example of $g$ is the sigmoid function, and the whole model is
\begin{align}\label{eq:contextual_non_linear_effects}
    \Psi_{*} & \sim \cN(\mu_{\Psi}, \Sigma_{\Psi})\,,\\ 
    \theta_{*, i} \mid  \Psi_{*}& \sim \cN\Big( g\Big(\sum_{\ell =1}^L b_{i, \ell} \psi_{*, \ell}\Big) , \,  \Sigma_{0, i}\Big)\,, & \forall i \in [K]\,,\nonumber\\ 
    Y_t  \mid X_t, \theta_{*, A_t} & \sim \cN(X_t^\top \theta_{*, A_t} ,  \sigma^2)\,, & \forall t \in [n]\,. \nonumber
\end{align} 
The action posteriors has closed-form solution. Precisely, for any round $t \in [n]$, action $i \in [K]$, and effect parameters $\Psi_t$, the action posterior is a multivariate Gaussian $P_{t, i}(\cdot \mid \Psi_t) = \cN(\cdot;\tilde{\mu}_{t, i}, \tilde{\Sigma}_{t, i})$, where
\begin{align} 
   \tilde{\Sigma}_{t, i}^{-1}  &= \Sigma_{0, i}^{-1} + G_{t, i}\,, \\
    \tilde{\mu}_{t, i} &= \tilde{\Sigma}_{t, i} \Big( \Sigma_{0, i}^{-1} g\Big(\sum_{\ell =1}^L b_{i, \ell} \psi_{t, \ell}\Big) + B_{t, i}  \Big)\,.\nonumber
\end{align} 
The action posterior is the same as in \cref{thm:pti_gaussian} except that the prior term $\sum_{\ell =1}^Lb_{i, \ell} \psi_{t, \ell}$ is now replaced by $g\big(\sum_{\ell =1}^Lb_{i, \ell} \psi_{t, \ell}\big)$. The effect posterior does not have a closed-form solution and can be approximated using the Laplace approximation similarly to \cref{sec:glb bandit posterior}.

\subsection{Beyond Two-Level Hierarchies} \label{subsec:multi_level}
To motivate deeper hierarchies \citep{hong22deep}, consider the problem of page construction in movie streaming services where $J$ movies are organized into $L$ categories. First, a category $\ell \in [L]$ is associated with a parameter $\psi_{*, \ell} \in \real^d$. Moreover, each movie $j \in [J]$ is associated with a parameter $\phi_{*, j} \in \real^d$, which is a combination of category parameters $\psi_{*, \ell}$ weighted by scalars that quantify how related is the movie $j$ to each category. Finally, page layouts are actions and they are seen as lists (or slates) of movies. Each page layout $i \in [K]$ is associated with an action parameter $\theta_{*, i}$ which is also a combination of movies parameters $\phi_{*, j}$ weighted by a scalar that quantifies position bias. Precisely, this scalar is set to 0 if the corresponding movie is not present in the page, and it has high value if the movie is placed in a position with high visibility. This setting induces a three-level hierarchical model, for which we give a Gaussian example below.
\begin{align}\label{eq:multi_level_model}
    \Psi_{*} & \sim \cN(\mu_{\Psi}, \Sigma_{\Psi})\,, & \nonumber\\ 
    \phi_{*, j} \mid  \Psi_{*} & \sim \cN\left( \sum_{\ell=1}^L b_{j, \ell} \psi_{*, \ell} , \,  \Sigma_{\phi, j}\right)\,, & \forall j \in [J]\,, \nonumber\\ 
    \theta_{*, i} \mid   \Phi_{*}& \sim \cN\left( \sum_{j=1}^J w_{i, j} \phi_{*, j} , \,  \Sigma_{0, i}\right)\,, & \forall i \in [K]\,, \nonumber\\ 
    Y_t  \mid X_t, \theta_{*, A_t} & \sim \cN(X_t^\top \theta_{*, A_t} ,  \sigma^2)\,, & \forall t \in [n]\,,
\end{align}
where $\Psi_*= (\psi_{*, \ell})_{\ell \in [L]} \in \real^{Ld}$ and $\Phi_*= (\phi_{*, j})_{j \in [J]} \in \real^{Jd}$. \alg samples hierarchically as follows. First, we sample $\Psi_t$ from the posterior of $\Psi_* \mid H_t$. We then sample $\Phi_{t}$ from the posterior of $\Phi_{*} \mid \Psi_*, H_t$. Finally, we sample $\theta_{t, i}$ individually from the posterior of $\theta_{*, i} \mid \Phi_*, H_{t, i}$. We expect the upper bound of the Bayes regret of \eqref{eq:multi_level_model} following our analysis to be decomposed in three terms $\tilde{\mathcal{O}}\Big(\sqrt{n(\mathcal{R}^{\textsc{a}}(n)  + \mathcal{R}^{\textsc{e}}_1(n) + \mathcal{R}^{\textsc{e}}_2(n))}\Big)\,,$ where $\mathcal{R}^{\textsc{a}}(n)  = \tilde{\mathcal{O}}(Kd)$, $\mathcal{R}^{\textsc{e}}_1(n) = \tilde{\mathcal{O}}(Jd)$, and $\mathcal{R}^{\textsc{e}}_2(n) = \tilde{\mathcal{O}}(Ld)$.

\section{ADDITIONAL EXPERIMENTS}
\label{app:experiments}

We provide additional experiments where we evaluate \alg using synthetic and real-world problems, and compare it to baselines that either ignore or partially use effect parameters. In each plot, we report the averages and standard errors of the quantities. Both settings are described in \cref{sec:experiments}. 



\subsection{Synthetic Experiments}\label{app:synthetic}

In \cref{fig:app_synthetic_regret_lin,fig:app_synthetic_regret_log}, we report regret from 12 experiments with horizon $n=5000$, where we vary $K$ and $d$ and use both linear and logistic rewards. For the linear setting, we compare \alglin (\cref{sec:linear bandit posterior}), \linucb \citep{abbasi-yadkori11improved}, \lints \citep{agrawal13thompson} and \hierts \citep{hong22hierarchical}. For the logistic setting, we compare \algglm (\cref{sec:glb bandit posterior}), \alglin (\cref{sec:linear bandit posterior}), \ucbglm \citep{li17provably}, \glmts \citep{chapelle11empirical} and \hierts \citep{hong22hierarchical}. We also include the factored approximation of \alg (\alglinfa and \algglmfa). In all experiments, we observe that \alglin and \algfa outperform other baselines that ignore the effect parameters or incorporate them partially. We also notice that the gain in performance becomes smaller when $K/L$ decreases.

\begin{figure}[t!]
  \centering
  \includegraphics[width=\linewidth]{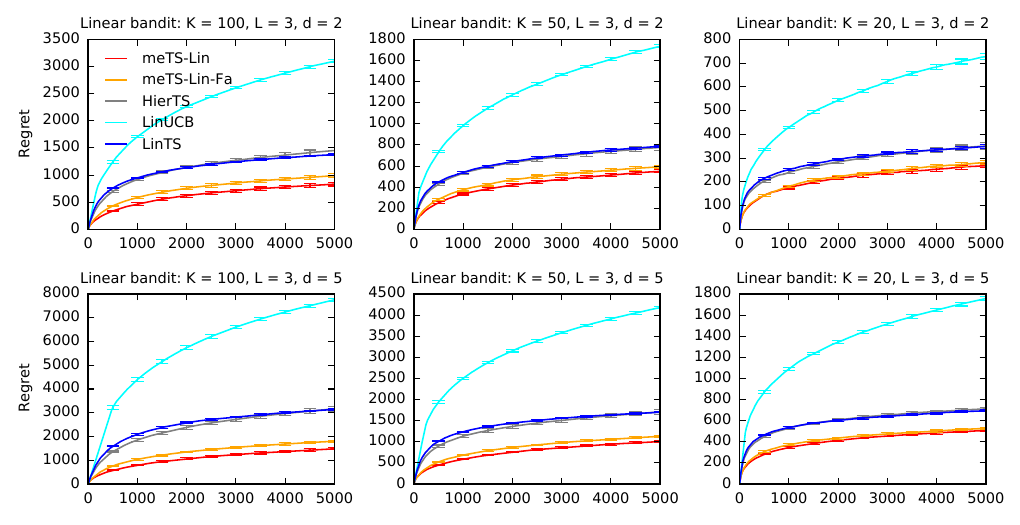}
  \caption{Regret of \alglin on synthetic linear bandit problems with varying feature dimension $d \in \{2, 5\}$ and number of actions $K\in \{20, 50, 100\}$.} 
  \label{fig:app_synthetic_regret_lin}
\end{figure}

\begin{figure}[t!]
  \centering
  \includegraphics[width=\linewidth]{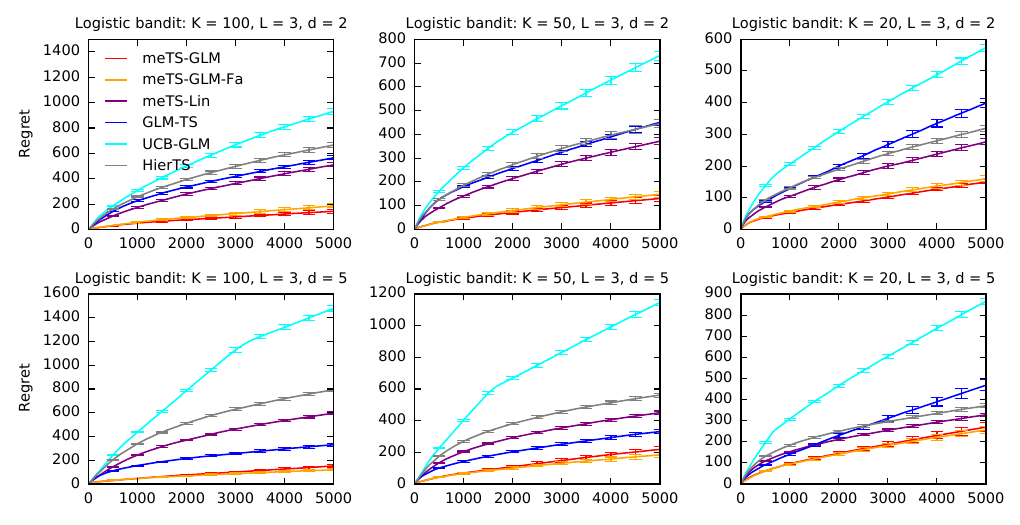}
  \caption{Regret of \algglm on synthetic logistic bandit problems with varying feature dimension $d \in \{2, 5\}$ and number of actions $K\in \{20, 50, 100\}$.} 
  \label{fig:app_synthetic_regret_log}
\end{figure}

\subsection{MovieLens Experiments}\label{app:movielens}

\begin{figure}[t!]
  \centering
  \includegraphics[width=\linewidth]{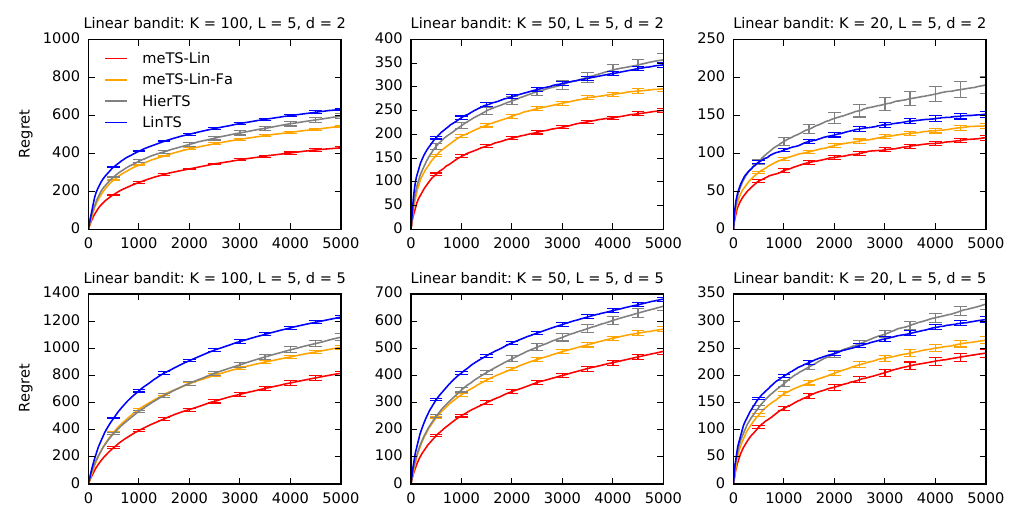}
  \caption{Regret of \alglin on the MovieLens dataset with linear rewards and varying feature dimension $d \in \{2, 5\}$ and number of actions $K\in \{20, 50, 100\}$.} 
  \label{fig:app_movielens_regret_lin}
\end{figure}

\begin{figure}[t!]
  \centering
  \includegraphics[width=\linewidth]{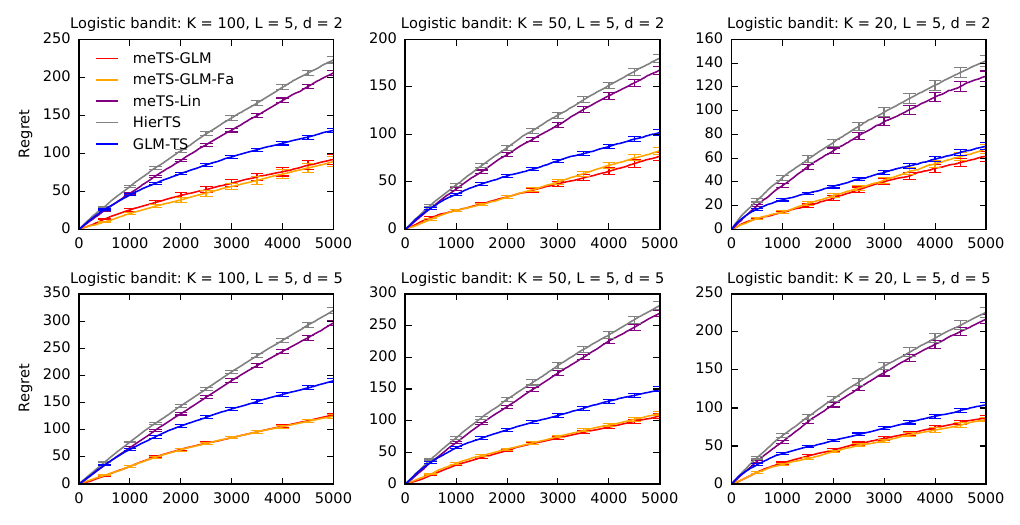}
  \caption{Regret of \algglm on the MovieLens dataset with logistic rewards and varying feature dimension $d \in \{2, 5\}$ and number of actions $K\in \{20, 50, 100\}$.} 
  \label{fig:app_movielens_regret_log}
\end{figure}

We plot the regret of \alg and the baselines up to $n=5000$ rounds in \cref{fig:app_movielens_regret_lin,fig:app_movielens_regret_log}. We observe that \alg outperforms the other baselines. This is despite the fact that we did not fine-tune the mixing weights, which attests to the robustness of our approach to model misspecification. Similarly to the synthetic problems, we observe that the gap in performance between \alg and other baselines is less significant when $K/L$ is small.

\subsection{Robustness to Model Misspecification}\label{app:robustness_model_misspecification}

We conduct additional synthetic experiments where the hyper-parameters do not match the parameters of the bandit environment to assess the robustness of our approach to misspecification. We provide results for this experiment in \cref{fig:misspecified_synthetic_regret}. Here we consider the setting described in \cref{sec:synthetic experiments} except that the true hyper-parameters are misspecified as follows. At each run, we sample uniformly 4 misspecification constants $c_1, c_2, c_3,$ and $c_4$ from $(0, 2)$ and set the hyper-parameters of \alglin as $c_1 \Sigma_{\Psi}$,  $c_2 \mu_{\Psi}$, $c_3 \Sigma_{0, i},$ and $c_4 b_i$ for any $i \in [K]$; where $ \Sigma_{\Psi}$,  $ \mu_{\Psi}$, $ \Sigma_{0, i},$ and $b_i$ for $i \in [K]$ are the true hyper-parameters. Model misspecification is only applied to \alglin and we refer to it as \texttt{meTS-Lin-mis}. We compare it to \alglin and the other baselines, all with the true hyper-parameters. Although the baselines are not misspecified, \texttt{meTS-Lin-mis} still performs better. \texttt{meTS-Lin-mis} also performs similarly to \alglin (with true hyper-parameters).

\begin{figure}
  \centering
  \includegraphics[width=\linewidth]{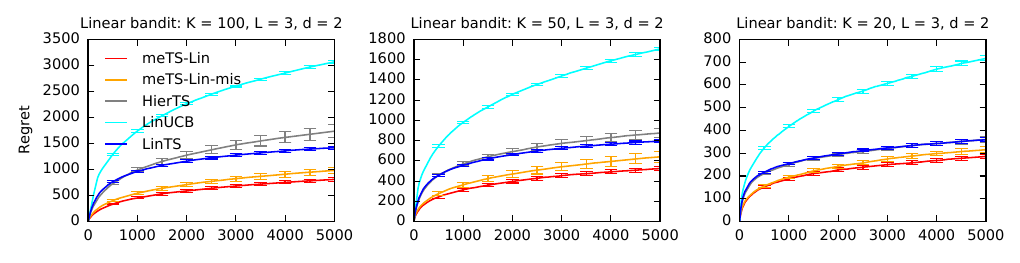}
  \caption{Regret of \emph{misspecified} \alglin on synthetic bandit problems with a varying number of actions $K$. Here, the \emph{misspecified} \alg, \texttt{meTS-Lin-mis}, is compared to baselines with true hyper-parameters.} 
  \label{fig:misspecified_synthetic_regret}
\end{figure}

\subsection{Effect of Action Uncertainty}\label{app:uncertain_action}
As we mentioned in \cref{sec:synthetic experiments}, learning the effect parameters is most beneficial when they are more uncertain than the action parameters. In this section, we support this claim by conducting an experiment where the initial uncertainty of action parameters is greater than the initial uncertainty of the effect parameters.  Precisely, we consider the setting described in \cref{sec:synthetic experiments} except that we set $\Sigma_\Psi=I_{Ld}$ and $\Sigma_{0,i} = 3I_{d}$ for all $i \in [K]$. We report the results in \cref{fig:uncertain_synthetic_regret}. By comparing \cref{fig:uncertain_synthetic_regret} to \cref{fig:synthetic_regret}, we observe that \alglin still outperforms the baselines but the gap in performance shrinks when the action parameters are more uncertain than the effect parameters.

\begin{figure}
  \centering
  \includegraphics[width=\linewidth]{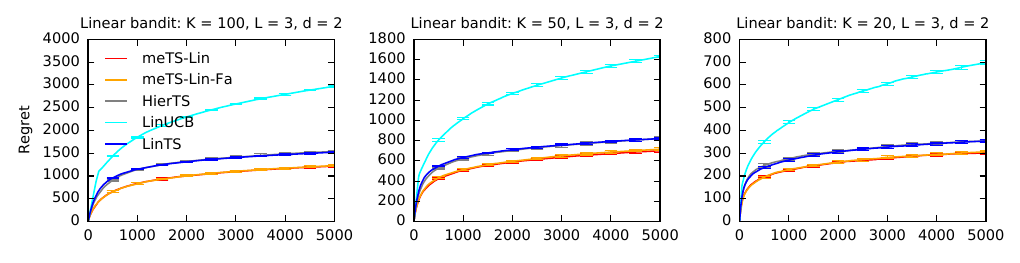}
  \caption{Regret of \alglin on synthetic bandit problems with a varying number of actions $K$, where the action parameters are more uncertain than the effect parameters.} 
  \label{fig:uncertain_synthetic_regret}
\end{figure}

\section{SOCIETAL IMPACT}
The goal of this work is to develop and analyze practical algorithms for contextual bandits with correlated actions. We are not aware of any potential negative societal impacts of our work since we did not propose any new applications of bandit algorithms than existing ones. A typical application of bandit algorithms is recommendation where the preferred items are shown to users. However, by doing this, the recommender system tends to restrain the user to these preferences which may raise concerns regarding the corresponding societal impact.

\end{document}